\documentclass{article}
\PassOptionsToPackage{numbers,compress}{natbib}
\usepackage{iclr2027_conference,times}
\usepackage{amsmath,amssymb,amsthm,bm}
\usepackage{mathtools}
\usepackage{float}
\usepackage{array}
\usepackage{booktabs}
\usepackage{graphicx}
\usepackage{wrapfig}
\usepackage{subcaption}
\usepackage{hyperref}
\usepackage{enumitem}
\usepackage{xcolor}
\usepackage{colortbl}
\definecolor{linkblue}{RGB}{26,77,166}
\definecolor{citegreen}{RGB}{20,110,60}
\definecolor{revblue}{RGB}{0,0,210}
\newif\ifshowrev\showrevfalse
\makeatletter\long\def\rev#1{\ifshowrev{\color{revblue}#1}\else#1\fi}\makeatother
\makeatletter\long\def\old#1{#1}\makeatother % accepted revisions from earlier rounds
\hypersetup{colorlinks=true,linkcolor=linkblue,citecolor=citegreen,urlcolor=linkblue}
\usepackage{pifont}
\newcommand{\cmark}{\textcolor{green!55!black}{\ding{51}}}
\newcommand{\xmark}{\textcolor{red!75!black}{\ding{55}}}

\newcommand{\rih}[1]{\par\smallskip\noindent\textbf{#1}\ }
\newcommand{\SMCAT}{\ensuremath{\mathrm{AMC}}}
\newcommand{\VGAS}{\ensuremath{\text{VGAS}}}
\newcommand{\VGASGR}{\ensuremath{\text{VGAS-GR}}}
\newcommand{\VGASRO}{\ensuremath{\text{VGAS-RO}}}

\newtheorem{proposition}{Proposition}
\newtheorem{lemma}{Lemma}
\newsavebox{\boxA}\newsavebox{\boxB}
\newtheorem{remark}{Remark}

\theoremstyle{definition}

\usepackage[ruled,vlined,linesnumbered]{algorithm2e}

\DeclareMathOperator{\softmax}{softmax}

\DeclareMathOperator{\lse}{logsumexp}
\DeclareMathOperator{\sg}{sg}
\DeclareMathOperator{\std}{std}
\DeclareMathOperator{\mean}{mean}
\DeclareMathOperator*{\argmax}{arg\,max}
\newcommand{\R}{\mathbb{R}}
\newcommand{\E}{\mathbb{E}}

\newcommand{\V}{\mathcal{V}}
\newcommand{\bx}{\bm{x}}

\newcommand{\etab}{\bm{\eta}}

\newcommand{\bu}{\bm{u}}

\title{VGAS: Variance-Reduced Guidance\\ and Adaptive Selection for Training-Free\\ Reward Alignment in Discrete Diffusion}
\author{Kwanyoung Kim\\
Department of AI, GIST\\
\texttt{k0.kim@gist.ac.kr}}
\iclrfinalcopy
\makeatletter\def\lhead#1{}\makeatother
\date{}

\begin{document}
\maketitle

\begin{figure}[h]
\centering
\includegraphics[width=\textwidth]{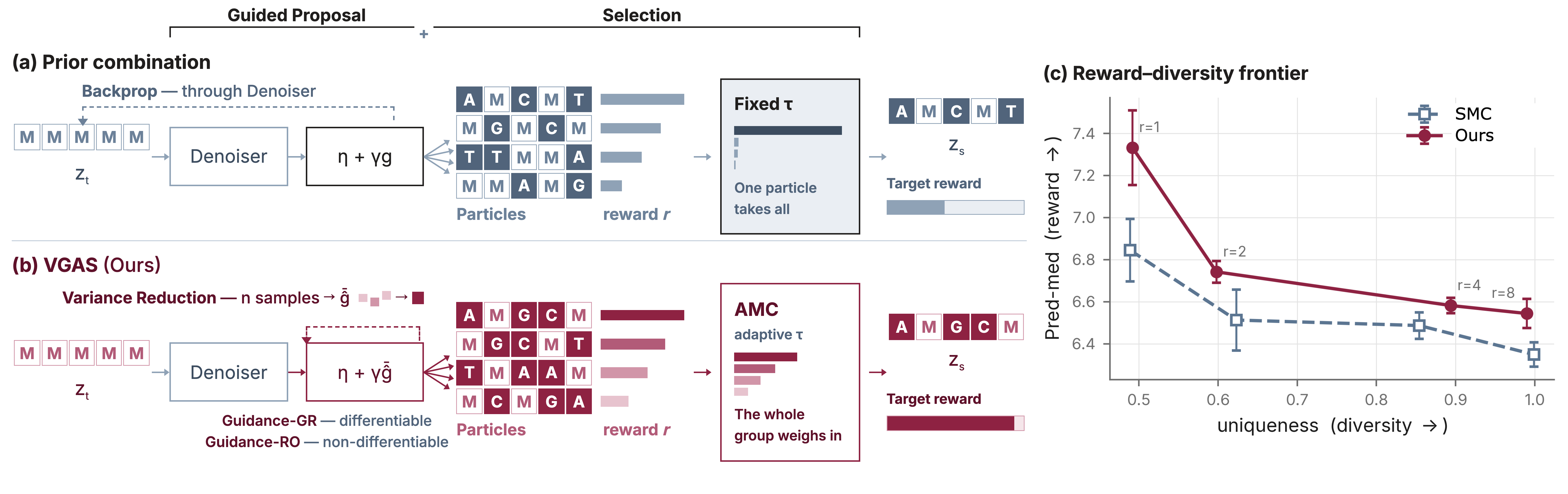}
\caption{\textbf{Overview.} (a) prior combinations pair a high-variance gradient estimate with a fixed
resampling temperature. (b) \VGAS{} reduces that variance and sets the temperature per step. (c) it attains
the higher reward at every resampling frequency.}
\label{fig:teaser}
\end{figure}

\begin{abstract}
Masked discrete diffusion models perform strongly on text, code, and biological sequences, but their
training objective rewards only naturalness, and retraining the generator for every new reward is expensive.
Inference-time steering of a frozen model either guides the sampler by the reward gradient or searches over
several trajectories, and recent samplers combine the two. Such combinations are assembled as pipelines
that leave three choices at their defaults: \old{a guidance estimate resting on one Gumbel draw per
sample}, a reward tilting
placed without reference to the distribution the combination then targets, and a selection temperature held
fixed although the spread of per-step rewards drifts. We identify that distribution and settle the three
choices against it.
We therefore propose \textbf{V}ariance-reduced \textbf{G}uidance and \textbf{A}daptive
\textbf{S}election (\VGAS), a simple yet effective inference-time framework that reduces the variance of
the guidance estimate for both reward types, applies the reward tilting in the clean-token logits, where the
pretrained schedule is preserved, and sets the selection temperature per step. Across regulatory
DNA, protein and small-molecule benchmarks, \old{\VGAS{} attains the best training-free reward and
matches or surpasses a reward-fine-tuned generator.}
\end{abstract}

\section{Introduction}
\label{sec:intro}

Masked discrete diffusion generates a sequence by iteratively unmasking it: the forward process replaces tokens
with a \textsc{mask} symbol and the model learns to reverse it~\citep{sahoo2024mdlm,shi2024md4}. Simple to train and strong on language and code, these models now also address biological and
chemical design problems such as regulatory DNA, proteins, and small
molecules~\citep{wang2025drakes,tfgflow}. Their training objective, however, rewards only naturalness.

In most applications, however, naturalness is not the objective. The goal is samples that also score highly
under a downstream reward, such as an enhancer that drives high expression, a protein that folds stably,
a molecule that meets a target property, or a text completion that a preference model rates well. One
route is post-training: fine-tuning or reinforcement learning of the generator against the
reward~\citep{wang2025drakes,han2026sdpo,rectorbrooks2025ddpp}, but this is expensive and must be
redone for every new reward. The
alternative, now standard for continuous diffusion, is inference-time steering: keep the pretrained model
frozen and bias only the sampling process toward high-reward regions. In this work, we focus on this inference-time,
training-free setting: given a frozen generator $p_\theta$ and a reward $r$, sample sequences that are
simultaneously likely under $p_\theta$ and high under $r$, with no retraining.

Specifically, these steering methods fall into three families (detailed in
\S\ref{sec:background}). \emph{Gradient guidance} biases sampling by the reward
gradient~\citep{chung2023dps,nisonoff2024,tfgflow,entrgi}, and the instantiation we build on is
GILC, which applies the reward tilting to the clean-token logits
directly~\citep{dou2026gilc}. \emph{Search} keeps several trajectories and resamples them by value, as in
SMC~\citep{wu2023smc} and SVDD~\citep{li2024svdd}, and a fast-growing line of Feynman--Kac and sequential
Monte Carlo resampling extends it~\citep{hasan2026dfkc,dang2025pgdlm,yadala2026nestedsmc}; it improves with
more particles but leaves the per-step proposal unchanged. A parallel line steers by gradient-free MCMC
instead~\citep{chu2025sgdd,phunyaphibarn2026csmc}. The third family therefore \emph{combines} them, pairing
a gradient-guided proposal with search, as in the twisted diffusion sampler
TDS~\citep{wu2023smc}, SMC-DDM~\citep{pani2025smcddm} and TreeG's gradient
variant~\citep{guo2025treeg}. Appendix~\ref{app:related} positions
this literature (Table~\ref{tab:positioning}) and anatomizes the closest of these samplers
(Table~\ref{tab:anatomy}).

These combinations, however, are assembled as pipelines, and each half is left at its standard
setting. The guidance gradient
remains a straight-through Monte Carlo estimate whose variance limits how much the guidance helps; the
reward tilting is placed without reference to the distribution the combination then targets; and the
search resamples at a temperature \emph{set in advance}, even though the spread of per-step rewards varies
several-fold along the denoising trajectory, so one temperature is too soft at one end and too sharp at the
other.

As shown in Figure~\ref{fig:teaser}, all three can be settled at no additional denoiser call.
We first establish what the combined sampler targets. For the estimator, we replace the
straight-through estimate with a Rao--Blackwellized one for differentiable rewards and a leave-one-out
baseline for non-differentiable ones. For the placement, we apply the reward tilting in the clean-token logits,
where the undetermined constant in the guidance direction cancels and the pretrained schedule is preserved.
For the temperature, we standardize the per-step values, which reduces to a single adaptive resampling
temperature. Selection also concentrates the
sample set, and \VGAS{} attains the higher reward at every resampling frequency
(Figure~\ref{fig:teaser}c). Our contributions are as follows.
\begin{itemize}[leftmargin=1.4em,itemsep=1pt,topsep=2pt]
\item We propose \VGAS, a training-free sampler that reduces the variance of
the guidance estimate for both differentiable and non-differentiable rewards, applies the reward tilting in
the clean-token logits, and sets the selection temperature per step.
\item We show that a guided proposal paired with selection, run without the proposal correction
\old{and at a fixed selection temperature}, still targets a reward tilt of the pretrained model, and we establish the effective temperature of that
tilt, which is what makes the correction optional.
\item Across regulatory DNA, protein design and small-molecule property targeting, \VGAS{} achieves
state-of-the-art inference-time reward alignment.
\end{itemize}

\section{Background}
\label{sec:background}

\rih{Masked Discrete Diffusion.}
Let $\bx=(x^1,\dots,x^L)$ be a length-$L$ sequence with each token $x^\ell$ in a vocabulary
$\V=\{1,\dots,V\}$, one of which is a special absorbing \textsc{mask} symbol; write
$V_0=V-1$ for the number of ordinary tokens. A masked (absorbing) diffusion model~\citep{sahoo2024mdlm,shi2024md4,wang2025drakes} defines a
forward process that gradually replaces tokens with \textsc{mask}, and learns to reverse
it. Generation therefore proceeds by iterative unmasking: start from the all-\textsc{mask}
sequence and, over $T$ steps, progressively reveal tokens until none are masked.

Concretely, a time variable $t$ runs from $t{=}1$ (all masked) down to $t{=}0$ (fully revealed); write
$\bm z_t$ for the partially masked state and $s=t-\Delta t$ for the next, less-noisy time. A denoiser
network reads $\bm z_t$ and outputs, for each masked position, a categorical distribution over
the clean token,
\begin{equation}
\label{eq:clean}
p_\theta(x_0^\ell = k\mid \bm z_t)=\softmax(\etab^\ell)_k,\qquad \etab^\ell=\etab^\ell(\bm z_t)\in\R^V,
\end{equation}
and these predictions are independent across positions,
$p_\theta(\bx_0\mid \bm z_t)=\prod_\ell p_\theta(x_0^\ell\mid \bm z_t)$. Let $\bar\alpha_t\in[0,1]$ be the
fraction of tokens still present under the schedule ($\bar\alpha_1{=}0,\bar\alpha_0{=}1$); we write it
with a bar to keep it distinct from the selection temperature $\alpha$ of \S\ref{sec:filter}. One reverse step either reveals a masked position or leaves it masked:
\begin{equation}
\label{eq:revkernel}
p_\theta(z_s^\ell\mid \bm z_t)=
\begin{cases}
\delta_{z_s^\ell = z_t^\ell}, & z_t^\ell\neq\textsc{mask}\ \ \text{(already revealed; frozen)},\\[2pt]
(1-a_t)\,\delta_{\textsc{mask}} + a_t\, p_\theta(x_0^\ell\mid \bm z_t), & z_t^\ell=\textsc{mask}.
\end{cases}
\end{equation}
Here $a_t=(\bar\alpha_s-\bar\alpha_t)/(1-\bar\alpha_t)$$\in[0,1]$ is the per-step reveal probability set by the
schedule.

\rih{Reward-Tilted Sampling and the Soft Value Function.}
Given a reward $r:\V^L\to\R$, reward alignment maximizes the expected reward while staying close to
the generator. For an inverse temperature $\beta^{-1}$ (larger $1/\beta$ = stronger guidance), the maximizer
of $\E_{p}[r]-\beta\,\mathrm{KL}(p\,\|\,p_\theta)$ is the reward-tilted
distribution~\citep{wang2025drakes,dou2026gilc}
\begin{equation}
\label{eq:target}
p^\star(\bx_0)\;\propto\; p_\theta(\bx_0)\,\exp\!\big(r(\bx_0)/\beta\big).
\end{equation}
Post-training methods approximate $p^\star$ by fitting a new generator; the inference-time methods
studied here leave $p_\theta$ frozen and sample from $p^\star$ during the reverse process instead. The Doob
$h$-transform of Eq.~\eqref{eq:revkernel} samples Eq.~\eqref{eq:target} exactly, reweighting each reverse
step by the soft value $v_s$~\citep{wu2023smc,dou2026gilc}:
\begin{equation}
\label{eq:doob}
p^\star(\bm z_s\mid\bm z_t)\;\propto\; p_\theta(\bm z_s\mid\bm z_t)\,\exp\!\big(v_s(\bm z_s)/\beta\big),
\qquad
v_s(\bm z_s)=\beta\log\E_{p_\theta(\bx_0\mid\bm z_s)}\!\big[e^{r(\bx_0)/\beta}\big],
\end{equation}
where $\bx_0$ denotes a fully revealed sequence, $p_\theta(\bx_0\mid\bm z_s)$ the clean posterior of
Eq.~\eqref{eq:clean}, and $v_s$ the soft value function: the (log-sum-exp) expected future reward if
denoising continues from $\bm z_s$~\citep{dou2026gilc}. In SMC terms, $v_s$ is the optimal twist, and
every method below approximates it. The difficulty is that $v_s$ depends on the reward
of clean sequences yet has to be evaluated at noisy states. Three families of inference-time methods
approximate it, and the first two can be composed.
\begin{itemize}[leftmargin=1.4em,itemsep=1pt,topsep=2pt]
\item \textbf{Gradient guidance} edits the reverse kernel itself by the reward gradient.
DG~\citep{nisonoff2024} and TFG-Flow~\citep{tfgflow} tilt the transition rates of the discrete
process; for masked diffusion the tilt can instead be applied to the clean-token
logits~\citep{entrgi,dou2026gilc}, and we build on one such logit-space instantiation, GILC.
\item \textbf{Search-based selection} keeps the base reverse kernel and samples several trajectories,
reweighting or selecting by an inexpensive value estimate $\hat V(\bm z)$: SMC resamples a particle population
with weights \old{$w\propto\exp(\Delta\hat V/\alpha)$ set by the per-step change in the estimate}, while
SVDD keeps the best candidate. Both improve with more
particles but leave the per-step proposal unchanged.
\item \textbf{Combining the two} acts on different parts of Eq.~\eqref{eq:doob}: in SMC terms, a
twisted proposal paired with a selection step. This is the regime of TDS, SMC-DDM and TreeG; each half,
however, keeps its default form, a high-variance estimator and a fixed temperature.
\end{itemize}

\rih{Revisiting GILC.}
\label{sub:gs}
Pushing the reveal toward higher reward requires $\nabla_{\etab} r$, but $r$ acts on a discrete sequence and is
piecewise constant in $\etab$. The standard fix is the Gumbel--Softmax straight-through (ST-GS)
estimator~\citep{jang2017gumbel,bengio2013ste}: draw
$\hat\bx_{\mathrm{soft}}=\softmax((\etab+\bm\zeta)/\tau)$ at a relaxation temperature $\tau>0$, feed the hard one-hot to the reward, and
backpropagate through the soft relaxation,
\begin{equation}
\label{eq:st}
\hat\bx=\mathrm{onehot}\big(\argmax_k \hat\bx_{\mathrm{soft},k}\big)-\sg(\hat\bx_{\mathrm{soft}})+\hat\bx_{\mathrm{soft}},
\qquad
\partial\hat\bx/\partial\etab=\partial\hat\bx_{\mathrm{soft}}/\partial\etab .
\end{equation}
GILC~\citep{dou2026gilc} instantiates gradient guidance through this path: it draws the revealed token from a corrected clean
prediction, adding $\bm\delta^\ell=\gamma\bm g^\ell$ to each masked position's logits,
\begin{equation}
\label{eq:direction}
\hat p^{\ell}=\softmax\!\big(\etab^\ell+\bm\delta^\ell\big),
\qquad
p^\star_{\text{guided}}(z_s^\ell\mid \bm z_t)=(1-a_t)\,\delta_{\textsc{mask}}+a_t\,\hat p^{\ell}.
\end{equation}
The logit correction $\bm g^\ell=\big(\partial r(\hat\bx_0)/\partial\hat\bx_0\big)\big(\partial\hat\bx_0/\partial\etab^\ell\big)$
estimates $\nabla_{\etab^\ell}\E[r]$, obtained by approximating the expensive model Jacobian by the
identity, $\partial\etab/\partial\bm z_t\approx\bm I$ (hence ``Jacobian-free''). This form applies when $r$ is
differentiable; when it is not, the same $\bm g^\ell$ is formed by a score-function (policy-gradient)
estimator instead, and \S\ref{sec:proposal} treats the two cases symmetrically. \old{The identity approximation
is inherited from GILC, which states it as bypassing the Jacobian, and it is what removes the denoiser from
the backward pass (Appendix~\ref{app:guidance-math}).} The guidance scale $\gamma>0$
plays the role of the inverse temperature $1/\beta$ in Eq.~\eqref{eq:target}. \old{Both estimators
evaluate their Jacobian at one Gumbel draw per sample}, so $\bm g^\ell$ carries the variance of that draw, and that variance is what caps
the gain from guidance.

\section{Main Contribution: \VGAS}
\label{sec:method}
\old{In this section, we present \VGAS, which acts on the three choices such combinations leave at their
defaults.} \S\ref{sec:proposal} reduces the variance of the guided proposal's logit correction,
\old{\S\ref{sec:filter} establishes what selection over that proposal targets and why the correction
belongs in the clean-token logits}, and \S\ref{sec:search}
makes the resampling temperature adaptive, \old{none of them} at an extra denoiser call.

\subsection{Variance-Reduced Guided Proposals}
\label{sec:proposal}

We build on GILC, whose logit correction $\bm g$ of Eq.~\eqref{eq:direction} is a Monte Carlo
estimate of $\nabla_{\etab}\E[r]$ \old{formed from a single draw. We therefore introduce} one variance
reduction per reward type, each reusing the samples the correction already draws (base estimators in
Appendix~\ref{app:pg}).

\begin{figure}[t]
\begin{minipage}[t]{0.485\textwidth}
\begin{algorithm}[H]
\small
\caption{\textsc{Guidance-GR}}
\label{alg:corr-rao}
\KwIn{logits $\etab$; \old{state $\bm z_t$}; reward $r$; size $n$; Rao $M$}
\For{$i=1,\dots,n$}{
  $\bm\zeta\!\sim\!\mathrm{Gumbel}$,\ $i^\star\!=\!\argmax(\etab{+}\bm\zeta)$\;
  $\bar\bx\!=\!\tfrac1M\sum_{m\le M}\softmax_\tau(\etab{+}\bm\zeta^{(m)}\!\mid\!i^\star)$\;
  $\hat\bx\!\gets\!\mathrm{ST}(i^\star,\bar\bx)$ (Eq.~\ref{eq:st}),\ $R_i\!=\!r(\hat\bx)$\;
}
$\bm g_{\mathrm{GR}} = \tfrac1n\sum_i \partial R_i/\partial\etab$\ \ (Eq.~\ref{eq:gr})\;
\KwOut{correction $\bm g_{\mathrm{GR}}$}
\end{algorithm}
\end{minipage}\hfill
\begin{minipage}[t]{0.485\textwidth}
\begin{algorithm}[H]
\small
\caption{\textsc{Guidance-RO}}
\label{alg:corr-pgro}
\KwIn{logits $\etab$; \old{state $\bm z_t$}; reward $r$; size $n$}
$\bm p = \softmax(\etab)$\;
\For{$i=1,\dots,n$}{
  $\bx\!\sim\!\bm p$,\ $\hat\bx=\mathrm{onehot}(\bx)$,\ $R_i=r(\hat\bx)$\;
}
$A_i = R_i-\tfrac{1}{n-1}\sum_{j\neq i}R_j$\ \ (Eq.~\ref{eq:rloo})\;
$\bm g_{\mathrm{RO}} = \tfrac1n\sum_i A_i(\hat\bx_i-\bm p)$\;
\KwOut{correction $\bm g_{\mathrm{RO}}$}
\end{algorithm}
\end{minipage}
\end{figure}

\noindent\textbf{Differentiable reward: the Gumbel--Rao proposal.}
The pathwise base estimator \textsc{Guidance-DB} (Appendix~\ref{app:pg}) averages straight-through
gradients Eq.~\eqref{eq:st} over $n$ Gumbel--Softmax samples; \old{it is a biased relaxation, evaluated at
the single Gumbel draw that produced each sample}. The reveal already computes the hard sample $i^\star=\argmax_k(\eta_k+\zeta_k)$, a sufficient statistic for the
discrete outcome, so \old{conditioning on it} preserves the mean and cannot increase the
variance~\citep{rao1945,blackwell1947}. We therefore replace the straight-through Jacobian by its conditional
expectation given $i^\star$, the Gumbel--Rao construction of~\citet{paulus2020raoblackwell}
(\textsc{Guidance-GR}, Algorithm~\ref{alg:corr-rao}). That conditional has a closed-form
truncated-Gumbel reparameterization~\citep{maddison2014astar,kool2019stochastic} that reuses $\etab$ at no
forward pass. Writing $v_k=\eta_k+\zeta_k$ for the perturbed logits and $T=v_{i^\star}$
for their maximum, and drawing $U_0,U_1,\dots$ independently from $\mathrm{U}(0,1)$,
\begin{align}
\label{eq:condgumbel}
T &= \log\textstyle\sum_j e^{\eta_j}-\log(-\log U_0),\notag\\
v_k &= -\log\!\big(e^{-T}-e^{-\eta_k}\log U_k\big)\ \ (k\neq i^\star),
\qquad \zeta_k=v_k-\eta_k,
\end{align}
where $-\log U_k\sim\mathrm{Exp}(1)>0$ keeps $v_k\le T$, so the induced argmax is exactly $i^\star$.
Repeating this $M$ times gives $\bm\zeta^{(1)},\dots,\bm\zeta^{(M)}$, and averaging over them gives the
correction we use,
\begin{equation}
\label{eq:gr}
\bm g_{\mathrm{GR}}=\frac{\partial r(\hat\bx_0)}{\partial\hat\bx_0}\,
\E_{\bm\zeta\mid\,\argmax(\etab+\bm\zeta)=i^\star}\!\Big[\tfrac{\partial \softmax_\tau(\etab+\bm\zeta)}{\partial\etab}\Big]
\;\approx\; \frac{\partial r(\hat\bx_0)}{\partial\hat\bx_0}\,\frac1M\sum_{m=1}^M
\frac{\partial\softmax_\tau(\etab+\bm\zeta^{(m)})}{\partial\etab},
\end{equation}
\old{which removes the variance the single Gumbel draw contributes. \textsc{Guidance-DB} is the ``base''
row of Table~\ref{tab:grm}, where the sweep over $M$ is reported.}

\smallskip\noindent\textbf{Non-differentiable reward: the policy-gradient proposal.}
A non-differentiable reward supplies no gradient to relax, so $\bm g$ is obtained from the
score-function identity instead, which needs only reward values. Draw $n$ hard samples
$\bx^{(i)}\sim\bm p=\softmax(\etab)$, write $\hat\bx^{(i)}=\mathrm{onehot}(\bx^{(i)})$ and
$R_i=r(\hat\bx^{(i)})$, and weight the score
$\nabla_{\etab}\log p(\bx^{(i)})=\hat\bx^{(i)}-\bm p$ by the reward. \old{GILC forms that weight as a
group-relative advantage, standardizing $R_i$ by the mean and the standard deviation of the group
(\textsc{Guidance-PG}, Appendix~\ref{app:pg}).} The
price is a variance higher than on the differentiable path. We propose to use a Reinforce leave-one-out
(RLOO) baseline~\citep{williams1992,kool2019rloo,ahmadian2024rloo} \old{in place of that standardization},
which \old{weights by} the leave-one-out advantage $A_i$, \old{the deviation of $R_i$} from the mean of the
other $n-1$ rewards:
\begin{equation}
\label{eq:rloo}
A_i = R_i-\tfrac{1}{n-1}\textstyle\sum_{j\neq i} R_j,
\qquad
\bm g_{\mathrm{RO}} = \tfrac1n\sum_{i=1}^n A_i\,(\hat\bx^{(i)}-\bm p),
\end{equation}
which reuses the same $n$ reward evaluations (\textsc{Guidance-RO}, Algorithm~\ref{alg:corr-pgro}).
The two reductions play the same role, one per estimator, \old{through different arguments, and the
asymmetry is structural: the reveal already computes the sufficient statistic $i^\star$, so conditioning on
it is free on the pathwise path, while the score-function estimator admits no such statistic and is reduced
by a baseline instead}. Gumbel--Rao \old{therefore} carries the variance inequality above\old{; the baseline
is unbiased for every constant and, for a single sample, reduces variance over an interval that
Lemma~\ref{lem:rloo} identifies under a sign condition, for which the leave-one-out mean is the natural
plug-in} (Appendix~\ref{app:proofs}). \old{Table~\ref{tab:rloo} measures its effect on the proposal.}

\noindent\textbf{Preserving the unmasking schedule.} \old{Placing $\bm g$ inside the softmax, rather than
on the kernel after the reveal probability is fixed, leaves the reveal probability at $a_t$, whereas the
post-hoc tilt rescales it to $a_tZ^\ell/(1-a_t+a_tZ^\ell)$ with
$Z^\ell=\E_{p_\theta^\ell}[e^{\gamma g}]$.} The
additive constant in $\bm g$ is not fixed by its definition: the softmax is unchanged by
$\etab\mapsto\etab+c\bm1$, so a correction expressed in logit coordinates is determined only up to such a
constant, and different derivations fix it differently. The logit-space reveal is invariant to it,
whereas \old{the post-hoc one} tends to $1$ or to $0$ as the constant grows, so an undetermined
choice can drive the post-hoc schedule to a full reveal or to none (Proposition~\ref{prop:leak},
proved in Appendix~\ref{app:proofs}). \S\ref{sec:filter} shows that this invariance acts on the target itself
and not only on the quality of the proposal.

\subsection{Selection over a Guided Proposal}
\label{sec:filter}

\noindent\textbf{Selector.} Among SMC, SVDD and Best-of-$N$ we use SMC, the only one that acts throughout
the trajectory without extra denoiser calls: SVDD evaluates $M$ candidates at every step and Best-of-$N$
defers all selection to the end. A selector scores a partial state by the one-step value
$V(\bm z)=r\big(\argmax_{\bx_0} p_\theta(\bx_0\mid\bm z)\big)$, and the previous step's value is cached, so
each step costs one denoiser call.
\textsc{SMC}~\citep{delmoral2004feynman} carries $N$ particles, scores particle $k$ by the increment
$\Delta^k=V(\bm z_s^k)-V(\bm z_t^k)$, and resamples multinomially with $w^k\propto\exp(\Delta^k/\alpha)$,
where $\alpha$ is the selection temperature, \old{distinct from} the KL weight $\beta$ of
Eq.~\eqref{eq:target}.

\noindent\textbf{Target of the combination.} \old{The reward enters through the proposal of
\S\ref{sec:proposal} and again through the weights, so neither fixes the target alone. Three measures are
in play: the base model $p_\theta$,
the law of the surviving particles \old{that Lemma~\ref{lem:target} identifies}, and the twisted proposal $q_\gamma$, the path measure of the guided
reveal of \S\ref{sec:proposal} run to the end, with terminal marginal $q_\gamma(\bx_0)$. The log-ratio of
that marginal to the base model,}
\begin{equation}
\label{eq:phi}
\Phi_\gamma(\bx_0)\;=\;\log\frac{q_\gamma(\bx_0)}{p_\theta(\bx_0)}
\end{equation}
\old{is} the accumulated log-twist, \old{whose trajectory counterpart is available in closed form}
(Appendix~\ref{app:correction}). The combination then
targets
\begin{equation}
\label{eq:target-gamma}
\pi_\gamma(\bx_0)\;\propto\;q_\gamma(\bx_0)\,e^{r(\bx_0)/\alpha}
\;=\;p_\theta(\bx_0)\,e^{\Phi_\gamma(\bx_0)+r(\bx_0)/\alpha},
\end{equation}
\old{a reward tilt of the twisted proposal at every $\gamma$}. Equivalently
$\pi_\gamma=\argmax_{p}\,\E_{p}[r]-\alpha\,\mathrm{KL}(p\,\|\,q_\gamma)$, which is Eq.~\eqref{eq:target} with the twisted base measure. \old{Eq.~\eqref{eq:target-gamma} is exact because successive
increments cancel along a trajectory, leaving only the terminal reward in the weight.}

\noindent \old{The alternative is to keep the proposal correction.} Adding
$\log(p_\theta/q_\gamma)$ to the log-weights returns the target to $p^\star$ for every $\gamma$, and the
established particle methods for reward alignment retain
it~\citep{wu2023smc,pani2025smcddm,yadala2026nestedsmc}. \old{We omit it instead, and doing so does not make
the sampler approximate: both weightings are Feynman--Kac changes of
measure~\citep{delmoral2004feynman,chopin2020introduction}, stated in general form by \citet{wu2023smc}, and
Lemma~\ref{lem:target} identifies the distribution each one targets exactly at finite $\gamma$, ours
$q_\gamma$ tilted by the reward and theirs $p^\star$; Proposition~\ref{prop:efftemp} then returns ours to
the reward-tilted family of Eq.~\eqref{eq:target}.}

\begin{lemma}[Targets with and without the proposal correction]
\label{lem:target}
\old{Under boundedness of $r$ and $\gamma\bm g$, unbiased multinomial resampling, the deterministic
all-\textsc{mask} start and $V(\bm z_0)=r(\bx_0)$, running} the selector over $q_\gamma$ at fixed $\alpha$, (i) \old{$\tfrac1N\sum_{k}\varphi(\bx_0^k)\to\E_{\pi_\gamma}[\varphi]$
almost surely as $N\to\infty$ for every bounded measurable $\varphi$};
and (ii) adding $\log(p_\theta/q_\gamma)$ to the log-weights replaces $\pi_\gamma$ by $p^\star$ of
Eq.~\eqref{eq:target} with $\beta=\alpha$, for every $\gamma$.
\end{lemma}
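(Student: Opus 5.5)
The plan is to write the selector as a standard Feynman--Kac particle system and read the target off the telescoping of the potentials. Let the time grid be $t_T=1>t_{T-1}>\dots>t_0=0$. The Markov kernels are the guided reveal kernels $q_\gamma(\bm z_{s}\mid\bm z_t)$ of Eq.~\eqref{eq:direction}, and the initial law is the point mass on the all-\textsc{mask} state. The potentials are $G_t(\bm z_t,\bm z_s)=\exp(\Delta/\alpha)$ with $\Delta=V(\bm z_s)-V(\bm z_t)$. Boundedness of $r$ makes $V$ bounded, since $V$ is $r$ evaluated at an argmax sequence, so every $G_t$ is bounded above and below by positive constants. Boundedness of $\gamma\bm g$ keeps every guided logit finite, so $q_\gamma$ charges the same sequences as $p_\theta$ and $\log(p_\theta/q_\gamma)$ is finite. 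These are exactly the hypotheses of the classical SMC law of large numbers~\citep{delmoral2004feynman,chopin2020introduction}. Under unbiased multinomial resampling, that law gives almost-sure convergence of particle averages of bounded $\varphi$ to the normalized Feynman--Kac terminal measure
\begin{equation*}
\pi(\bx_0)\propto \E_{q_\gamma}\Big[\textstyle\prod_t G_t\;\Big|\;\bx_0\Big]\,q_\gamma(\bx_0).
\end{equation*}
The state space is finite, so measurability of $\varphi$ is automatic.

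The key step is the telescoping. Along any trajectory, $\sum_t\Delta_t=V(\bm z_0)-V(\bm z_1)$. The start is deterministic, so $V(\bm z_1)$ is a constant, and by assumption $V(\bm z_0)=r(\bx_0)$. The product of potentials therefore equals $e^{r(\bx_0)/\alpha}$ times a constant and depends on the path only through its endpoint. Marginalizing the path measure of $q_\gamma$ onto $\bx_0$ gives $\pi\propto q_\gamma(\bx_0)e^{r(\bx_0)/\alpha}=\pi_\gamma$. Using $\Phi_\gamma$ of Eq.~\eqref{eq:phi}, this is the right-hand side of Eq.~\eqref{eq:target-gamma}, which proves (i).

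For (ii), I add the per-step log-ratio $\log p_\theta(\bm z_s\mid\bm z_t)-\log q_\gamma(\bm z_s\mid\bm z_t)$ to each log-weight. Summed over steps, this is the path Radon--Nikodym derivative $\log(P_\theta/Q_\gamma)$ of the base path measure against the twisted one; Appendix~\ref{app:correction} gives its closed form. The Feynman--Kac measure becomes $Q_\gamma\cdot(P_\theta/Q_\gamma)\cdot e^{r/\alpha}=P_\theta e^{r/\alpha}$ on paths. Its terminal marginal is $p_\theta(\bx_0)e^{r(\bx_0)/\alpha}$, which is $p^\star$ of Eq.~\eqref{eq:target} with $\beta=\alpha$. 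The $\gamma$ cancels, so the conclusion holds for every $\gamma$. The same LLN applies, because the extra factor is bounded above and below by the two support conditions above.

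The main obstacle is checking that the "$\log(p_\theta/q_\gamma)$" in the statement means the stepwise, path-level ratio rather than the terminal marginal ratio $e^{-\Phi_\gamma}$. The terminal marginal is intractable, because many reveal orders lead to the same $\bx_0$. The path version is what makes the weights computable and what makes the product telescope to a path density. With the terminal version the argument would still go through formally, since it also yields $p_\theta e^{r/\alpha}$, but I would state the path version and note that it implies the marginal one. A secondary check is that masked-to-\textsc{mask} transitions have equal probability $1-a_t$ under both kernels. This holds because the correction lives inside the softmax (the schedule preservation of Proposition~\ref{prop:leak}), so those factors cancel in the ratio and remain bounded.
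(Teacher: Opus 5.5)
Your proposal is correct and follows the paper's proof essentially step for step. You telescope the increments to a shared constant times $e^{r(\bx_0)/\alpha}$ using the deterministic all-\textsc{mask} start and $V(\bm z_0)=r(\bx_0)$, bound the potentials through $\|r\|_\infty$, and invoke the Feynman--Kac particle law of large numbers; for (ii) you integrate the path-level ratio $\mathrm d p_\theta/\mathrm d q_\gamma$ over the intermediate states before marginalizing, and your remark that the correction is a path quantity rather than the terminal ratio $e^{-\Phi_\gamma}$ is exactly the point the paper makes in its Part~(ii).
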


\begin{proposition}[\old{Effective temperature of the guided proposal}]
\label{prop:efftemp}
\old{$\Phi_\gamma(\bx_0)=\kappa\gamma\,r(\bx_0)+c+\varepsilon(\bx_0)$, where $c$ is independent of
$\bx_0$, $\kappa>0$ is a tangential scale for the estimator that carries the reward gradient into the
clean-token logits, and
$|\varepsilon|\le E(\gamma)=O(\gamma)$. Omitting the correction therefore leaves the target in the
reward-tilted family of Eq.~\eqref{eq:target} at inverse temperature $\kappa\gamma+1/\alpha$, up to the
factor $e^{\varepsilon}$; at $\gamma=0$ the bound vanishes and $\beta=\alpha$ exactly.}
\end{proposition}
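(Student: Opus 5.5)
The approach is to write $\Phi_\gamma$ as an accumulated sum of per-reveal log-ratios and linearize each one in $\gamma$. Fix a terminal $\bx_0$ and a reveal order along a trajectory of the guided kernel of Eq.~\eqref{eq:direction}. Because the logit-space tilt leaves the reveal probability $a_t$ unchanged (the invariance in \S\ref{sec:proposal}, Proposition~\ref{prop:leak}), the masking part of the path measure is the same under $q_\gamma$ and $p_\theta$. The ratio of path densities therefore reduces to a product over revealed positions of $\hat p^\ell(x_0^\ell)/p_\theta(x_0^\ell\mid\bm z_t)$. Each factor has log equal to
\[
\gamma g^\ell_{x_0^\ell}-\log\E_{p_\theta^\ell}\big[e^{\gamma g^\ell}\big].
\]
The first step is to record this closed form, which is the trajectory counterpart referred to after Eq.~\eqref{eq:phi}. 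I would then pass to the terminal marginal. Since $\Phi_\gamma(\bx_0)$ is the log of an average of these path ratios over trajectories ending at $\bx_0$, I would bound it between the extreme trajectory values. This gives $\Phi_\gamma=\Psi_\gamma(\bx_0)+O(\gamma)$-controlled error, where $\Psi_\gamma$ is the path-averaged log-twist.

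The second step is to expand. Using $|\gamma g|\le B$ (boundedness of $\gamma\bm g$ from Lemma~\ref{lem:target}), the log-normalizer satisfies
\[
\log\E_{p_\theta^\ell}\big[e^{\gamma g^\ell}\big]=\gamma\,\E_{p_\theta^\ell}[g^\ell]+O(\gamma^2\|g\|^2).
\]
The expansion therefore leaves $\gamma\sum_\ell\big(g^\ell_{x_0^\ell}-\E_{p_\theta^\ell} g^\ell\big)$ plus a remainder. The centered term is exactly the component of $\bm g$ not removed by the additive-constant ambiguity, so the $c\bm 1$ freedom drops out as required.

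The third step is to identify this sum with $\kappa r(\bx_0)$. Here $\bm g^\ell$ estimates $\nabla_{\etab^\ell}\E[r]$ under the identity Jacobian approximation, and I would take conditional expectations over the estimator noise (GR or RO). For the score-function path, $\E[g^\ell_k]$ is $p_k$ times the centered conditional reward, $\E[r\mid x^\ell=k]-\E r$. Summing the centered coordinates at $x_0^\ell$ over the reveal order then telescopes, like a Doob martingale, into $r(\bx_0)-\E_{p_\theta}[r]$. The factor $p_k$, and for the pathwise estimator the $1/\tau$ softmax Jacobian scale, give the tangential scale $\kappa$. I would define $\kappa$ as the projection coefficient of the accumulated correction onto $r$ and put the orthogonal part into $\varepsilon$. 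The constant $-\kappa\gamma\E r$ and the trajectory-independent normalizers go into $c$.

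The main obstacle is this third step. The identification with $r$ is only tangential, and the non-constant $p_k$ weighting and the estimator bias are not exactly proportional to $r$. The honest route is not to claim they are, but to bound the residual by $\gamma$ times $\sup$ of the deviation, using boundedness of $r$ and $\gamma\bm g$. This gives $|\varepsilon|\le E(\gamma)=O(\gamma)$, with $E(0)=0$ since every term carries a factor $\gamma$.

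Finally, I would substitute into Eq.~\eqref{eq:target-gamma}. The target becomes
\[
\pi_\gamma\propto p_\theta\, e^{(\kappa\gamma+1/\alpha)r+\varepsilon},
\]
where $c$ is absorbed by normalization. This is the family of Eq.~\eqref{eq:target} with $1/\beta=\kappa\gamma+1/\alpha$, up to the factor $e^{\varepsilon}$. At $\gamma=0$ we have $q_0=p_\theta$, so $\Phi_0\equiv0$ and $\beta=\alpha$ exactly, which is consistent with Lemma~\ref{lem:target}(ii).
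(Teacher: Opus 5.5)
Your first two steps match the paper's, as do the closing substitution and the $\gamma=0$ case. The logit-space tilt leaves the \textsc{mask} branch at $1-a_t$, so the path log-ratio is $\sum_\ell\big[\gamma\tilde g^\ell_{k_\ell}-\log\E_{\bm p^\ell}e^{\gamma\tilde g^\ell}\big]$, and its second-order expansion is gauge invariant. The gap is in the identification step, which carries all of the proposition's content. First, the telescoping ``like a Doob martingale'' does not hold. Your centered means $\E_{p(\cdot\mid\bm z_t)}[r\mid x^\ell=k]-\E_{p(\cdot\mid\bm z_t)}[r]$ are taken under the factorized posterior at step $t$. After each reveal the denoiser is re-run, so the rows of the still-masked positions move. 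Several positions can also be revealed in one step, which adds interaction terms. The factorized posteriors at successive steps are therefore not conditionals of one measure, and the increments do not chain to $r(\bx_0)-\E[r]$ without explicit remainders. Second, you cannot take conditional expectations over the estimator noise inside $\bar\Phi_\gamma$: the twist is evaluated at the realized $\tilde{\bm g}$, so a deviation term is owed. In the paper this is $2\gamma L\Delta_g$, which is exactly the quantity the variance reductions act on. Third, for the pathwise estimator of \VGASGR{}, the identity $\E[g^\ell_k]=p_k(\E[r\mid x^\ell=k]-\E r)$ fails outright. The straight-through relaxation is biased, conditioning on $i^\star$ preserves that bias, and your argument has no mechanism for that path.

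Your fallback does not rescue this. You define $\kappa$ as a projection coefficient onto $r$ and absorb the rest into $\varepsilon$ by a sup bound. But already $|\bar\Phi_\gamma|\le 2\gamma L\|\tilde{\bm g}\|_\infty$, so every choice of $\kappa$ and $c$ gives $|\varepsilon|=O(\gamma)$. The bound therefore does not single out $\kappa$, does not show $\kappa>0$, and does not make $\kappa$ a property of the estimator, as the statement requires.

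The missing idea is to measure everything against a smooth relaxation. Let $\tilde r$ extend $r$ to the product simplex. Let $S(\bm z)$ have one-hot rows at revealed positions and $\bm p^\ell$ at masked ones, set $\tilde V=\tilde r\circ S$ (so $\tilde V(\bm z_0)=r(\bx_0)$), and let $\bm h^\ell$ be the simplex gradient. Posit that the estimator's mean is $\bm P^\ell\bm h^\ell$, and take $\kappa$ to minimize $\sup_\ell\|\bm P^\ell-\kappa\Pi^\ell\|_{\mathrm{op}}$, where $\Pi^\ell$ is the tangent projector. Since $\bm e_k-\bm p^\ell$ is tangent, $\langle\kappa\Pi^\ell\bm h^\ell,\bm e_k-\bm p^\ell\rangle=\kappa\langle\bm h^\ell,\bm e_k-\bm p^\ell\rangle$, at a cost of $\sqrt2\,\delta_P\|\bm h^\ell\|$. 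The summed linear forms at a step then match $\tilde V(\bm z_s)-\tilde V(\bm z_t)$ up to two remainders: a curvature term from the Lipschitz gradient, and a drift term $\|\bm h\|\,\|\bm D_t-\bm d_t\|$ that accounts for the re-run denoiser. The $\tilde V$ increments then telescope exactly to $r(\bx_0)-\tilde V(\bm z_T)$, uniformly over trajectories.

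On the score-function path, with $\tilde r$ the multilinear extension, $h^\ell_k=\E[r\mid x^\ell=k]$ and $\bm P^\ell=\mathrm{diag}(\bm p^\ell)-\bm p^\ell\bm p^{\ell\top}$. Your centered means are then exactly these linear forms. What your argument lacks there is the curvature and drift remainders, plus the anisotropy term that absorbs your non-constant $p_k$ weighting.
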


\noindent \old{Appendix~\ref{app:proofs} gives the remaining definitions, including $\kappa$ and
$E(\gamma)$, the remaining hypotheses of both results, and their proofs. \old{There $E(\gamma)$ is controlled by four quantities, one of
which is the estimator deviation that the reductions of \S\ref{sec:proposal} act on, and it grows with the
sequence length.}}

\noindent \old{That the effective temperature differs from the nominal guidance scale is known for
classifier-free guidance in the continuous setting~\citep{bradley2025cfgpc}, and guidance is known to
reshape the sampled distribution of class-conditional masked diffusion in ways the nominal scale does not
predict~\citep{he2026exactly}; Proposition~\ref{prop:efftemp} is the counterpart for a reward-gradient
tilt.}

\noindent \old{To reach the inverse temperature $\kappa\gamma+1/\alpha$ through the weights alone,
$\alpha$ must be smaller}, which concentrates them on a few particles, whereas twisting the proposal
relocates particles into the reward-relevant region first (\S\ref{sec:analysis}). \old{The omission also
fixes where the tilt is applied: with the correction retained, $q_\gamma$ is only a proposal and the weights
absorb its distortions; without it, $q_\gamma$ enters the target, so the undetermined constant in $\bm g$
reaches $\pi_\gamma$ through the reveal schedule (Proposition~\ref{prop:leak}(c)), which logit-space
guidance does not disturb.}

\subsection{Adaptive-Temperature Selection}
\label{sec:search}
\old{The last of the three defaults is the selection temperature, which} in \textsc{SMC}, TDS and SMC-DDM
alike \old{is} prescribed before sampling rather than measured from the particles. \old{Recorded while
those samplers run, the per-step spread $\sigma_t$ of the increments $\{\Delta_t^k\}$ falls by $3.7\times$
to $27.7\times$ along denoising on the three benchmarks, and with it the temperature matched to it
(Figure~\ref{fig:mechanism}a), so a temperature prescribed once exceeds the matched value over most of the
trajectory} \old{(Appendix~\ref{app:adaptive-smc} separates this from classical adaptive tempering)}.
\begin{wrapfigure}{r}{0.37\textwidth}
\vspace{0.2\baselineskip}
\centering
\includegraphics[width=0.355\textwidth]{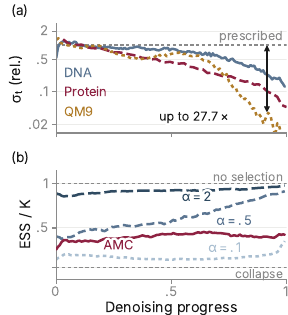}
\caption{\old{(a) Spread against the prescribed level. (b) ESS on DNA.}}
\label{fig:mechanism}
\vspace{-1.9\baselineskip}
\end{wrapfigure}
We therefore standardize the increment across the particle group, which applies a shift and a scale, and
\old{Lemma~\ref{lem:baseline} shows that only one of the two survives the resampling softmax}.

\begin{lemma}[Baseline invariance of the resampling law]
\label{lem:baseline}
Fix $c>0$. For increments $\{\Delta_t^k\}_{k=1}^{N}\subset\R$ and any constant $b$, the resampling law
$\hat w_t^k\propto\exp\!\big((\Delta_t^k-b)/c\big)$ equals $\mathrm{softmax}_k(\Delta_t^k/c)$ and is
independent of $b$.
\end{lemma}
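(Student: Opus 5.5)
The proof is a direct computation with the softmax normalization, so the plan is to factor the shift out of every weight and watch it cancel. Fix $c>0$, the increments $\{\Delta_t^k\}_{k=1}^N$, and $b\in\R$. The resampling law is defined by normalizing the unnormalized weights $\exp((\Delta_t^k-b)/c)$, that is,
\[
\hat w_t^k=\frac{\exp\!\big((\Delta_t^k-b)/c\big)}{\sum_{j=1}^N\exp\!\big((\Delta_t^j-b)/c\big)} .
\]
The first step is to write each numerator and denominator term as $e^{-b/c}\exp(\Delta_t^k/c)$. This factorization uses only $c>0$, which makes $b/c$ a finite real number, together with the identity $e^{u+v}=e^ue^v$.

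The second step is to cancel the common factor $e^{-b/c}$ between numerator and denominator. The cancellation is legitimate because $e^{-b/c}$ is strictly positive and finite. The denominator $\sum_j\exp(\Delta_t^j/c)$ is also strictly positive, since it is a finite sum of positive terms with $N\ge1$, so the ratio is well defined both before and after cancelling. What remains is exactly $\exp(\Delta_t^k/c)/\sum_j\exp(\Delta_t^j/c)=\mathrm{softmax}_k(\Delta_t^k/c)$. This expression no longer contains $b$, which gives both claims of Lemma~\ref{lem:baseline}: the law equals the softmax, and it is independent of $b$.

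No step is a genuine obstacle. The only point that needs care is well-definedness: we must confirm that $c>0$ and that the sum is nonzero, so that nothing degenerates.

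In the write-up I would add a remark connecting this to the standardization of \S\ref{sec:search}. When the increments are standardized as $(\Delta_t^k-\mu_t)/\sigma_t$ and then divided by a temperature $\alpha$, the lemma applies with $b=\mu_t$ and $c=\alpha\sigma_t$. The shift therefore drops out, and only the scale survives, as a per-step effective temperature $\alpha\sigma_t$. This requires $\sigma_t>0$. If $\sigma_t=0$, all increments are equal and the law is uniform under any convention, so that case can be handled separately.
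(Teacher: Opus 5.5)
Your proof is correct and is the same computation as Part~(i) of the paper's restated lemma: the factor $e^{-b/c}$ appears in every numerator and denominator term and cancels, leaving $\mathrm{softmax}_k(\Delta_t^k/c)$. Your closing remark with $b=\mu_t$ and $c=\alpha\sigma_t$ matches how the paper applies the lemma. The paper avoids the degenerate case $\sigma_t=0$, which you set aside, by flooring $\sigma_t$ at $\epsilon$.
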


\noindent With $\mu_t,\sigma_t$ the per-step mean and standard deviation of $\{\Delta_t^k\}$, the shift
therefore cancels and only the scale acts,
\begin{equation}
\label{eq:std}
\hat w_t^k\;\propto\;\exp\!\Big(\frac{\Delta_t^k-\mu_t}{\alpha\sigma_t}\Big)\;\propto\;\exp\!\Big(\frac{\Delta_t^k}{\alpha_t}\Big),
\qquad \alpha_t=\alpha\,\sigma_t,
\end{equation}
so the fixed temperature becomes a per-step one that tracks the spread, \old{so standardization's only
operative effect is $\alpha\mapsto\alpha\sigma_t$, whichever estimate of $\mu_t$ is used} ($\sigma_t$ is
floored at $\epsilon{=}10^{-6}$). \old{The proposal's contribution $\kappa\gamma$ is unchanged; only the selection term $1/\alpha_t$ now
varies along the trajectory.} \old{Measured as the per-step effective sample size (ESS), a fixed $\alpha$ either collapses the particle
population from the first steps or, at the value that avoids that, leaves the weights nearly uniform by the
last ones, so selection is effectively off where the value estimate is most reliable, whereas $\SMCAT$ holds
it in between throughout (Figure~\ref{fig:mechanism}b, Appendix~\ref{app:h-ess}).} We call the selector $\SMCAT$ (Adaptive Monte Carlo), and it changes one line
of SMC (Appendix~\ref{app:selectors}). \old{Proposition~\ref{prop:at} identifies the target that a per-step
temperature induces.}

\begin{proposition}[Target of $\SMCAT$]
\label{prop:at}
\old{The accumulated weight of a particle is proportional to $e^{R_\sigma}$ with
$R_\sigma=\sum_t\Delta_t/\alpha_t$, a functional of the whole trajectory, and the self-normalized estimator
converges in probability under the path measure $\pi^\star_{\SMCAT}\propto q_\gamma e^{R_{\sigma^\star}}$,
whose $\bx_0$-marginal is the law of the samples. If $\sigma^\star_t\equiv\sigma$, that marginal is
$\pi_\gamma$ with $\alpha$ replaced by $\alpha\sigma$.}
\end{proposition}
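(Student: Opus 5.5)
The proof follows the argument for Lemma~\ref{lem:target}(i), with the fixed temperature $\alpha$ replaced by the per-step $\alpha_t=\alpha\sigma_t$. The work is to track which quantities stay fixed along a path and which depend on the particle population.

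\textbf{Step 1: the accumulated weight.} Fix a trajectory $\bm z_1\to\cdots\to\bm z_0$ under $q_\gamma$. By Eq.~\eqref{eq:std} and Lemma~\ref{lem:baseline}, the incremental log-weight at step $t$ is $\Delta_t/\alpha_t$ up to a constant shared by all particles, and that constant is removed by self-normalization. Multinomial resampling with these weights, repeated over the steps, produces the standard Feynman--Kac product. The surviving genealogy therefore carries a multiplicative weight $\prod_t e^{\Delta_t/\alpha_t}=e^{R_\sigma}$. Unlike the fixed-$\alpha$ case, the sum $\sum_t\Delta_t/\alpha_t$ does not telescope to $(V(\bm z_0)-V(\bm z_1))/\alpha$, because the divisors differ from step to step. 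So $R_\sigma$ depends on the whole path, which is the first claim.

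\textbf{Step 2: convergence.} The difficulty is that $\sigma_t$ is computed from the particles, so the potentials are interacting rather than fixed functions. The plan is to show that the empirical spread $\sigma_t^N$ of the increments converges in probability to a deterministic limit $\sigma_t^\star$, namely the standard deviation of $\Delta_t$ under the limiting predictive measure at step $t$.

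This goes by induction over the $T$ steps, in the style of adaptive SMC (cf.\ Beskos et al.). Boundedness of $r$, and hence of $V$ and $\Delta$, together with the floor $\sigma_t\ge\epsilon$, makes each map $\sigma\mapsto e^{\Delta/(\alpha\sigma)}$ uniformly Lipschitz on the relevant range. If the empirical measures at step $t$ converge weakly in probability, then:
\begin{itemize}
\item the empirical first and second moments of $\Delta_t$ converge, so $\sigma_t^N\to\sigma_t^\star$;
\item the potential evaluated with $\sigma_t^N$ differs from the one evaluated with $\sigma_t^\star$ by $o_P(1)$, uniformly in the particle;
\item the standard convergence of self-normalized SMC under fixed bounded potentials then carries over to the next step.
\end{itemize}
The conclusion is that the self-normalized estimator converges in probability to the expectation under $\pi^\star_{\SMCAT}\propto q_\gamma\,e^{R_{\sigma^\star}}$, a measure on paths. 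The samples are the terminal states $\bx_0$, so their law is its $\bx_0$-marginal. I expect this perturbation argument, controlling the plug-in $\sigma_t^N$ uniformly across the steps, to be the main obstacle. Convergence is only in probability, rather than almost surely as in Lemma~\ref{lem:target}, because of this plug-in.

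\textbf{Step 3: constant spread.} If $\sigma^\star_t\equiv\sigma$, then $R_{\sigma^\star}=\sum_t\Delta_t/(\alpha\sigma)$, which telescopes to $\big(V(\bm z_0)-V(\bm z_1)\big)/(\alpha\sigma)$. The start state is the deterministic all-\textsc{mask} sequence, so $V(\bm z_1)$ is a constant, and $V(\bm z_0)=r(\bx_0)$. The path density becomes $q_\gamma(\text{path})\,e^{r(\bx_0)/(\alpha\sigma)}$. Marginalizing over the path gives $q_\gamma(\bx_0)\,e^{r(\bx_0)/(\alpha\sigma)}$, which is $\pi_\gamma$ of Eq.~\eqref{eq:target-gamma} with $\alpha$ replaced by $\alpha\sigma$.
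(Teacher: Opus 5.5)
Your proof is correct. Steps~1 and~3 match the paper's Steps~1 and~5: the population shift cancels by Lemma~\ref{lem:baseline}(i), the step-dependent $\alpha_t$ blocks the telescoping, and at constant $\sigma^\star$ the sum collapses to $(r(\bx_0)-V(\bm z_T))/(\alpha\sigma)$. Your choice of the predictive, pre-weighting law for $\sigma^\star_t$ is also the paper's Step~4 observation that the statistic is read off the population about to be reweighted.

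The routes diverge at Step~2. The paper places $\SMCAT$ in the adaptive-SMC framework of \citet{beskos2016convergence} and invokes their Theorems~3.1--3.2. In doing so it takes as \emph{hypotheses} that the empirical $(\mu_t,\sigma_t)$ admit deterministic limits and that the map from the empirical measure to them meets the regularity conditions there. You derive both instead, by induction over the $T$ steps:
\begin{itemize}
\item On the finite state space, the empirical first and second moments of the bounded increments converge, so the floored spread converges by continuity.
\item The floor makes $\sigma\mapsto e^{\Delta/(\alpha\sigma)}$ uniformly Lipschitz and bounded below, so the plug-in changes the self-normalized weights by $o_P(1)$.
\item The fixed-potential law of large numbers then carries over to the next step.
\end{itemize}
Your route gives a self-contained result that discharges the paper's assumption on the limits, at the cost of writing out the perturbation argument that the citation supplies. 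The paper's route is shorter but leaves that hypothesis standing.

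When you write it out, define $\sigma^\star_t$ as the floored quantity $\max(\mathrm{sd}(\Delta_t),\epsilon)$ under the limiting predictive law at step $t$. That law is itself defined recursively through $\sigma^\star$ at earlier steps. Without the floor in the definition, the limit can be $0$ at a step where $\Delta_t$ degenerates, and the Lipschitz control you rely on fails there.
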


\noindent \old{Adapting the temperature changes what the selector targets. With $\alpha_t$ varying the
increments no longer cancel, so $R_\sigma$ retains the whole trajectory rather than its endpoint, and the
reweighting is by the greedy value $V$ of \S\ref{sec:filter} rather than the soft value $v_s$ of
\S\ref{sec:background}. \old{In return the selector gains a pressure that tracks the spread at every step,}
which no single $\alpha$ does (\S\ref{sec:analysis}). When the spread does not drift, $\sigma^\star_t$
constant, the selector reduces to the fixed-temperature one at $\alpha\sigma$, so the adaptation is active
exactly where the drift is. Appendix~\ref{app:proofs} carries the hypotheses and the proof, and
Appendix~\ref{app:anatomy} places $\SMCAT$ against the selectors of TDS and SMC-DDM.}

\noindent \old{Standardizing has two further consequences.} Dividing by $\sigma_t$ makes
the selector invariant to reward rescaling $r\mapsto ar$, $a>0$, which scales $\Delta_t^k$ and $\sigma_t$
alike, so $\alpha$ needs no retuning (Table~\ref{tab:scale}). \old{The invariance is a property of
$\alpha$ alone; $\gamma$ multiplies a gradient that scales with the reward.} \old{A} leave-one-out baseline in place of
$\mu_t$ is inert to $O(1/(N{-}1))$ (Remark~\ref{rem:loo}), which Table~\ref{tab:rloo} tests.

\noindent\textbf{Our final method: \VGAS.}
\VGAS{} combines the two: a variance-reduced guided proposal searched under the adaptive selector
$\SMCAT$, as $\VGASGR$ for a differentiable reward and $\VGASRO$ (RLOO baseline) for a non-differentiable one.

\section{Experiments}
\label{sec:experiments}

We evaluate \VGAS{} on three domains: regulatory DNA sequence design (\S\ref{sec:dna}), protein
sequence design (\S\ref{sec:protein}), and small-molecule property targeting (\S\ref{sec:qm9}).

\noindent\textbf{Baselines.} To rigorously evaluate our proposed method, we compare against the
strongest training-free and fine-tuned steering methods for discrete diffusion, reproduced from their
official repositories under a common setup. They fall into four groups. Fine-tuning:
DRAKES, the standard fine-tuned reference on these benchmarks. Gradient guidance: DG and GILC.
Search: Best-of-$N$, SMC, and SVDD. Combination (guidance $+$ search): TDS, SMC-DDM, and TreeG,
of which the last two are closest to ours. Implementation details are in Appendix~\ref{app:impl}.

\noindent\textbf{Synthetic verification.} Before the three benchmarks, we check the sampler where the
answer is known. On the $64{\times}64$ grid of \citet{pani2025smcddm} the reward-tilted target is available
in closed form, and the components reduce the earth-mover distance to it one at a
time~(Appendix~\ref{app:more-analysis}).

\begin{table}[b]
\centering
\small
\renewcommand{\arraystretch}{0.88}
\setlength{\tabcolsep}{5pt}
\caption{\textbf{Quantitative comparison on DNA enhancer design}~\citep{wang2025drakes} (mean${\pm}$std,
$3$ seeds, \old{$N{=}20$}). Bold indicates the best performance.}
\label{tab:dna}
\resizebox{0.88\textwidth}{!}{%
\begin{tabular}{@{}lrrrrr@{}}
\toprule
Method & Pred-med\,$\uparrow$ & ATAC-Acc (\%)\,$\uparrow$ & 3-mer Corr\,$\uparrow$ & JASPAR\,$\uparrow$ & App-LL\,$\uparrow$\\
\cmidrule(r){1-1}\cmidrule(l){2-6}
Pretrained (MDLM)~\citep{sahoo2024mdlm} & $0.18_{\pm0.02}$ & $2.0_{\pm0.4}$ & $-0.04_{\pm0.01}$ & $0.29_{\pm0.01}$ & $-261.9_{\pm0.6}$\\
DG~\citep{nisonoff2024} & $1.10_{\pm0.01}$ & $0.0_{\pm0.0}$ & $-0.09_{\pm0.01}$ & $-0.02_{\pm0.01}$ & $-267.8_{\pm0.2}$\\
Best-of-$N$~\citep{beirami2025bon} & $1.81_{\pm0.06}$ & $6.8_{\pm1.9}$ & $0.35_{\pm0.04}$ & $0.76_{\pm0.01}$ & $-260.2_{\pm0.1}$\\
SMC~\citep{delmoral2004feynman} & $4.26_{\pm0.13}$ & $30.9_{\pm5.9}$ & $0.83_{\pm0.03}$ & $0.76_{\pm0.02}$ & $-257.8_{\pm2.6}$\\
TDS~\citep{wu2023smc} & $4.48_{\pm0.80}$ & $36.6_{\pm9.1}$ & $0.76_{\pm0.17}$ & $0.81_{\pm0.14}$ & $\mathbf{-257.4}_{\pm4.5}$\\
SVDD~\citep{li2024svdd} & $5.30_{\pm0.02}$ & $29.3_{\pm0.2}$ & $0.73_{\pm0.01}$ & $0.83_{\pm0.05}$ & $-259.1_{\pm0.3}$\\
DRAKES~\citep{wang2025drakes} & $5.61_{\pm0.08}$ & $92.4_{\pm0.7}$ & $0.89_{\pm0.01}$ & $\mathbf{0.91}_{\pm0.01}$ & $-264.2_{\pm0.3}$\\
\midrule
GILC-DB~\citep{dou2026gilc} & $6.21_{\pm0.05}$ & $83.8_{\pm1.7}$ & $0.79_{\pm0.02}$ & $0.90_{\pm0.01}$ & $-279.1_{\pm0.5}$\\
GILC-PG~\citep{dou2026gilc} & $4.86_{\pm0.07}$ & $47.2_{\pm0.7}$ & $0.37_{\pm0.02}$ & $0.87_{\pm0.01}$ & $-276.3_{\pm0.4}$\\
SMC-DDM~\citep{pani2025smcddm} & \old{$5.71_{\pm0.04}$} & \old{$41.8_{\pm0.5}$} & \old{$0.53_{\pm0.02}$} & \old{$0.77_{\pm0.01}$} & \old{$-278.3_{\pm0.2}$}\\
TreeG-G~\citep{guo2025treeg} & $5.96_{\pm0.02}$ & $95.5_{\pm0.8}$ & $0.78_{\pm0.02}$ & $0.89_{\pm0.01}$ & $-286.4_{\pm0.2}$\\
\midrule
\rowcolor{green!10} $\VGASGR$ & $\mathbf{7.33}_{\pm0.18}$ & $\mathbf{99.1}_{\pm1.6}$ & $0.88_{\pm0.03}$ & $0.88_{\pm0.01}$ & $-276.3_{\pm0.7}$\\
\rowcolor{green!10} $\VGASRO$ & $6.96_{\pm0.19}$ & $93.9_{\pm2.4}$ & $\mathbf{0.94}_{\pm0.01}$ & $0.90_{\pm0.01}$ & $-275.0_{\pm0.3}$\\
\bottomrule
\end{tabular}}
\end{table}

\subsection{DNA Enhancer Sequence Design}
\label{sec:dna}
\noindent\textbf{Setup.} We evaluate on the genomic enhancer benchmark of \citet{gosai2023}, as adopted by
\citet{wang2025drakes}. The backbone is a masked diffusion language model~\citep{sahoo2024mdlm} pretrained on
this data with $T{=}128$ reverse steps. We steer toward high predicted HepG2 enhancer activity, with a
held-out chromatin-accessibility (ATAC) oracle reserved for evaluation.

\smallskip\noindent\textbf{Metrics.} Reward is the median predicted HepG2 activity (Pred-med); fidelity is
the held-out ATAC pass rate, the $3$-mer and JASPAR-motif correlations to natural
enhancers~\citep{jaspar2022}, and the approximate log-likelihood under the frozen generator (App-LL;
Appendix~\ref{app:metrics}).

\smallskip\noindent\textbf{Results.} As shown in Table~\ref{tab:dna}, \VGAS{} generates sequences
with the highest predicted HepG2 activity among all methods, and does so on both paths: the differentiable
$\VGASGR$ and the non-differentiable $\VGASRO$ each exceed every training-free baseline, the closest
combination methods SMC-DDM and TreeG-G, and the reward-fine-tuned DRAKES. Fidelity does not degrade
alongside it. The held-out accessibility oracle, which never guides sampling, reaches $99.1\%$ against
DRAKES's $92.4\%$, the $3$-mer correlation to natural enhancers is highest under $\VGASRO$, and the
JASPAR-motif correlation is level with DRAKES. Approximate likelihood is the one metric on which guided
samplers trade against the pretrained model, and \old{\VGAS{} concedes more of it than the samplers that
leave the reverse kernel intact and no more than the logit-space guidance it builds on}.

\begin{table}[t]
\centering
\small
\renewcommand{\arraystretch}{0.88}
\setlength{\tabcolsep}{6pt}
\caption{\textbf{Quantitative comparison on protein design} (inverse folding)~\citep{wang2025drakes}
(mean${\pm}$std, $3$ seeds, \old{$N{=}20$}).}
\label{tab:protein}
\resizebox{0.88\textwidth}{!}{%
\begin{tabular}{@{}lrrrrr@{}}
\toprule
Method & Pred-$\mathrm{ddG}$\,$\uparrow$ & \%($\mathrm{ddG}{>}0$) (\%)\,$\uparrow$ & scRMSD\,$\downarrow$ & \old{\%(scRMSD${<}2$\,\AA)}\,$\uparrow$ & Success Rate (\%)\,$\uparrow$\\
\cmidrule(r){1-1}\cmidrule(l){2-6}
Pretrained (MDLM)~\citep{sahoo2024mdlm} & $-0.55_{\pm0.05}$ & $36.1_{\pm0.6}$ & $0.85_{\pm0.01}$ & $90.2_{\pm1.0}$ & $33.9_{\pm0.9}$\\
DG~\citep{nisonoff2024} & $-0.54_{\pm0.05}$ & $36.2_{\pm0.6}$ & $0.85_{\pm0.02}$ & $90.3_{\pm1.0}$ & $33.9_{\pm0.6}$\\
SMC~\citep{delmoral2004feynman} & $0.37_{\pm0.07}$ & $59.9_{\pm2.6}$ & $0.86_{\pm0.01}$ & $\mathbf{93.7}_{\pm2.5}$ & $55.8_{\pm1.0}$\\
TDS~\citep{wu2023smc} & $0.41_{\pm0.14}$ & $62.8_{\pm3.7}$ & $0.86_{\pm0.02}$ & $92.4_{\pm2.2}$ & $57.1_{\pm2.3}$\\
Best-of-$N$~\citep{beirami2025bon} & $0.51_{\pm0.06}$ & $61.7_{\pm1.6}$ & $0.86_{\pm0.02}$ & $92.6_{\pm0.5}$ & $57.2_{\pm1.3}$\\
SVDD~\citep{li2024svdd} & $0.69_{\pm0.08}$ & $69.3_{\pm2.1}$ & $0.85_{\pm0.03}$ & $89.7_{\pm0.9}$ & $65.0_{\pm3.3}$\\
DRAKES~\citep{wang2025drakes} & $1.08_{\pm0.02}$ & $86.1_{\pm0.4}$ & $0.91_{\pm0.01}$ & $92.2_{\pm0.9}$ & $78.6_{\pm1.3}$\\
\midrule
GILC-DB~\citep{dou2026gilc} & $0.64_{\pm0.02}$ & $77.0_{\pm1.3}$ & $0.89_{\pm0.01}$ & $89.2_{\pm0.4}$ & $67.5_{\pm1.4}$\\
GILC-PG~\citep{dou2026gilc} & $0.16_{\pm0.02}$ & $54.3_{\pm0.4}$ & $0.85_{\pm0.01}$ & $90.8_{\pm0.5}$ & $49.7_{\pm0.6}$\\
SMC-DDM~\citep{pani2025smcddm} & $0.62_{\pm0.05}$ & $65.3_{\pm3.0}$ & $0.86_{\pm0.03}$ & $92.5_{\pm0.5}$ & $59.4_{\pm3.3}$\\
\midrule
\rowcolor{green!10} $\VGASGR$ & $\mathbf{1.29}_{\pm0.16}$ & $\mathbf{90.2}_{\pm4.3}$ & $0.88_{\pm0.04}$ & $91.6_{\pm0.9}$ & $\mathbf{82.1}_{\pm3.2}$\\
\rowcolor{green!10} $\VGASRO$ & $1.12_{\pm0.09}$ & $76.0_{\pm1.0}$ & $0.85_{\pm0.02}$ & $92.3_{\pm1.7}$ & $69.7_{\pm0.3}$\\
\bottomrule
\end{tabular}}
\end{table}

\begin{table}[t]
\centering
\footnotesize
\renewcommand{\arraystretch}{0.82}
\setlength{\tabcolsep}{5pt}
\caption{\textbf{Quantitative comparison on QM9 conditional generation}~\citep{ramakrishnan2014qm9}
(mean${\pm}$std, $3$ seeds); \old{metrics and units in Appendix~\ref{app:metrics}.}}
\label{tab:qm9}
\resizebox{0.88\textwidth}{!}{%
\begin{tabular}{@{}lrrrrrr@{}}
\toprule
Method & $C_v$\,$\downarrow$ & $\alpha$\,$\downarrow$ & $\mu$\,$\downarrow$ & gap\,$\downarrow$ & HOMO\,$\downarrow$ & LUMO\,$\downarrow$\\
\cmidrule(r){1-1}\cmidrule(l){2-7}
Pretrained & $4.17_{\pm0.04}$ & $7.77_{\pm0.05}$ & $1.58_{\pm0.04}$ & $1450_{\pm16}$ & $635_{\pm20}$ & $1422_{\pm6}$\\
TFG-Flow~\citep{tfgflow} & $2.20_{\pm0.03}$ & $2.95_{\pm0.15}$ & $0.946_{\pm0.025}$ & $926_{\pm7}$ & $551_{\pm11}$ & $761_{\pm14}$\\
SMC~\citep{delmoral2004feynman} & $3.48_{\pm0.03}$ & $3.95_{\pm0.23}$ & $1.27_{\pm0.03}$ & $1445_{\pm13}$ & $639_{\pm10}$ & $1416_{\pm18}$\\
SVDD~\citep{li2024svdd} & $3.54_{\pm0.06}$ & $3.79_{\pm0.08}$ & $1.32_{\pm0.02}$ & $1416_{\pm22}$ & $629_{\pm19}$ & $1421_{\pm40}$\\
\midrule
GILC-DB~\citep{dou2026gilc} & $2.49_{\pm0.08}$ & $2.68_{\pm0.09}$ & $1.19_{\pm0.06}$ & $913_{\pm15}$ & $500_{\pm2}$ & $892_{\pm8}$\\
GILC-PG~\citep{dou2026gilc} & $2.10_{\pm0.08}$ & $3.33_{\pm0.24}$ & $1.09_{\pm0.05}$ & $859_{\pm41}$ & $420_{\pm9}$ & $665_{\pm10}$\\
SMC-DDM~\citep{pani2025smcddm} & $2.14_{\pm0.06}$ & $3.61_{\pm0.10}$ & $0.958_{\pm0.025}$ & $879_{\pm16}$ & $520_{\pm12}$ & $641_{\pm12}$\\
TreeG-G~\citep{guo2025treeg} & $3.54_{\pm0.13}$ & $5.63_{\pm0.19}$ & $1.77_{\pm0.19}$ & $1380_{\pm25}$ & $712_{\pm4}$ & $1308_{\pm20}$\\
\midrule
\rowcolor{green!10} $\VGASGR$ & $2.06_{\pm0.07}$ & $\mathbf{2.38}_{\pm0.07}$ & $0.921_{\pm0.014}$ & $795_{\pm32}$ & $406_{\pm4}$ & $586_{\pm11}$\\
\rowcolor{green!10} $\VGASRO$ & $\mathbf{1.22}_{\pm0.07}$ & $2.53_{\pm0.13}$ & $\mathbf{0.718}_{\pm0.011}$ & $\mathbf{790}_{\pm10}$ & $\mathbf{374}_{\pm5}$ & $\mathbf{555}_{\pm5}$\\
\bottomrule
\end{tabular}}
\end{table}

\subsection{Protein Sequence Design}
\label{sec:protein}
\noindent\textbf{Setup.} We evaluate on the protein inverse-folding benchmark of
\citet{wang2025drakes}, steering a discrete diffusion model over amino-acid sequences conditioned on a
target structure toward predicted stability ($\mathrm{ddG}$) from an oracle trained on
Megascale~\citep{tsuboyama2023}; each sequence is re-folded to check that it still folds to that
structure.

\smallskip\noindent\textbf{Metrics.} Reward is the median predicted stability (Pred-$\mathrm{ddG}$) and the
fraction of stabilizing designs ($\mathrm{ddG}{>}0$); fidelity is the self-consistency RMSD from re-folding
(scRMSD), the well-folded fraction (scRMSD${<}2$\,\AA), and the joint success rate
(Appendix~\ref{app:metrics}).

\smallskip\noindent\textbf{Results.} As shown in Table~\ref{tab:protein}, \old{both \VGAS{} paths
raise predicted stability above every training-free baseline and above the reward-fine-tuned DRAKES,
and $\VGASGR$ attains the largest fraction of stabilizing designs. Fidelity holds: the joint
structure-and-stability success rate is the highest at $82.1\%$, and the self-consistency RMSD of both
paths stays under the fine-tuned model's, with $\VGASRO$ matching the pretrained model's.}

\subsection{Small-Molecule Property Targeting}
\label{sec:qm9}
\noindent\textbf{Setup.} \old{We evaluate on QM9}~\citep{ramakrishnan2014qm9} under the conditional-generation protocol of TFG-Flow~\citep{tfgflow},
drawing a quantum-chemical property value as the condition and steering the sampler to realize it.

\smallskip\noindent\textbf{Metrics.} We measure reward by the mean absolute error (MAE) between the
requested property value and the value realized, \old{over the six properties of~\citet{tfgflow}; lower
is better. Following them and~\citet{dou2026gilc}, fidelity is chemical validity, and every method is
compared above a $75\%$ validity operating point so that a lower MAE cannot come from invalid molecules}.

\smallskip\noindent\textbf{Results.} As shown in Table~\ref{tab:qm9}, both \VGAS{} variants attain a
lower error than every baseline on all six properties. \old{The ordering across baseline groups matches the sequence
domains: search alone improves little, gradient guidance accounts for most of the gain, and the
combinations for the remainder. $\VGASRO$ is the stronger of the two here, leading on five of the six
properties at the larger reward budget of Table~\ref{tab:nfe}.}

\begin{table}[!tb]
\centering\footnotesize
\renewcommand{\arraystretch}{0.82}
\caption{\textbf{Component ablation and effect of $\SMCAT$.} (a) each proposal to its final \VGAS{} setting; (b) SMC-DDM's selector swapped for $\SMCAT$.}
\label{tab:ablation}
\begin{subtable}[b]{0.345\textwidth}\centering
\setlength{\tabcolsep}{2.5pt}
\resizebox{\linewidth}{!}{\begin{tabular}{@{}lrr@{}}
\toprule
Stage & DNA & Protein\\
& {\scriptsize Pred-med\,$\uparrow$} & {\scriptsize Pred-$\mathrm{ddG}$\,$\uparrow$}\\
\cmidrule(r){1-1}\cmidrule(l){2-3}
GILC-DB & $6.21$ & $0.64$\\
\quad$+$ Gumbel--Rao & $6.35$ & $0.70$\\
\quad$+$ SMC & $6.84$ & $0.85$\\
\rowcolor{green!10}\quad$+$ $\SMCAT$\ ($\VGASGR$) & $\mathbf{7.33}$ & $\mathbf{1.29}$\\
\addlinespace[3pt]
GILC-PG & $4.86$ & $0.16$\\
\quad$+$ SMC & $6.33$ & $0.41$\\
\quad$+$ $\SMCAT$ & $6.61$ & $1.04$\\
\rowcolor{green!10}\quad$+$ RO ($\VGASRO$) & $\mathbf{6.96}$ & $\mathbf{1.12}$\\
\bottomrule
\end{tabular}}
\caption{Component ablation.}
\label{tab:ladder-full}
\end{subtable}\hspace{0.025\textwidth}
\begin{subtable}[b]{0.49\textwidth}\centering
\setlength{\tabcolsep}{2.5pt}
\resizebox{\linewidth}{!}{\begin{tabular}{@{}llrrrr@{}}
\toprule
\rev{$n$} & selector & Pred-med\,$\uparrow$ & ATAC (\%)\,$\uparrow$ & 3-mer\,$\uparrow$ & JASPAR\,$\uparrow$\\
\cmidrule(r){1-2}\cmidrule(l){3-6}
$4$ & SMC-DDM & $5.12_{\pm0.06}$ & \rev{$35.1_{\pm0.6}$} & $0.38_{\pm0.03}$ & $0.70_{\pm0.01}$\\
$4$ & $\SMCAT$ & $\mathbf{5.37}_{\pm0.06}$ & \rev{$\mathbf{42.4}_{\pm1.8}$} & $\mathbf{0.74}_{\pm0.03}$ & $\mathbf{0.90}_{\pm0.01}$\\
$10$ & SMC-DDM & $5.71_{\pm0.04}$ & \rev{$41.8_{\pm0.5}$} & $0.53_{\pm0.02}$ & $0.77_{\pm0.01}$\\
$10$ & $\SMCAT$ & $\mathbf{5.85}_{\pm0.03}$ & \rev{$\mathbf{57.5}_{\pm1.0}$} & $\mathbf{0.90}_{\pm0.01}$ & $\mathbf{0.96}_{\pm0.01}$\\
\bottomrule
\end{tabular}}
\caption{\rev{$\SMCAT$ vs.\ SMC-DDM's selector at $n$ reward draws.}}
\label{tab:smcddm-sel}
\end{subtable}
\end{table}

\section{Ablation Study and Analysis}
\label{sec:analysis}
\noindent\textbf{Ablation study.} To separate the contribution of each component, we conduct an ablation
study that adds them one at a time to both proposals (Table~\ref{tab:ladder-full}; full metrics in Tables~\ref{tab:dna-full}
and~\ref{tab:protein-full}). The variance reduction lifts the proposal on its own, on both paths, and
selection carries the larger share: SMC lifts the weaker policy-gradient proposal the most on DNA, whereas
on protein the adaptive temperature contributes more than SMC on both paths. Every component contributes,
and the fidelity metrics do not degrade as the reward rises.

\noindent\textbf{Generalizability of \SMCAT.} \rev{To test whether the selector generalizes beyond its own
proposal, we hold SMC-DDM's fixed and replace only the selector}
(Table~\ref{tab:smcddm-sel}). Reward and every
fidelity metric rise at \rev{both budgets}, so an adaptive
temperature adds to a first-order approximation of the locally optimal
twist~\citep[\S3.2.1]{pani2025smcddm}; and since that sampler carries the proposal correction and ours
omits it, the comparison separates the selection rule from that choice.

\begin{wrapfigure}{r}{0.345\textwidth}
\vspace{-0.8\baselineskip}
\centering
\includegraphics[width=0.325\textwidth]{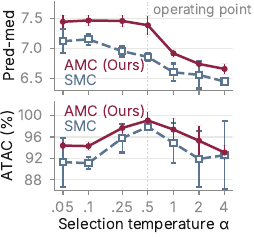}
\caption{Adaptive selection temperature beats every fixed $\alpha$ on the DNA benchmark.}
\label{fig:alpha}
\vspace{-1.2\baselineskip}
\end{wrapfigure}
\noindent\textbf{Effect of the adaptive temperature.} \rev{To show that the gain is not tied to one scale of
$\alpha$, we sweep it over two decades and run both selectors at each value on the same guided proposal,} so SMC is the rung of Table~\ref{tab:ladder-full} ($3$ seeds; Figure~\ref{fig:alpha}).
$\SMCAT$ is higher at every $\alpha$ and at each selector's own best $\alpha$, on the reward and the
held-out oracle alike, and is the flatter of the two; the oracle peaks at our default $\alpha{=}0.5$. No
prescribed temperature recovers the adaptive one, which confirms Eq.~\eqref{eq:std}.

\noindent\textbf{Selection and diversity.} \rev{To separate reward from diversity, we measure diversity three
ways} (Tables~\ref{tab:div} and~\ref{tab:frontier}). The benchmark's mean pairwise Hamming distance does not
register the collapse, while the nearest-neighbour distance and the uniqueness rate do. Sweeping the resampling frequency, $\SMCAT$ attains the higher reward \rev{at every setting
(Figure~\ref{fig:teaser}c), so the concentration is an operating point rather than a property of the
selector}.

\noindent\textbf{Limitations.} \VGAS{} costs a particle population rather than a single trajectory, and
selection concentrates the sample set, which the resampling frequency controls
(Appendix~\ref{app:more-analysis}). We evaluate on biological sequences and small molecules, and leave
text untested.

\section{Conclusion}
\rev{We proposed \VGAS, a training-free sampler for inference-time reward tilting of masked discrete
diffusion models. Guided proposals paired with selection leave three choices at their defaults, so we
first established what such a combination targets once the proposal correction is omitted, and settled
each against that target: a Rao--Blackwellized guidance estimate for differentiable rewards and a
leave-one-out baseline for non-differentiable ones, the tilting applied in the clean-token logits where
the pretrained schedule is preserved, and a selection temperature standardized by the spread of the
per-step rewards. Across regulatory DNA, protein and small-molecule benchmarks, \VGAS{} attains the
best training-free reward on all three and exceeds the reward-fine-tuned reference where one exists,
without any training and without degrading fidelity. We expect it to extend to further domains and
rewards through a selector that needs no retuning.}

\section*{Ethics statement}
\rev{Our benchmarks are regulatory DNA enhancer design, inverse protein folding and small-molecule
property targeting. Reward-guided sampling is indifferent to the sign of the objective: the same procedure
that raises a benign reward would raise a harmful one if such an oracle were supplied. The rewards and the
held-out evaluation oracles used here are published academic predictors and all results are in silico.} We
release the method rather than any designed sequence, and we regard oracle choice and biosecurity screening
as prerequisites for downstream use.

\section*{AI use statement}
We used generative AI tools to edit prose, including grammatical correction, and to write plotting
code. The method, the theory, the proofs and every experimental result are the authors' own. We take
responsibility for the final content of this paper.

\section*{Reproducibility statement}
Every result is reported as a mean over three seeds with the standard deviation alongside, except for
a few secondary tables in Appendix~\ref{app:more-analysis} that report point estimates. Our method is stated
as an algorithm (Algorithms~\ref{alg:corr-rao}--\ref{alg:smc}), with its theory and its experiments given
side by side, and every proof is in Appendix~\ref{app:proofs}. Backbones, oracles, data splits,
hyperparameters and per-step inference costs are in Appendix~\ref{app:impl} and metric definitions in
Appendix~\ref{app:metrics}. We will release the code.

\bibliographystyle{iclr2027_conference}
\bibliography{reference}

\begin{thebibliography}{50}
\providecommand{\natexlab}[1]{#1}
\providecommand{\url}[1]{\texttt{#1}}
\expandafter\ifx\csname urlstyle\endcsname\relax
  \providecommand{\doi}[1]{doi: #1}\else
  \providecommand{\doi}{doi: \begingroup \urlstyle{rm}\Url}\fi

\bibitem[Ahmadian et~al.(2024)Ahmadian, Cremer, Gall{\'e}, Fadaee, Kreutzer,
  Pietquin, {\"U}st{\"u}n, and Hooker]{ahmadian2024rloo}
Arash Ahmadian, Chris Cremer, Matthias Gall{\'e}, Marzieh Fadaee, Julia
  Kreutzer, Olivier Pietquin, Ahmet {\"U}st{\"u}n, and Sara Hooker.
\newblock {Back to Basics: Revisiting REINFORCE-Style Optimization for Learning
  from Human Feedback in LLMs}.
\newblock In \emph{Annual Meeting of the Association for Computational
  Linguistics (ACL)}, 2024.

\bibitem[Beirami et~al.(2025)Beirami, Agarwal, Berant, D'Amour, Eisenstein,
  Nagpal, and Suresh]{beirami2025bon}
Ahmad Beirami, Alekh Agarwal, Jonathan Berant, Alexander D'Amour, Jacob
  Eisenstein, Chirag Nagpal, and Ananda~Theertha Suresh.
\newblock {Theoretical Guarantees on the Best-of-N Alignment Policy}.
\newblock In \emph{International Conference on Machine Learning (ICML)}, 2025.

\bibitem[Bengio et~al.(2013)Bengio, L{\'e}onard, and Courville]{bengio2013ste}
Yoshua Bengio, Nicholas L{\'e}onard, and Aaron Courville.
\newblock {Estimating or Propagating Gradients Through Stochastic Neurons for
  Conditional Computation}.
\newblock \emph{arXiv preprint arXiv:1308.3432}, 2013.

\bibitem[Beskos et~al.(2016)Beskos, Jasra, Kantas, and
  Thiery]{beskos2016convergence}
Alexandros Beskos, Ajay Jasra, Nikolas Kantas, and Alexandre Thiery.
\newblock {On the convergence of adaptive sequential Monte Carlo methods}.
\newblock \emph{The Annals of Applied Probability}, 26\penalty0 (2):\penalty0
  1111--1146, 2016.

\bibitem[Blackwell(1947)]{blackwell1947}
David Blackwell.
\newblock Conditional expectation and unbiased sequential estimation.
\newblock \emph{The Annals of Mathematical Statistics}, 18\penalty0
  (1):\penalty0 105--110, 1947.

\bibitem[Bradley \& Nakkiran(2025)Bradley and Nakkiran]{bradley2025cfgpc}
Arwen Bradley and Preetum Nakkiran.
\newblock {Classifier-Free Guidance is a Predictor-Corrector}.
\newblock \emph{Transactions on Machine Learning Research (TMLR)}, 2025.

\bibitem[Campbell et~al.(2024)Campbell, Yim, Barzilay, Rainforth, and
  Jaakkola]{campbell2024generative}
Andrew Campbell, Jason Yim, Regina Barzilay, Tom Rainforth, and Tommi Jaakkola.
\newblock {Generative Flows on Discrete State-Spaces: Enabling Multimodal Flows
  with Applications to Protein Co-Design}.
\newblock In \emph{International Conference on Machine Learning (ICML)}, 2024.

\bibitem[Castro-Mondragon et~al.(2022)Castro-Mondragon, Riudavets-Puig,
  Rauluseviciute, Berhanu~Lemma, Turchi, Blanc-Mathieu, Lucas, Boddie, Khan,
  Manosalva~P{\'e}rez, Fornes, Leung, Aguirre, Hammal, Schmelter, Baranasic,
  Ballester, Sandelin, Lenhard, Vandepoele, Wasserman, Parcy, and
  Mathelier]{jaspar2022}
Jaime~A. Castro-Mondragon, Rafael Riudavets-Puig, Ieva Rauluseviciute, Roza
  Berhanu~Lemma, Laura Turchi, Romain Blanc-Mathieu, Jeremy Lucas, Paul Boddie,
  Aziz Khan, Nicol{\'a}s Manosalva~P{\'e}rez, Oriol Fornes, Tiffany~Y. Leung,
  Alejandro Aguirre, Fayrouz Hammal, Daniel Schmelter, Damir Baranasic, Benoit
  Ballester, Albin Sandelin, Boris Lenhard, Klaas Vandepoele, Wyeth~W.
  Wasserman, Fran{\c c}ois Parcy, and Anthony Mathelier.
\newblock {JASPAR 2022: the 9th release of the open-access database of
  transcription factor binding profiles}.
\newblock \emph{Nucleic Acids Research}, 50\penalty0 (D1):\penalty0 D165--D173,
  2022.

\bibitem[Chopin \& Papaspiliopoulos(2020)Chopin and
  Papaspiliopoulos]{chopin2020introduction}
Nicolas Chopin and Omiros Papaspiliopoulos.
\newblock \emph{{An Introduction to Sequential Monte Carlo}}.
\newblock Springer Series in Statistics. Springer, 2020.

\bibitem[Chu et~al.(2025)Chu, Wu, Chen, Song, and Yue]{chu2025sgdd}
Wenda Chu, Zihui Wu, Yifan Chen, Yang Song, and Yisong Yue.
\newblock {Split Gibbs Discrete Diffusion Posterior Sampling}.
\newblock In \emph{Advances in Neural Information Processing Systems
  (NeurIPS)}, 2025.

\bibitem[Chung et~al.(2023)Chung, Kim, McCann, Klasky, and Ye]{chung2023dps}
Hyungjin Chung, Jeongsol Kim, Michael~Thompson McCann, Marc~Louis Klasky, and
  Jong~Chul Ye.
\newblock {Diffusion Posterior Sampling for General Noisy Inverse Problems}.
\newblock In \emph{International Conference on Learning Representations
  (ICLR)}, 2023.

\bibitem[Dang et~al.(2026)Dang, Han, Xu, Xu, Srivastava, and
  Ermon]{dang2025pgdlm}
Meihua Dang, Jiaqi Han, Minkai Xu, Kai Xu, Akash Srivastava, and Stefano Ermon.
\newblock {Inference-Time Scaling of Diffusion Language Models via Trajectory
  Refinement}.
\newblock In \emph{Conference on Language Modeling (COLM)}, 2026.
\newblock arXiv:2507.08390.

\bibitem[Del~Moral(2004)]{delmoral2004feynman}
Pierre Del~Moral.
\newblock \emph{{Feynman--Kac Formulae: Genealogical and Interacting Particle
  Systems with Applications}}.
\newblock Probability and Its Applications. Springer, 2004.

\bibitem[Del~Moral et~al.(2012)Del~Moral, Doucet, and
  Jasra]{delmoral2012adaptive}
Pierre Del~Moral, Arnaud Doucet, and Ajay Jasra.
\newblock {An adaptive sequential Monte Carlo method for approximate Bayesian
  computation}.
\newblock \emph{Statistics and Computing}, 22\penalty0 (5):\penalty0
  1009--1020, 2012.

\bibitem[Dou et~al.(2026)Dou, Chen, Li, Li, and Deng]{dou2026gilc}
Hongkun Dou, Zike Chen, Fengji Li, Hongjue Li, and Yue Deng.
\newblock {Plug-and-Play Guidance for Discrete Diffusion Models via
  Gradient-Informed Logit Correction}.
\newblock In \emph{International Conference on Machine Learning (ICML)}, 2026.

\bibitem[Gosai et~al.(2024)Gosai, Castro, Fuentes, Butts, Mouri, Alasoadura,
  Kales, Nguyen, Noche, Rao, Joy, Sabeti, Reilly, and Tewhey]{gosai2023}
Sager~J. Gosai, Rodrigo~I. Castro, Natalia Fuentes, John~C. Butts, Kousuke
  Mouri, Michael Alasoadura, Susan Kales, Thanh Thanh~L. Nguyen, Ramil~R.
  Noche, Arya~S. Rao, Mary~T. Joy, Pardis~C. Sabeti, Steven~K. Reilly, and Ryan
  Tewhey.
\newblock {Machine-guided design of cell-type-targeting cis-regulatory
  elements}.
\newblock \emph{Nature}, 634:\penalty0 1211--1220, 2024.

\bibitem[Greensmith et~al.(2004)Greensmith, Bartlett, and
  Baxter]{greensmith2004variance}
Evan Greensmith, Peter~L. Bartlett, and Jonathan Baxter.
\newblock {Variance Reduction Techniques for Gradient Estimates in
  Reinforcement Learning}.
\newblock \emph{Journal of Machine Learning Research}, 5:\penalty0 1471--1530,
  2004.

\bibitem[Gruver et~al.(2023)Gruver, Stanton, Frey, Rudner, Hotzel,
  Lafrance-Vanasse, Rajpal, Cho, and Wilson]{gruver2023protein}
Nate Gruver, Samuel Stanton, Nathan~C. Frey, Tim G.~J. Rudner, Isidro Hotzel,
  Julien Lafrance-Vanasse, Arvind Rajpal, Kyunghyun Cho, and Andrew~Gordon
  Wilson.
\newblock {Protein Design with Guided Discrete Diffusion}.
\newblock In \emph{Advances in Neural Information Processing Systems
  (NeurIPS)}, 2023.

\bibitem[Guo et~al.(2025)Guo, Yang, Yuan, and Wang]{guo2025treeg}
Yingqing Guo, Yukang Yang, Hui Yuan, and Mengdi Wang.
\newblock {Training-Free Guidance Beyond Differentiability: Scalable Path
  Steering with Tree Search in Diffusion and Flow Models}.
\newblock In \emph{Advances in Neural Information Processing Systems
  (NeurIPS)}, 2025.

\bibitem[Han et~al.(2026)Han, Wang, Xu, Chu, Dang, Ye, Chen, Yue, and
  Ermon]{han2026sdpo}
Jiaqi Han, Austin Wang, Minkai Xu, Wenda Chu, Meihua Dang, Haotian Ye, Huayu
  Chen, Yisong Yue, and Stefano Ermon.
\newblock {Discrete Diffusion Trajectory Alignment via Stepwise Decomposition}.
\newblock In \emph{International Conference on Learning Representations
  (ICLR)}, 2026.

\bibitem[Hasan et~al.(2026)Hasan, Ohanesian, Gazizov, Bengio, Aspuru-Guzik,
  Bondesan, Skreta, and Neklyudov]{hasan2026dfkc}
Mohsin Hasan, Viktor Ohanesian, Artem Gazizov, Yoshua Bengio, Al{\'a}n
  Aspuru-Guzik, Roberto Bondesan, Marta Skreta, and Kirill Neklyudov.
\newblock {Discrete Feynman-Kac Correctors}.
\newblock \emph{arXiv preprint arXiv:2601.10403}, 2026.

\bibitem[He et~al.(2026)He, Rojas, and Tao]{he2026exactly}
Ye~He, Kevin Rojas, and Molei Tao.
\newblock {What Exactly Does Guidance Do in Masked Discrete Diffusion Models}.
\newblock In \emph{International Conference on Learning Representations
  (ICLR)}, 2026.

\bibitem[Holderrieth et~al.(2026)Holderrieth, Chen, Eyring, Shah, Anantharaman,
  He, Akata, Jaakkola, Boffi, and Simchowitz]{holderrieth2026diamond}
Peter Holderrieth, Douglas Chen, Luca Eyring, Ishin Shah, Giri Anantharaman,
  Yutong He, Zeynep Akata, Tommi Jaakkola, Nicholas~M. Boffi, and Max
  Simchowitz.
\newblock {Diamond Maps: Efficient Reward Alignment via Stochastic Flow Maps}.
\newblock In \emph{International Conference on Machine Learning (ICML)}, 2026.

\bibitem[Hoogeboom et~al.(2022)Hoogeboom, Satorras, Vignac, and
  Welling]{hoogeboom2022equivariant}
Emiel Hoogeboom, V{\'i}ctor~Garcia Satorras, Cl{\'e}ment Vignac, and Max
  Welling.
\newblock {Equivariant Diffusion for Molecule Generation in 3D}.
\newblock In \emph{International Conference on Machine Learning (ICML)}, 2022.

\bibitem[Jang et~al.(2017)Jang, Gu, and Poole]{jang2017gumbel}
Eric Jang, Shixiang Gu, and Ben Poole.
\newblock {Categorical Reparameterization with Gumbel-Softmax}.
\newblock In \emph{International Conference on Learning Representations
  (ICLR)}, 2017.

\bibitem[Kim et~al.(2026)Kim, Yoon, Phunyaphibarn, Kim, Mardani, and
  Sung]{kim2026cdm}
Jaihoon Kim, Taehoon Yoon, Prin Phunyaphibarn, Seungjun Kim, Morteza Mardani,
  and Minhyuk Sung.
\newblock {Contrastive Distribution Matching for Amortized Sequential Monte
  Carlo in Discrete Diffusion}.
\newblock \emph{arXiv preprint arXiv:2605.23346}, 2026.

\bibitem[Kool et~al.(2019{\natexlab{a}})Kool, van Hoof, and
  Welling]{kool2019rloo}
Wouter Kool, Herke van Hoof, and Max Welling.
\newblock {Buy 4 REINFORCE Samples, Get a Baseline for Free!}
\newblock In \emph{ICLR Deep Reinforcement Learning Meets Structured Prediction
  Workshop}, 2019{\natexlab{a}}.

\bibitem[Kool et~al.(2019{\natexlab{b}})Kool, van Hoof, and
  Welling]{kool2019stochastic}
Wouter Kool, Herke van Hoof, and Max Welling.
\newblock {Stochastic Beams and Where to Find Them: The Gumbel-Top-k Trick for
  Sampling Sequences Without Replacement}.
\newblock In \emph{International Conference on Machine Learning (ICML)},
  2019{\natexlab{b}}.

\bibitem[Lee et~al.(2025)Lee, Kim, Kim, Park, and Park]{lee2025iterref}
Sanghyun Lee, Sunwoo Kim, Seungryong Kim, Jongho Park, and Dongmin Park.
\newblock {Effective Test-Time Scaling of Discrete Diffusion through Iterative
  Refinement}.
\newblock \emph{arXiv preprint arXiv:2511.05562}, 2025.

\bibitem[Li et~al.(2025)Li, Zhao, Wang, Scalia, Eraslan, Nair, Biancalani, Ji,
  Regev, Levine, and Uehara]{li2024svdd}
Xiner Li, Yulai Zhao, Chenyu Wang, Gabriele Scalia, Gokcen Eraslan, Surag Nair,
  Tommaso Biancalani, Shuiwang Ji, Aviv Regev, Sergey Levine, and Masatoshi
  Uehara.
\newblock {Derivative-Free Guidance in Continuous and Discrete Diffusion Models
  with Soft Value-Based Decoding}.
\newblock In \emph{Advances in Neural Information Processing Systems
  (NeurIPS)}, 2025.
\newblock arXiv:2408.08252.

\bibitem[Lin et~al.(2025)Lin, Li, Ye, Yang, Ermon, Liang, and Ma]{tfgflow}
Haowei Lin, Shanda Li, Haotian Ye, Yiming Yang, Stefano Ermon, Yitao Liang, and
  Jianzhu Ma.
\newblock {TFG-Flow: Training-free Guidance in Multimodal Generative Flow}.
\newblock In \emph{International Conference on Learning Representations
  (ICLR)}, 2025.

\bibitem[Lin et~al.(2023)Lin, Akin, Rao, Hie, Zhu, Lu, Smetanin, Verkuil,
  Kabeli, Shmueli, dos Santos~Costa, Fazel-Zarandi, Sercu, Candido, and
  Rives]{lin2023evolutionary}
Zeming Lin, Halil Akin, Roshan Rao, Brian Hie, Zhongkai Zhu, Wenting Lu, Nikita
  Smetanin, Robert Verkuil, Ori Kabeli, Yaniv Shmueli, Allan dos Santos~Costa,
  Maryam Fazel-Zarandi, Tom Sercu, Salvatore Candido, and Alexander Rives.
\newblock {Evolutionary-scale prediction of atomic-level protein structure with
  a language model}.
\newblock \emph{Science}, 379\penalty0 (6637):\penalty0 1123--1130, 2023.

\bibitem[Maddison et~al.(2014)Maddison, Tarlow, and Minka]{maddison2014astar}
Chris~J Maddison, Daniel Tarlow, and Tom Minka.
\newblock {A* Sampling}.
\newblock In \emph{Advances in Neural Information Processing Systems
  (NeurIPS)}, 2014.

\bibitem[Nisonoff et~al.(2025)Nisonoff, Xiong, Allenspach, and
  Listgarten]{nisonoff2024}
Hunter Nisonoff, Junhao Xiong, Stephan Allenspach, and Jennifer Listgarten.
\newblock {Unlocking Guidance for Discrete State-Space Diffusion and Flow
  Models}.
\newblock In \emph{International Conference on Learning Representations
  (ICLR)}, 2025.

\bibitem[Ou et~al.(2026)Ou, Pani, and Li]{pani2025smcddm}
Zijing Ou, Chinmay Pani, and Yingzhen Li.
\newblock {Inference-Time Scaling of Discrete Diffusion Models via Importance
  Weighting and Optimal Proposal Design}.
\newblock In \emph{International Conference on Learning Representations
  (ICLR)}, 2026.

\bibitem[Paulus et~al.(2021)Paulus, Maddison, and
  Krause]{paulus2020raoblackwell}
Max~B Paulus, Chris~J Maddison, and Andreas Krause.
\newblock {Rao-Blackwellizing the Straight-Through Gumbel-Softmax Gradient
  Estimator}.
\newblock In \emph{International Conference on Learning Representations
  (ICLR)}, 2021.

\bibitem[Phunyaphibarn \& Sung(2026)Phunyaphibarn and
  Sung]{phunyaphibarn2026csmc}
Prin Phunyaphibarn and Minhyuk Sung.
\newblock {Reward-Guided Discrete Diffusion via Clean-Sample Markov Chain for
  Molecule and Biological Sequence Design}.
\newblock \emph{arXiv preprint arXiv:2602.09424}, 2026.

\bibitem[Ramakrishnan et~al.(2014)Ramakrishnan, Dral, Rupp, and von
  Lilienfeld]{ramakrishnan2014qm9}
Raghunathan Ramakrishnan, Pavlo~O. Dral, Matthias Rupp, and O.~Anatole von
  Lilienfeld.
\newblock {Quantum chemistry structures and properties of 134 kilo molecules}.
\newblock \emph{Scientific Data}, 1:\penalty0 140022, 2014.

\bibitem[Rao(1945)]{rao1945}
C.~Radhakrishna Rao.
\newblock Information and the accuracy attainable in the estimation of
  statistical parameters.
\newblock \emph{Bulletin of the Calcutta Mathematical Society}, 37:\penalty0
  81--91, 1945.

\bibitem[Rector-Brooks et~al.(2025)Rector-Brooks, Hasan, Peng, Liu, Mittal,
  Dziri, Bronstein, Chatterjee, Tong, and Bose]{rectorbrooks2025ddpp}
Jarrid Rector-Brooks, Mohsin Hasan, Zhangzhi Peng, Cheng-Hao Liu, Sarthak
  Mittal, Nouha Dziri, Michael Bronstein, Pranam Chatterjee, Alexander Tong,
  and Avishek~Joey Bose.
\newblock {Steering Masked Discrete Diffusion Models via Discrete Denoising
  Posterior Prediction}.
\newblock In \emph{International Conference on Learning Representations
  (ICLR)}, 2025.

\bibitem[Sahoo et~al.(2024)Sahoo, Arriola, Schiff, Gokaslan, Marroquin, Chiu,
  Rush, and Kuleshov]{sahoo2024mdlm}
Subham~Sekhar Sahoo, Marianne Arriola, Yair Schiff, Aaron Gokaslan, Edgar
  Marroquin, Justin~T Chiu, Alexander~M Rush, and Volodymyr Kuleshov.
\newblock {Simple and Effective Masked Diffusion Language Models}.
\newblock In \emph{Advances in Neural Information Processing Systems
  (NeurIPS)}, 2024.

\bibitem[Shao et~al.(2024)Shao, Wang, Zhu, Xu, Song, Bi, Zhang, Zhang, Li, Wu,
  and Guo]{shao2024grpo}
Zhihong Shao, Peiyi Wang, Qihao Zhu, Runxin Xu, Junxiao Song, Xiao Bi, Haowei
  Zhang, Mingchuan Zhang, Y.~K. Li, Y.~Wu, and Daya Guo.
\newblock {DeepSeekMath: Pushing the Limits of Mathematical Reasoning in Open
  Language Models}.
\newblock \emph{arXiv preprint arXiv:2402.03300}, 2024.

\bibitem[Shi et~al.(2024)Shi, Han, Wang, Doucet, and Titsias]{shi2024md4}
Jiaxin Shi, Kehang Han, Zhe Wang, Arnaud Doucet, and Michalis~K. Titsias.
\newblock {Simplified and Generalized Masked Diffusion for Discrete Data}.
\newblock In \emph{Advances in Neural Information Processing Systems
  (NeurIPS)}, 2024.

\bibitem[Singhal et~al.(2025)Singhal, Horvitz, Teehan, Ren, Yu, McKeown, and
  Ranganath]{singhal2025general}
Raghav Singhal, Zachary Horvitz, Ryan Teehan, Mengye Ren, Zhou Yu, Kathleen
  McKeown, and Rajesh Ranganath.
\newblock {A General Framework for Inference-time Scaling and Steering of
  Diffusion Models}.
\newblock In \emph{International Conference on Machine Learning (ICML)}, 2025.

\bibitem[Tejaswi et~al.(2026)Tejaswi, Rout, Caramanis, Shakkottai, and
  Sanghavi]{entrgi}
Atula Tejaswi, Litu Rout, Constantine Caramanis, Sanjay Shakkottai, and Sujay
  Sanghavi.
\newblock {Entropy Aware Reward Guidance for Diffusion Language Model
  Alignment}.
\newblock \emph{arXiv preprint arXiv:2602.05000}, 2026.

\bibitem[Tsuboyama et~al.(2023)Tsuboyama, Dauparas, Chen, Laine,
  Mohseni~Behbahani, Weinstein, Mangan, Ovchinnikov, and
  Rocklin]{tsuboyama2023}
Kotaro Tsuboyama, Justas Dauparas, Jonathan Chen, Elodie Laine, Yasser
  Mohseni~Behbahani, Jonathan~J Weinstein, Niall~M Mangan, Sergey Ovchinnikov,
  and Gabriel~J Rocklin.
\newblock {Mega-scale experimental analysis of protein folding stability in
  biology and design}.
\newblock \emph{Nature}, 620:\penalty0 434--444, 2023.

\bibitem[Wang et~al.(2025)Wang, Uehara, He, Wang, Biancalani, Lal, Jaakkola,
  Levine, Wang, and Regev]{wang2025drakes}
Chenyu Wang, Masatoshi Uehara, Yichun He, Amy Wang, Tommaso Biancalani,
  Avantika Lal, Tommi Jaakkola, Sergey Levine, Hanchen Wang, and Aviv Regev.
\newblock {Fine-Tuning Discrete Diffusion Models via Reward Optimization with
  Applications to DNA and Protein Design}.
\newblock In \emph{International Conference on Learning Representations
  (ICLR)}, 2025.

\bibitem[Williams(1992)]{williams1992}
Ronald~J. Williams.
\newblock Simple statistical gradient-following algorithms for connectionist
  reinforcement learning.
\newblock \emph{Machine Learning}, 8\penalty0 (3--4):\penalty0 229--256, 1992.
\newblock \doi{10.1007/BF00992696}.

\bibitem[Wu et~al.(2023)Wu, Trippe, Naesseth, Blei, and Cunningham]{wu2023smc}
Luhuan Wu, Brian~L Trippe, Christian~A Naesseth, David~M Blei, and John~P
  Cunningham.
\newblock {Practical and Asymptotically Exact Conditional Sampling in Diffusion
  Models}.
\newblock In \emph{Advances in Neural Information Processing Systems
  (NeurIPS)}, 2023.

\bibitem[Yadala~Chanchu et~al.(2026)Yadala~Chanchu, Abdulsamad, and
  Naesseth]{yadala2026nestedsmc}
Lohithsai Yadala~Chanchu, Hany Abdulsamad, and Christian~A. Naesseth.
\newblock {Discrete Diffusion Inference-Time Control with Nested Sequential
  Monte Carlo}.
\newblock \emph{arXiv preprint arXiv:2608.20123}, 2026.

\end{thebibliography}

\newpage
\appendix
\captionsetup[sub]{aboveskip=6pt,belowskip=2pt}

\section*{Appendix Contents}
\begin{description}[leftmargin=2.6em,labelsep=0.6em,itemsep=1.5pt,topsep=3pt,font=\normalfont\bfseries]
\item[\ref{app:proofs}] Deferred Proofs\dotfill\pageref{app:proofs}
\item[\ref{app:impl}] Implementation Details\dotfill\pageref{app:impl}
\item[\ref{app:metrics}] Metric Definitions\dotfill\pageref{app:metrics}
\item[\ref{app:fullres}] Full-Result Ablation Study\dotfill\pageref{app:fullres}
\item[\ref{app:more-analysis}] Additional Analysis\dotfill\pageref{app:more-analysis}
\begin{description}[leftmargin=1.6em,labelsep=0.6em,itemsep=0.5pt,topsep=1.5pt,font=\normalfont]
\item[\ref{app:h-checks}] Reward-Scale Invariance\dotfill\pageref{app:h-checks}
\item[\ref{app:h-isolate}] Isolating the Matched Variance Reducers\dotfill\pageref{app:h-isolate}
\item[\ref{app:h-protein}] Protein Structure versus Stability\dotfill\pageref{app:h-protein}
\item[\ref{app:h-div}] Diversity Metric Blindness\dotfill\pageref{app:h-div}
\old{\item[\ref{app:h-ess}] What a Fixed Temperature Costs\dotfill\pageref{app:h-ess}}
\end{description}
\item[\ref{app:related}] Related Work and Positioning\dotfill\pageref{app:related}
\begin{description}[leftmargin=1.6em,labelsep=0.6em,itemsep=0.5pt,topsep=1.5pt,font=\normalfont]
\item[\ref{app:adaptive-smc}] Relation to Adaptive-Tempering SMC\dotfill\pageref{app:adaptive-smc}
\item[\ref{app:anatomy}] Anatomy of Guidance and Selection in Prior Samplers\dotfill\pageref{app:anatomy}
\item[\ref{app:guidance-math}] Explicit Comparison of the Guidance Signals\dotfill\pageref{app:guidance-math}
\end{description}
\item[\ref{app:selectors}] Sampler and Selector Algorithms\dotfill\pageref{app:selectors}
\item[\ref{app:pg}] Base Proposal Estimators\dotfill\pageref{app:pg}
\end{description}

\old{The four results declared in the main text are restated and then proved, in the order they appear
there; two further results, declared here, follow.}

\section{Deferred Proofs}
\label{app:proofs}

\medskip
\noindent\textbf{Lemma~\ref{lem:target}} (Targets with and without the proposal correction, restated).
\emph{Assume $\|r\|_\infty<\infty$, $\|\gamma\bm g\|_\infty<\infty$, unbiased multinomial resampling,
the deterministic all-\textsc{mask} initial state, and the terminal identity $V(\bm z_0)=r(\bx_0)$. Run
the filter with proposal $q_\gamma$ and incremental weight $w_t^k=\exp(\Delta_t^k/\alpha)$,
$\Delta_t^k=V(\bm z_s^k)-V(\bm z_t^k)$, at fixed $\alpha$. Then for every bounded $\varphi$,
\rev{$\tfrac1N\sum_k\varphi(\bx_0^k)\to\E_{\pi_\gamma}[\varphi]$} almost surely as $N\to\infty$, where
$\pi_\gamma\propto q_\gamma\,e^{r/\alpha}$; and $\pi_0=p^\star$ of Eq.~\eqref{eq:target} with $\beta=\alpha$.
Moreover, adding $\log(p_\theta/q_\gamma)$ to the log-weights replaces $\pi_\gamma$ by $p^\star$ with
$\beta=\alpha$, for every $\gamma$.}

\begin{proof}
\textbf{Step 1 (accumulated weight).} Fix a particle $k$ with trajectory $\bm z_T\to\cdots\to\bm z_0$. Its
accumulated unnormalized weight is a product of successive ratios:
\begin{align}
\prod_{t}w_t^k
&=\exp\!\Big(\tfrac1\alpha\textstyle\sum_t\big(V(\bm z_s^k)-V(\bm z_t^k)\big)\Big)\\[2pt]
&=\exp\!\Big(\tfrac1\alpha\big(V(\bm z_0^k)-V(\bm z_T^k)\big)\Big)\\[2pt]
&=\exp\!\Big(\tfrac1\alpha\big(r(\bx_0^k)-c_0\big)\Big)\\[2pt]
&=e^{-c_0/\alpha}\;e^{r(\bx_0^k)/\alpha}.
\end{align}
The second equality follows from the cancellation of successive increments; the third uses the terminal
identity $V(\bm z_0^k)=r(\bx_0^k)$ and writes $c_0:=V(\bm z_T^k)$, a constant shared by all particles
because the chain starts from the fully masked state.

\old{Therefore, the accumulated weight is a function of $\bx_0^k$ alone, up to a factor common to every
particle.}

\textbf{\old{Step 2 (the potential is positive and bounded).}} Since $V(\bm z)$ is a value of $r$ at the
greedy sequence, $\|V\|_\infty\le\|r\|_\infty$. Therefore,
\begin{equation}
w_t^k\le e^{2\|r\|_\infty/\alpha},
\qquad
G(\bx_0):=e^{r(\bx_0)/\alpha}\in(0,\infty),
\end{equation}
so the per-step weights are positive and uniformly bounded, as the cited convergence result requires, and
the common factor $e^{-c_0/\alpha}$ cancels under self-normalization.

\textbf{\old{Step 3 (Feynman--Kac limit).}} The scheme is therefore a Feynman--Kac particle model with
mutation kernel $q_\gamma$ and per-step potentials $w_t$, whose successive factors cancel to leave the
terminal potential $G$. The intermediate resampling changes the genealogy but not the terminal measure,
which is
\begin{equation}
\eta(\mathrm d\bx_0)=\frac{q_\gamma(\mathrm d\bx_0)\,G(\bx_0)}{\int q_\gamma(\mathrm d\bx_0')\,G(\bx_0')},
\end{equation}
which is exactly
\begin{equation}
\eta(\bx_0)\;\propto\; q_\gamma(\bx_0)\,e^{r(\bx_0)/\alpha}\;=\;\pi_\gamma
\end{equation}
of Eq.~\eqref{eq:target-gamma}. \rev{The accumulated potential enters as the density of this Feynman--Kac
measure relative to the proposal, not as a weight carried past the selection: resampling at every step
leaves the surviving particles unweighted, which is what Algorithm~\ref{alg:smc} returns.} Since $G$ is
bounded and strictly positive and resampling is unbiased, the particle estimator is consistent
\citep[Ch.~7]{delmoral2004feynman}. Therefore, for every bounded measurable $\varphi$,
\begin{equation}
\rev{\tfrac1N\sum_{k}\varphi(\bx_0^k)}\;\xrightarrow[N\to\infty]{\text{a.s.}}\;\eta(\varphi)=\E_{\pi_\gamma}[\varphi] .
\end{equation}
Finally, at $\gamma=0$ the guided reveal reduces to the base kernel, $q_0=p_\theta$, so
\begin{equation}
\pi_0\;\propto\; p_\theta\,e^{r/\alpha}\;=\;p^\star
\end{equation}
of Eq.~\eqref{eq:target} with $\beta=\alpha$, \old{which completes Part~(i)}.

\textbf{Part (ii).} The guided reveal renormalizes a strictly positive tilt of $p_\theta$ inside the
vocabulary, so $q_\gamma(\cdot\mid\bm z)>0$ wherever $p_\theta(\cdot\mid\bm z)>0$, and the density ratio is
bounded at a step revealing $m_t$ positions by $e^{2m_t\|\gamma\bm g\|_\infty}$, hence along the whole path by
\begin{equation}
\frac{\mathrm dp_\theta}{\mathrm dq_\gamma}(\bm z_{T:0})\;\le\;e^{2L\|\gamma\bm g\|_\infty}\;<\;\infty .
\end{equation}
\old{Augmenting} the incremental weight by the proposal correction gives
\begin{equation}
\tilde w_t^k=w_t^k\,\frac{p_\theta(\bm z_s^k\mid\bm z_t^k)}{q_\gamma(\bm z_s^k\mid\bm z_t^k)},
\qquad
\prod_t\tilde w_t^k=e^{-c_0/\alpha}\,e^{r(\bx_0^k)/\alpha}\,
\frac{\mathrm dp_\theta}{\mathrm dq_\gamma}(\bm z_{T:0}^k),
\end{equation}
\old{so the Feynman--Kac model of Steps~1--3 is run with mutation kernel $q_\gamma$ and path potential
$G\cdot\mathrm dp_\theta/\mathrm dq_\gamma$. Its normalized terminal measure is} \rev{obtained by
integrating the path measure over the intermediate states, and the ratio is a path quantity rather than a
ratio of terminal marginals, so the integral is taken before it is cancelled:}
\begin{align}
\rev{\eta(\bx_0)}&\;\rev{\propto\;e^{r(\bx_0)/\alpha}\!\int\! q_\gamma(\bm z_{T:0})\,
\frac{\mathrm dp_\theta}{\mathrm dq_\gamma}(\bm z_{T:0})\,\mathrm d\bm z_{T:1}
\;=\;e^{r(\bx_0)/\alpha}\!\int\! p_\theta(\bm z_{T:0})\,\mathrm d\bm z_{T:1}}\notag\\
&\;\rev{=\;p_\theta(\bx_0)\,e^{r(\bx_0)/\alpha}\;=\;p^\star .}
\end{align}
The correction is given in closed form in Eq.~\eqref{eq:iscorr}.

\old{Therefore, the target is $p^\star$ of Eq.~\eqref{eq:target} with $\beta=\alpha$ for every $\gamma$,
which is Part~(ii).}
\end{proof}

\medskip
\noindent\textbf{Proposition~\ref{prop:efftemp}} (\old{Effective temperature of the guided proposal,
restated}).
\emph{\old{Under \textup{(A1)--(A3)} below, let $E(\gamma)$ be the bound of
Eq.~\eqref{eq:efftemp-bound}, which is $O(\gamma)$. Then the terminal log-twist satisfies
$\Phi_\gamma(\bx_0)=\kappa\gamma\,r(\bx_0)+c+\varepsilon(\bx_0)$ with $c$ independent of $\bx_0$,
$|\varepsilon|\le E(\gamma)$, and $\kappa>0$ any tangential scale for the estimator that carries the reward
gradient into the clean-token logits, against which $\delta_P$ is measured. Consequently
$\pi_\gamma(\bx_0)\propto p_\theta(\bx_0)\,e^{(\kappa\gamma+1/\alpha)r(\bx_0)+\varepsilon(\bx_0)}$, so
omitting the proposal correction leaves the target in the reward-tilted family of Eq.~\eqref{eq:target} at
inverse temperature $\kappa\gamma+1/\alpha$, exactly when $\varepsilon$ is constant in $\bx_0$ and up to the
factor $e^{\varepsilon}$ otherwise. At $\gamma=0$, $E(0)=0$ and $\beta=\alpha$ exactly.}}

\old{\noindent\textbf{Notation.} Let $\tilde r$ be the relaxation of $r$ to the product simplex that the guidance path
differentiates, so that $\tilde r(\mathrm{onehot}(\bx))=r(\bx)$. Write the soft representation of a state
$\bm z$ as $S(\bm z)$, with row $\ell$ equal to $\bm e_{z^\ell}$ at a revealed position and to
$\bm p^\ell=\softmax(\etab^\ell)$ at a masked one, and set $\tilde V(\bm z)=\tilde r(S(\bm z))$, so that
$\tilde V(\bm z_0)=r(\bx_0)$. It is distinct from the selector's greedy value $V$ of \S\ref{sec:filter}; the two agree at $\bm z_0$. Write $\bu^\ell$ for the simplex coordinate at position $\ell$, that is row $\ell$ of
$S(\bm z)$, and let $\bm h^\ell=\nabla_{\bu^\ell}\tilde r(S(\bm z))$ be the simplex gradient at position
$\ell$, with $\bm h$ the concatenation of the $\bm h^\ell$ over positions. The chain starts from the
deterministic all-\textsc{mask} state, so $\tilde V(\bm z_T)$ is common to every trajectory.}

\old{\noindent\textbf{Two log-twists.} The \emph{trajectory} twist
$\bar\Phi_\gamma(\bm z_{T:0})=\log\frac{q_\gamma(\bm z_{T:0})}{p_\theta(\bm z_{T:0})}$ is what the guided
reveal accumulates step by step; the \emph{terminal} twist $\Phi_\gamma(\bx_0)$ of Eq.~\eqref{eq:phi} is what
Eq.~\eqref{eq:target-gamma} needs. Steps~1--4 bound the first and Step~5 passes to the second.}

\old{\noindent\textbf{Assumptions.} We assume:}
\begin{itemize}[leftmargin=3.2em,labelsep=0.6em,align=left,itemsep=2pt,topsep=3pt]
\item[(A1)] \emph{Estimator form.} $\E[\tilde{\bm g}^\ell]=\bm g^\ell=\bm P^\ell\bm h^\ell$, the expectation
being over the estimator's own randomness. Let $\Pi^\ell$ be the orthogonal projector onto the tangent space
of the simplex at $\bm p^\ell$. \rev{Take $\kappa>0$ to be the scale that minimizes the anisotropy,
$\kappa=\arg\min_{\kappa'>0}\sup_\ell\|\bm P^\ell-\kappa'\Pi^\ell\|_{\mathrm{op}}$, so that $\kappa$ and
hence $\kappa\gamma+1/\alpha$ are determined rather than free}, and set the anisotropy
$\delta_P=\sup_\ell\|\bm P^\ell-\kappa\Pi^\ell\|_{\mathrm{op}}$, one pair for all positions and steps.
\item[(A2)] \emph{Smoothness.} $\nabla\tilde r$ is $L_2$-Lipschitz on the product simplex.
\item[(A3)] \emph{Boundedness.} $\|\tilde{\bm g}^\ell\|_\infty\le G$, and write
$\Delta_g=\max_\ell\|\tilde{\bm g}^\ell-\bm g^\ell\|_\infty$, so $\Delta_g\le 2G$.
\end{itemize}

\begin{proof}
\old{Steps~1 and~2 make the trajectory twist explicit and expand it in $\gamma$; Step~3 replaces the
realized estimator by $\kappa$ times the simplex gradient, which is where the reward enters linearly;
Step~4 sums over positions to produce $r(\bx_0)-\tilde V(\bm z_T)$; Step~5 passes to the terminal twist.}

\textbf{Step 1 (exact, along a trajectory).} \old{Eq.~\eqref{eq:phi-exact} below is an identity between
path densities, not between terminal marginals; the two are related in Step~5.} Every position is revealed
exactly once. Summing Eq.~\eqref{eq:iscorr} over the trajectory, negating, and writing $k_\ell=x_0^\ell$ for
the token committed at position $\ell$, we obtain
\begin{equation}
\label{eq:phi-exact}
\bar\Phi_\gamma(\bm z_{T:0})=\sum_{\ell=1}^{L}\Big[\gamma\tilde g^\ell_{k_\ell}
-\big(\lse(\etab^\ell+\gamma\tilde{\bm g}^\ell)-\lse(\etab^\ell)\big)\Big],
\end{equation}
with $\etab^\ell$ evaluated at the step at which $\ell$ was revealed. Eq.~\eqref{eq:iscorr} is written in
\rev{the gauge-shifted $\hat{\bm g}=\bm g-c\bm 1$}; since both terms above shift together under
$\bm g\mapsto\bm g+c\bm 1$, the realized estimator $\tilde{\bm g}$ may be substituted for it without
changing the value.

\old{Therefore, $\bar\Phi_\gamma$ is exactly the displayed sum over positions.}

\textbf{Step 2 (expansion).} Since
$\lse(\etab+\gamma\tilde{\bm g})-\lse(\etab)=\gamma\langle\bm p,\tilde{\bm g}\rangle
+\tfrac{\gamma^2}{2}\mathrm{Var}_{\bm p}(\tilde{\bm g})+O(\gamma^3)$, Eq.~\eqref{eq:phi-exact} gives
\begin{equation}
\label{eq:phi-exp}
\bar\Phi_\gamma=\gamma\sum_\ell\big\langle\tilde{\bm g}^\ell,\ \bm e_{k_\ell}-\bm p^\ell\big\rangle
-\frac{\gamma^2}{2}\sum_\ell\mathrm{Var}_{\bm p^\ell}(\tilde{\bm g}^\ell)+O(\gamma^3),
\end{equation}
still in the realized $\tilde{\bm g}$. Both terms are invariant under $\tilde{\bm g}\mapsto\tilde{\bm g}+c\bm 1$,
the gauge freedom of the softmax parameterization; the constant in $\tilde{\bm g}$ therefore does not reach
$\pi_\gamma$.

\old{Therefore, to second order, $\bar\Phi_\gamma$ is the displayed linear form in the realized
estimator.}

\textbf{Step 3 (identification).} Split the realized estimator into its mean and its deviation.
Since $\|\bm e_k-\bm p^\ell\|_1\le 2$, H\"older's inequality gives
\begin{equation}
\big|\langle\tilde{\bm g}^\ell-\bm g^\ell,\ \bm e_k-\bm p^\ell\rangle\big|
\;\le\;2\|\tilde{\bm g}^\ell-\bm g^\ell\|_\infty\;\le\;2\Delta_g .
\end{equation}
Accordingly, the deviation contributes at most $2\gamma L\Delta_g$ in total.
For the mean, $\Pi^\ell$ is an orthogonal projector, hence self-adjoint, and $\bm e_k-\bm p^\ell$ lies in its
range, so $\langle\kappa\Pi^\ell\bm h^\ell,\bm e_k-\bm p^\ell\rangle=\kappa\langle\bm h^\ell,\bm e_k-\bm
p^\ell\rangle$. Therefore, by (A1) and $\|\bm e_k-\bm p^\ell\|\le\sqrt2$,
\begin{equation}
\big|\langle\bm g^\ell,\bm e_k-\bm p^\ell\rangle-\kappa\langle\bm h^\ell,\bm e_k-\bm p^\ell\rangle\big|
\;\le\;\sqrt2\,\delta_P\|\bm h^\ell\| .
\end{equation}

\old{Therefore, the realized estimator may be replaced by $\kappa\bm h^\ell$, at a cost of $2\Delta_g$ per
position from the deviation and $\sqrt2\,\delta_P\|\bm h^\ell\|$ from the anisotropy. This is where the
reward gradient enters, and with it the linear dependence on $r$.}

\textbf{Step 4a (per-step bounds).} Let $m_t$ be the number of positions revealed at step $t$, so $\sum_t m_t=L$.
Let $\bm d_t$ collect the displacements $\bm e_{k_\ell}-\bm p^\ell$ of those positions, zero-padded to all
$L$ rows, and let $\bm D_t=S(\bm z_s)-S(\bm z_t)$ be the full displacement of the soft representation. The
two differ because revealing a token re-runs the denoiser, so the rows that stay masked move as well; write
\begin{equation}
\label{eq:deltaS}
\delta_S=\sum_t\big\|\bm D_t-\bm d_t\big\| ,
\end{equation}
a quantity determined by the trajectory.

The linear form is additive across positions, so $\sum_{\ell\in t}\langle\bm h^\ell,\bm e_{k_\ell}-\bm
p^\ell\rangle=\langle\bm h,\bm d_t\rangle$ exactly. By (A2),
\begin{equation}
\big|\langle\bm h,\bm D_t\rangle-\big(\tilde V(\bm z_s)-\tilde V(\bm z_t)\big)\big|\;\le\;\tfrac{L_2}{2}\|\bm D_t\|^2 ,
\end{equation}
and by Cauchy--Schwarz,
\begin{equation}
\big|\langle\bm h,\bm D_t\rangle-\langle\bm h,\bm d_t\rangle\big|
\;\le\;\|\bm h\|\,\|\bm D_t-\bm d_t\| ,
\qquad \|\bm h\|=\Big(\textstyle\sum_\ell\|\bm h^\ell\|^2\Big)^{1/2} .
\end{equation}

\old{Therefore, at each step, the linear form differs from the increment of $\tilde V$ by a curvature term
and a clean-prediction drift term.}

\textbf{\old{Step 4b (accumulation).}} The blocks of $\bm d_t$ are orthogonal and $\|\bm e_k-\bm p\|^2\le2$, so $\|\bm d_t\|^2\le 2m_t$.
Therefore, $\|\bm D_t\|^2\le 2\|\bm d_t\|^2+2\|\bm D_t-\bm d_t\|^2$, and summing over steps,
\begin{equation}
\label{eq:Dsum}
\sum_t\|\bm D_t\|^2\;\le\;4\sum_t m_t+2\sum_t\|\bm D_t-\bm d_t\|^2\;\le\;4L+2\delta_S^2 ,
\end{equation}
where the second inequality uses $\sum_t a_t^2\le(\sum_t a_t)^2$ for $a_t\ge0$.

The increments cancel along the trajectory, leaving
$\sum_t\big(\tilde V(\bm z_s)-\tilde V(\bm z_t)\big)=\tilde V(\bm z_0)-\tilde V(\bm z_T)=r(\bx_0)-\tilde
V(\bm z_T)$, with $\tilde V(\bm z_T)$ the value of the all-\textsc{mask} state, common to every
trajectory. Collecting the remainders of Steps~2--4 and taking the supremum over trajectories, we obtain
\begin{equation}
\label{eq:efftemp-bound}
\begin{aligned}
\sup\Big|\bar\Phi_\gamma-\kappa\gamma\big(r(\bx_0)-\tilde V(\bm z_T)\big)\Big|&\;\le\;E(\gamma),\\[3pt]
E(\gamma)&=\gamma\kappa\Big(L_2\big(2L+\delta_S^2\big)+\|\bm h\|\,\delta_S\Big)\\[2pt]
&\quad+\gamma\Big(\sqrt2\,\delta_P\,\textstyle\sum_\ell\|\bm h^\ell\|+2L\Delta_g\Big)
+\tfrac{\gamma^2}{2}LG^2+O\big(\gamma^3LG^3\big),
\end{aligned}
\end{equation}
where $\|\bm h\|$, $\sum_\ell\|\bm h^\ell\|$, $\delta_S$ and $\Delta_g$ are read at their trajectory suprema, each finite
because the product simplex is compact and $T$ is finite.

\old{Therefore, $\bar\Phi_\gamma$ equals $\kappa\gamma(r(\bx_0)-\tilde V(\bm z_T))$ up to $E(\gamma)$,
uniformly over trajectories.}

\textbf{Step 5 (trajectory to terminal).} Marginalizing the path measure over the trajectories that end
at $\bx_0$ gives
$e^{\Phi_\gamma(\bx_0)}=\E_{p_\theta(\bm z_{T:0}\mid\bx_0)}\big[e^{\bar\Phi_\gamma}\big]$. The leading term
of Eq.~\eqref{eq:efftemp-bound} is a function of $\bx_0$ alone, hence factors out of the expectation, and the
remainder is bounded uniformly over trajectories. Therefore,
\begin{equation}
\big|\Phi_\gamma(\bx_0)-\kappa\gamma\big(r(\bx_0)-\tilde V(\bm z_T)\big)\big|
\;\le\;\log\E\big[e^{|\bar\Phi_\gamma-\kappa\gamma(r-\tilde V(\bm z_T))|}\big]\;\le\;E(\gamma).
\end{equation}

\old{Therefore, $\Phi_\gamma(\bx_0)=\kappa\gamma\,r(\bx_0)+c+\varepsilon(\bx_0)$ with
$c=-\kappa\gamma\tilde V(\bm z_T)$ and $|\varepsilon|\le E(\gamma)$, which is
Proposition~\ref{prop:efftemp}.}
\end{proof}

\begin{remark}[What the bound does and does not give]
\label{rem:efftemp}
\old{$E(\gamma)$ is $O(\gamma)$ as $\gamma\to0$ and grows with the sequence length $L$, while the leading term
does not. The bound therefore constrains the \emph{form} of the relation, namely that $\Phi_\gamma$ is linear in
$r$ with slope $\kappa\gamma$, rather than certifying a small relative error at large $L$, and it does not
predict $\kappa$. Of the four
controlling quantities, the reductions of \S\ref{sec:proposal} act on $\Delta_g$, so the variance
reduction and the omitted correction bear on the same quantity.}
\end{remark}

\medskip
\noindent\textbf{Lemma~\ref{lem:baseline}} (Baseline invariance of the resampling law, restated with the
leave-one-out case).
\emph{Fix a scale $c>0$, and for increments $\{\Delta^k\}_{k=1}^{N}\subset\R$ and baselines
$\{b^k\}_{k=1}^N$ let $\hat w^k\propto\exp\!\big((\Delta^k-b^k)/c\big)$, with $\mu=\tfrac1N\sum_j\Delta^j$.
Then \textup{(i)} for a constant baseline $b^k\equiv b$, $\hat w^k=\mathrm{softmax}_k(\Delta^k/c)$,
independent of $b$ and a function of the centered increments $\Delta^k-\mu$ only; and \textup{(ii)} for the
leave-one-out baseline $b^k=\tfrac1{N-1}\sum_{j\neq k}\Delta^j$ with bounded increments,
$\log\hat w^k$ equals its value in \textup{(i)} up to an additive $O\!\big(1/(N{-}1)\big)$.}

\begin{proof}
\textbf{Part (i): constant baseline.} With $b^k\equiv b$, the factor $e^{-b/c}$ appears in both numerator and
denominator and cancels:
\begin{equation}
\hat w^k=\frac{e^{(\Delta^k-b)/c}}{\sum_{j}e^{(\Delta^j-b)/c}}
=\frac{e^{\Delta^k/c}}{\sum_{j}e^{\Delta^j/c}}
=\operatorname{softmax}_k\!\big(\Delta^k/c\big).
\end{equation}
Thus $\hat w^k$ is independent of $b$. Since $\operatorname{softmax}$ is invariant under a common additive
shift of its arguments,
\begin{equation}
\operatorname{softmax}_k\!\big(\Delta^k/c\big)=\operatorname{softmax}_k\!\big((\Delta^k-\mu)/c\big),
\end{equation}
so the law depends on $\{\Delta^k\}$ only through the centered increments $\{\Delta^k-\mu\}$.

\textbf{Part (ii): leave-one-out baseline.} Rewrite the baseline in terms of the group mean $\mu$:
\begin{align}
b^k&=\frac{1}{N-1}\sum_{j\neq k}\Delta^j\\[2pt]
&=\frac{1}{N-1}\big(N\mu-\Delta^k\big)\\[2pt]
&=\mu+\frac{\mu-\Delta^k}{N-1}.
\end{align}
Define $\rho^k:=\dfrac{\mu-\Delta^k}{(N-1)c}$. Then the exponent differs from part (i) by exactly
$\rho^k$,
\begin{equation}
\frac{\Delta^k-b^k}{c}=\frac{\Delta^k-\mu}{c}-\rho^k,
\end{equation}
and if $|\Delta^k|\le B$ for all $k$,
\begin{equation}
|\rho^k|\le\frac{|\mu|+|\Delta^k|}{(N-1)c}\le\frac{2B}{(N-1)c}=O\!\big(1/(N{-}1)\big).
\end{equation}
Let $\ell^k:=\log\hat w^k$ under the leave-one-out baseline and $\ell^k_0$ its value from part (i).
Subtracting the two normalized log-weights gives
\begin{equation}
\ell^k-\ell^k_0
=-\rho^k+\log\frac{\sum_j e^{(\Delta^j-\mu)/c}}{\sum_j e^{(\Delta^j-\mu)/c}\,e^{-\rho^j}}.
\end{equation}
Because each summand in the denominator is the corresponding numerator summand scaled by
$e^{-\rho^j}\in\big[e^{-\max_j|\rho^j|},\,e^{\max_j|\rho^j|}\big]$, the log-ratio lies in
$\big[-\max_j|\rho^j|,\,\max_j|\rho^j|\big]$, so
\begin{equation}
\big|\ell^k-\ell^k_0\big|\le 2\max_j|\rho^j|=O\!\big(1/(N{-}1)\big).
\end{equation}
Hence, the resampling law is exactly invariant to any constant baseline, and invariant to the leave-one-out
baseline up to $O(1/(N{-}1))$.
\end{proof}

\begin{remark}[Leave-one-out baseline in the selector]
\label{rem:loo}
Replacing the group mean $\mu$ by the leave-one-out baseline changes $\log\hat w^k$ by
$O\!\big(1/(N{-}1)\big)$, so the selector is insensitive to which of the two is used. This is the prediction
tested in Table~\ref{tab:rloo}. Since the group-mean shift cancels exactly, the only operative effect of
standardization on the resampling law is the replacement $\alpha\mapsto\alpha\sigma$.
\end{remark}

\medskip
\noindent\textbf{Proposition~\ref{prop:at}} (Target of $\SMCAT$, restated).
\emph{Assume $\|r\|_\infty<\infty$, $\alpha>0$, $\sigma_t\ge\epsilon>0$, and multinomial resampling at every
step, and assume as in Lemma~\ref{lem:target} that the chain starts from the deterministic
all-\textsc{mask} state and that $V(\bm z_0)=r(\bx_0)$. The accumulated weight of a particle is proportional to $e^{R_\sigma}$ with $R_\sigma=\sum_t\Delta_t/\alpha_t$ and
$\alpha_t=\alpha\sigma_t$.
\textup{(i)} Let $\varphi$ be bounded and measurable, and assume the empirical $\mu_t,\sigma_t$ admit
deterministic limits $\mu_t^\star,\sigma_t^\star$ and that the map from the empirical measure to
$(\mu_t,\sigma_t)$ satisfies the regularity conditions of \citet{beskos2016convergence}. Let $\pi^\star_{\SMCAT}\propto q_\gamma
e^{R_{\sigma^\star}}$ be the path measure built from those limits. Then the self-normalized $\SMCAT$
estimator converges \emph{in probability} to $\E_{\pi^\star_{\SMCAT}}[\varphi(\bx_0)]$.
\textup{(ii)} If $\sigma^\star_t\equiv\sigma$, then the $\bx_0$-marginal of $\pi^\star_{\SMCAT}$ is
$\pi_\gamma$ at temperature $\alpha\sigma$.}

\noindent\textbf{Setup.} At each step $\SMCAT$ alternates a mutation (one guided reveal $\bm z_s^k\sim
q_\gamma(\cdot\mid\bm z_t^k)$ of Lemma~\ref{lem:target}) with a selection that resamples the $N$
particles with probabilities proportional to the potential
\begin{equation}
G_t(\bm z_s^k)=\exp\!\Big(\tfrac{\Delta_t^k-\mu_t}{\alpha\sigma_t}\Big),\qquad
\Delta_t^k=V(\bm z_s^k)-V(\bm z_t^k),
\end{equation}
with $\mu_t,\sigma_t$ the mean and standard deviation of $\{\Delta_t^k\}_{k=1}^{N}$, so the scheme is a
Feynman--Kac model with per-step potentials $G_t$. \rev{The accumulated potential
$G_{0:T}^k=\prod_t w_t^k$ is the density of the target path measure relative to the proposal; with
resampling at every step the surviving particles are unweighted.}

\begin{proof}

\textbf{\old{Step 1 (accumulated weight).}} \old{The population constants $e^{-\mu_t/\alpha_t}$ are common
to the $N$ particles at step $t$, so they cancel in \rev{$G_{0:T}^k$} by Lemma~\ref{lem:baseline}(i). Therefore,}
\begin{equation}
G_{0:T}^k\;\propto\;\exp\big(R_\sigma^k\big),
\qquad
R_\sigma^k=\sum_t\frac{\Delta_t^k}{\alpha_t},
\qquad \alpha_t=\alpha\sigma_t,
\end{equation}
\old{which is the first claim of the proposition. Because $\alpha_t$ varies with $t$, the increments do not
cancel and $R_\sigma$ is a functional of the whole trajectory, so the reweighted measure lives on paths;
the estimator averages $\varphi(\bx_0)$, a bounded function of the terminal state, so its limit is an
expectation under the $\bx_0$-marginal of that path measure.}

\textbf{Step \old{2}: the potentials are bounded above and below.} Since $V$ is bounded by $\|r\|_\infty$, both
$\Delta_t^k$ and $\mu_t$ lie in $[-2\|r\|_\infty,2\|r\|_\infty\,]$, so
\begin{equation}
\big|\Delta_t^k-\mu_t\big|\;\le\;4\|r\|_\infty .
\end{equation}
The floor $\sigma_t\ge\epsilon$ gives $\alpha\sigma_t\ge\alpha\epsilon>0$. Therefore,
\begin{equation}
0\;<\;e^{-4\|r\|_\infty/(\alpha\epsilon)}\;\le\;G_t\;\le\;e^{4\|r\|_\infty/(\alpha\epsilon)}\;<\;\infty .
\end{equation}

\textbf{Step \old{3}: placing the scheme in the framework of \citet{beskos2016convergence}.}
Their framework covers a Feynman--Kac flow whose potential at step $t$ is parameterized by a bounded summary
statistic of the particle population that is about to be reweighted, resampled multinomially at every step.
\old{It asks for three things.}
\begin{itemize}[leftmargin=2.4em,labelsep=0.6em,align=left,itemsep=2pt,topsep=3pt]
\item[\old{(a)}] \old{\emph{A bounded summary statistic.} Ours is $(\mu_t,\sigma_t)$, the first two
empirical moments of $\{\Delta_t^k\}$, bounded by Step~2 through $\|r\|_\infty<\infty$ and the floor
$\epsilon$. \emph{Verified.}}
\item[\old{(b)}] \old{\emph{Multinomial resampling at every step.} A hypothesis of the proposition.
\emph{Assumed as a hypothesis.}}
\item[\old{(c)}] \old{\emph{Deterministic limits and regularity.} That the empirical moments admit
deterministic limits, and that the map from the empirical measure to $(\mu_t,\sigma_t)$, which is non-linear
in $\sigma_t$, satisfies their regularity conditions. \emph{Assumed, and carried in the hypotheses of
part~(i).}}
\end{itemize}
Therefore, by their Theorems~3.1 and~3.2, for every bounded $\varphi$,
\begin{equation}
\rev{\tfrac1N\sum_{k}\varphi(\bx_0^k)}\;\xrightarrow[N\to\infty]{\ \mathbb P\ }\;\E_{\pi^\star_{\SMCAT}}[\varphi],
\end{equation}

\old{Therefore, the self-normalized estimator converges in probability to
$\E_{\pi^\star_{\SMCAT}}[\varphi]$ under the $\bx_0$-marginal of $\pi^\star_{\SMCAT}$, which is
part~(i).}

\textbf{Step \old{4}: where our statistic sits.} Their adaptation uses the population that is about to be
reweighted. The same holds here:
\begin{equation}
(\mu_t,\sigma_t)=\text{moments of }\{\Delta_t^k\}_{k=1}^{N},
\qquad
\Delta_t^k=V(\bm z_s^k)-V(\bm z_t^k),
\qquad
\hat w^k\propto G_t(\bm z_s^k),
\end{equation}
so the statistic is read off the mutated states $\{\bm z_s^k\}$ and the potential is applied to those same
$N$ particles. Accordingly, the statistic and the reweighting act on one population, as their structure
requires it.

\textbf{Step \old{5}: the constant-$\sigma$ case.} Let $\sigma^\star_t\equiv\sigma$, and evaluate the limiting
potentials at that value. Then
\begin{equation}
G_t(\bm z)=\exp\!\Big(\tfrac{\Delta_t(\bm z)-\mu_t}{\alpha\sigma}\Big)
=e^{-\mu_t/(\alpha\sigma)}\,\exp\!\Big(\tfrac{\Delta_t(\bm z)}{\alpha\sigma}\Big)
\;\propto\;\exp\!\Big(\tfrac{\Delta_t(\bm z)}{\alpha\sigma}\Big),
\end{equation}
the last step because the population constant $e^{-\mu_t/(\alpha\sigma)}$ cancels in the resampling softmax
by Lemma~\ref{lem:baseline}(i). This is the fixed-temperature potential of Lemma~\ref{lem:target} with
$\alpha$ replaced by $\alpha\sigma$, whose increments cancel along the trajectory. Therefore,
\begin{equation}
R_\sigma=\sum_t\frac{\Delta_t}{\alpha\sigma}=\frac{r(\bx_0)-V(\bm z_T)}{\alpha\sigma},
\end{equation}
which is $r/(\alpha\sigma)$ up to an additive constant.

\old{Therefore, the $\bx_0$-marginal of $\pi^\star_{\SMCAT}$ is $\pi_\gamma$ at temperature
$\alpha\sigma$, which is part~(ii).}
\end{proof}

\begin{lemma}[Optimal constant baseline for the score-function correction]
\label{lem:rloo}
Let $x\sim\bm p$ on a finite vocabulary with $\bm p$ non-degenerate, $\hat\bx=\mathrm{onehot}(x)$ and
$R=r(x)$. For $b\in\R$ define the single-sample correction
$\bm u(b)=(R-b)(\hat\bx-\bm p)$ and write $Q=\|\hat\bx-\bm p\|^2$, so $\E[Q]>0$. Then \textup{(i)}
$\E[\bm u(b)]=\nabla_{\etab}\E[r]$ for every $b$; and \textup{(ii)} the variance of $\bm u(b)$ is
minimized at $b^\star=\E[RQ]/\E[Q]$, and if $b^\star>0$ it is strictly smaller than that of $\bm u(0)$ for
every $b\in(0,2b^\star)$.
\end{lemma}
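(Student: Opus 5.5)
For part~(i) I will use the score-function identity. Since $\bm p=\softmax(\etab)$, we have $\nabla_{\etab}\log p_x=\hat\bx-\bm p$. This gives $\nabla_{\etab}\E[r]=\sum_x p_x r(x)(\hat\bx-\bm p)=\E[R(\hat\bx-\bm p)]$. The baseline term vanishes in expectation: $\E[\hat\bx-\bm p]=\bm p-\bm p=\bm 0$, so $\E[b(\hat\bx-\bm p)]=\bm 0$ for every constant $b$. Hence $\E[\bm u(b)]=\nabla_{\etab}\E[r]$ for all $b$.

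For part~(ii) I will take the variance of the vector $\bm u(b)$ to mean its total variance, $\E\|\bm u(b)-\E\bm u(b)\|^2=\E\|\bm u(b)\|^2-\|\E\bm u(b)\|^2$. By part~(i) the subtracted mean term does not depend on $b$, so only the second moment matters. Expanding it, I get
\[
\E\|\bm u(b)\|^2=\E[(R-b)^2Q]=\E[R^2Q]-2b\,\E[RQ]+b^2\,\E[Q].
\]
This is a quadratic in $b$ with leading coefficient $\E[Q]>0$. Positivity holds because $\bm p$ is non-degenerate: $Q=\|\hat\bx-\bm p\|^2=1-2p_x+\|\bm p\|^2$, and this is positive whenever $\bm p$ is not a vertex. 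The quadratic is therefore strictly convex and minimized uniquely at $b^\star=\E[RQ]/\E[Q]$.

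For the interval claim I compare against $b=0$ directly:
\[
\mathrm{Var}(\bm u(b))-\mathrm{Var}(\bm u(0))=b^2\E[Q]-2b\,\E[RQ]=b\,\E[Q]\,(b-2b^\star).
\]
Since $\E[Q]>0$, this difference is strictly negative exactly when $b$ lies strictly between $0$ and $2b^\star$. When $b^\star>0$ that is the interval $(0,2b^\star)$.

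The only step needing care is the meaning of ``variance'' for a vector-valued estimator. I will fix it as the trace of the covariance (the expected squared norm about the mean). The same quadratic argument then goes through unchanged, because the mean is $b$-invariant. I expect no real obstacle beyond stating that convention and the non-degeneracy needed for $\E[Q]>0$.
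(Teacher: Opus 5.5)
Your proposal is correct and follows the paper's proof essentially line for line. Part (i) goes through the score/softmax-Jacobian identity with $\E[\hat\bx-\bm p]=\bm 0$; part (ii) uses the $b$-invariant mean to reduce to $\E\|\bm u(b)\|^2=\E[R^2Q]-2b\,\E[RQ]+b^2\E[Q]$, then reads off $b^\star$ and the difference $b\,\E[Q](b-2b^\star)$ against $b=0$, with your explicit $Q=1-2p_x+\|\bm p\|^2$ check for $\E[Q]>0$ being a small addition the paper takes directly from non-degeneracy.
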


\noindent Part~\textup{(i)} is the Reinforce unbiasedness of \citet[Thm.~1]{williams1992} and
part~\textup{(ii)} is \citet[Thm.~11]{greensmith2004variance} specialized to a categorical distribution,
where $\nabla_{\etab}\log p(x)=\hat\bx-\bm p$; the interval $(0,2b^\star)$ is a one-line consequence of the
excess-variance identity in that theorem rather than a separate result of theirs. We give the short proof
in our notation.

\begin{proof}
\textbf{Part (i).} Since $\E[\hat\bx-\bm p]=\bm 0$, the baseline term vanishes. Therefore,
\begin{equation}
\nabla_{\eta_j}\E[r]=\sum_k r_k\,p_k(\delta_{kj}-p_j)=\E\big[R(\hat x_j-p_j)\big]=\E\big[u_j(b)\big] ,
\end{equation}
the second equality being the softmax Jacobian $\partial p_k/\partial\eta_j=p_k(\delta_{kj}-p_j)$.

\textbf{Part (ii).} By part~(i) the mean of $\bm u(b)$ does not depend on $b$, so comparing variances is
comparing second moments. Accordingly,
\begin{equation}
\E\|\bm u(b)\|^2=\E\big[(R-b)^2Q\big]=\E[R^2Q]-2b\,\E[RQ]+b^2\,\E[Q] ,
\end{equation}
a quadratic in $b$, strictly convex because $\E[Q]>0$, whose minimizer is $b^\star=\E[RQ]/\E[Q]$.
Subtracting the value at $b=0$ gives
\begin{equation}
\E\|\bm u(b)\|^2-\E\|\bm u(0)\|^2=b\,\E[Q]\,\big(b-2b^\star\big) ,
\end{equation}
which is negative exactly when $0<b<2b^\star$.
\end{proof}

\begin{remark}[Leave-one-out as a plug-in]
\label{rem:rloo-plugin}
The leave-one-out baseline estimates $\E[R]$ from the remaining samples of the group. When $R$ and $Q$ are
uncorrelated, $b^\star=\E[R]$, so it is then an unbiased plug-in for the variance-optimal constant baseline;
in general it removes the reward's mean offset without matching $b^\star$ exactly, and part~(ii) requires
$b^\star>0$. Two further gaps separate the lemma from $\bm g_{\mathrm{RO}}$ of Eq.~\eqref{eq:rloo}: its
baseline is random rather than constant, and the $n$ leave-one-out baselines are coupled, so the variance of
the averaged estimator carries cross terms the single-sample comparison does not see. Unbiasedness survives
both, since $b^i$ is independent of sample $i$; the variance reduction is the motivation for the estimator,
and Table~\ref{tab:rloo} is what tests it.
\end{remark}

\medskip
\noindent The last result concerns the placement of the tilt and is used in
Appendix~\ref{app:guidance-math}, where the two proposal forms it compares, Eq.~\eqref{eq:posthoc} and
Eq.~\eqref{eq:inside}, are derived.

\begin{proposition}[Schedule leakage]
\label{prop:leak}
Fix a masked position $\ell$ and a step with $a_t\in(0,1)$, let $\gamma>0$, and write
$Z^\ell=\E_{p_\theta^\ell}[e^{\gamma g}]$ for the tilt normalizer of Eq.~\eqref{eq:posthoc}. Let
$\tilde a_t^\ell=1-q^\ell(\textsc{mask})$ be the reveal probability actually induced by the guided proposal.
Then
\begin{equation}
\label{eq:leak}
\tilde a_t^{\ell,\,\mathrm{post}}=\frac{a_t Z^\ell}{1-a_t+a_t Z^\ell},
\qquad
\tilde a_t^{\ell,\,\mathrm{ours}}=a_t .
\end{equation}
Consequently:
\begin{itemize}[leftmargin=2.4em,labelsep=0.6em,align=left,itemsep=2pt,topsep=3pt]
\item[(a)] $\tilde a_t^{\ell,\,\mathrm{post}}$ is strictly increasing in $Z^\ell$, with
$\tilde a_t^{\ell,\,\mathrm{post}}\gtrless a_t$ according as $Z^\ell\gtrless1$.
\item[(b)] Under the gauge $\E_{p_\theta^\ell}[g]=0$, Jensen's inequality gives $Z^\ell\ge1$, with equality
iff $g$ is $p_\theta^\ell$-almost surely constant, so unless $g$ is degenerate guidance strictly accelerates
unmasking.
\item[\old{(b$'$)}] \old{Under that same gauge, $Z^\ell$ is nondecreasing in $\gamma$, so by part~(a) the
acceleration grows with $\gamma$ by a state- and step-dependent amount.}
\item[(c)] $q^\ell$ of Eq.~\eqref{eq:inside} is exactly invariant under
$\bm g^\ell\mapsto\bm g^\ell+c\bm 1$, whereas $\tilde a_t^{\ell,\,\mathrm{post}}\to1$ as $c\to+\infty$ and
$\to0$ as $c\to-\infty$.
\end{itemize}
\old{\noindent Therefore, the post-hoc reveal probability is not determined by the guidance signal itself,
whereas the logit-space one is.}
\end{proposition}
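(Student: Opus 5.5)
The plan is to compute the marginal probability of \textsc{mask} under each of the two proposal forms at a single masked position. Everything else in the statement then follows from elementary monotonicity and limits. I take Eq.~\eqref{eq:posthoc} to be the post-hoc tilt of the kernel Eq.~\eqref{eq:revkernel}, that is, $q^\ell(y)\propto p_\theta(z_s^\ell=y\mid\bm z_t)\,e^{\gamma g_y}$ with $g_{\textsc{mask}}=0$. I take Eq.~\eqref{eq:inside} to be the logit-space form Eq.~\eqref{eq:direction}.

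\textbf{Step 1: the two reveal probabilities.} For the post-hoc form, the unnormalized masses are $1-a_t$ on \textsc{mask} and $a_t p_\theta^\ell(k)e^{\gamma g_k}$ on token $k$. Their total is $1-a_t+a_tZ^\ell$. Hence $q^\ell(\textsc{mask})=(1-a_t)/(1-a_t+a_tZ^\ell)$, and so $\tilde a_t^{\ell,\mathrm{post}}=a_tZ^\ell/(1-a_t+a_tZ^\ell)$. For the logit-space form, the \textsc{mask} weight $1-a_t$ is never touched, because $\hat p^\ell$ is already normalized inside the softmax. So $\tilde a_t^{\ell,\mathrm{ours}}=a_t$ identically. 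This gives Eq.~\eqref{eq:leak}.

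\textbf{Step 2: parts (a), (b), (b$'$).} Write $f(Z)=a_tZ/(1-a_t+a_tZ)$. Its derivative is $f'(Z)=a_t(1-a_t)/(1-a_t+a_tZ)^2>0$ for $a_t\in(0,1)$, and $f(1)=a_t$. Together these give (a).

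For (b), the gauge $\E_{p^\ell_\theta}[g]=0$ and Jensen applied to the strictly convex map $x\mapsto e^{\gamma x}$ give $Z^\ell\ge e^{\gamma\E[g]}=1$. Equality holds iff $g$ is $p_\theta^\ell$-a.s.\ constant, and under the gauge that constant is $0$. Combining with (a) yields strict acceleration.

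For (b$'$), differentiate $\log Z^\ell(\gamma)$. The derivative is $\E_{\tilde p_\gamma}[g]$, the mean of $g$ under the tilted law. That mean is nondecreasing in $\gamma$, since its own derivative is a variance, and it equals $0$ at $\gamma=0$. So $\log Z^\ell$ is nondecreasing on $\gamma\ge0$, and (a) transfers this monotonicity to the reveal probability.

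\textbf{Step 3: part (c).} Invariance of the logit-space form is the softmax identity $\softmax(\etab+\gamma(\bm g+c\bm1))=\softmax(\etab+\gamma\bm g)$; the reveal weight $a_t$ does not involve $\bm g$ at all. For the post-hoc form, the shift sends $Z^\ell\mapsto e^{\gamma c}Z^\ell$. Then $f(e^{\gamma c}Z^\ell)\to1$ as $c\to+\infty$ and $\to0$ as $c\to-\infty$, because $\gamma>0$ and $Z^\ell\in(0,\infty)$.

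The only step needing care is Step~1. There I must fix precisely which normalization Eq.~\eqref{eq:posthoc} uses, in particular that the \textsc{mask} state receives zero tilt. If instead the tilt were applied only to the revealed branch and renormalized over all states, the formula is the same. If the gauge convention in the appendix differs, I would state it explicitly at that point. Everything after Step~1 is a one-line verification.
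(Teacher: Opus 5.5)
Your proposal is correct and follows the paper's proof essentially step for step. The reveal probabilities are read off the two normalizations; (a) comes from the sign of $f'$ and $f(1)=a_t$; (b) from Jensen under the gauge; (b$'$) from $\log Z^\ell(\gamma)$ having zero first derivative at $\gamma=0$ and a nonnegative (variance) second derivative; and (c) from $Z^\ell\mapsto e^{\gamma c}Z^\ell$ cancelling in Eq.~\eqref{eq:inside} but surviving in Eq.~\eqref{eq:leak}. Your reading of Eq.~\eqref{eq:posthoc}, with an untilted \textsc{mask} branch renormalized over $\V\cup\{\textsc{mask}\}$, is exactly the paper's, so the point you flag in Step~1 is already settled.
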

\begin{proof}
Both lines of Eq.~\eqref{eq:leak} are read off Eq.~\eqref{eq:posthoc} and Eq.~\eqref{eq:inside}.

\textbf{Part (a).} Write $f(z)=a_tz/(1-a_t+a_tz)$. Then
\begin{equation}
f'(z)=\frac{a_t(1-a_t)}{(1-a_t+a_tz)^2}\;>\;0
\qquad\text{for }a_t\in(0,1),
\end{equation}
and $f(1)=a_t$. Therefore, $f$ is strictly increasing and crosses $a_t$ at $z=1$, which is the claim.

\textbf{Part (b).} By Jensen's inequality and the gauge $\E_{p_\theta^\ell}[g]=0$,
\begin{equation}
Z^\ell=\E\big[e^{\gamma g}\big]\;\ge\;e^{\gamma\E[g]}=1,
\end{equation}
with equality if and only if $\gamma g$ is degenerate.

\textbf{\old{Part (b$'$).}} For the dependence on $\gamma$, let
$\Lambda(\gamma)=\log\E[e^{\gamma g}]$ be the cumulant generating function. Accordingly,
\begin{equation}
\Lambda(0)=0,\qquad \Lambda'(0)=\E[g]=0,\qquad \Lambda''(\gamma)=\mathrm{Var}_{\text{tilted}}(g)\ge0 .
\end{equation}
The third identity makes $\Lambda'$ nondecreasing, so $\Lambda'(\gamma)\ge\Lambda'(0)=0$ for $\gamma>0$, and $\Lambda$ is
nondecreasing there, strictly so unless $g$ is degenerate.

\textbf{Part (c).} The shift $\bm g^\ell\mapsto\bm g^\ell+c\bm 1$ multiplies every $e^{\gamma g_k}$, and
hence $Z^\ell$, by $e^{\gamma c}$. \old{In Eq.~\eqref{eq:inside} the two factors cancel,}
\begin{equation}
\old{q^\ell(k)\big|_{\bm g^\ell+c\bm 1}
=\frac{a_t\,p_\theta^\ell(k)\,e^{\gamma(g_k+c)}}{e^{\gamma c}Z^\ell}
=\frac{a_t\,p_\theta^\ell(k)\,e^{\gamma g_k}}{Z^\ell}
=q^\ell(k),}
\end{equation}
\old{so $\tilde a_t^{\ell,\,\mathrm{ours}}=a_t$ is unmoved. In Eq.~\eqref{eq:leak} the factor survives,}
\begin{equation}
\old{\begin{aligned}
\tilde a_t^{\ell,\,\mathrm{post}}\big|_{\bm g^\ell+c\bm 1}
&=\frac{a_t\,e^{\gamma c}Z^\ell}{1-a_t+a_t\,e^{\gamma c}Z^\ell},\\[4pt]
&\longrightarrow\;1\ \ (c\to+\infty),
\qquad
\longrightarrow\;0\ \ (c\to-\infty).
\end{aligned}}
\end{equation}

\old{Therefore, a constant that the guidance signal does not fix moves the post-hoc reveal schedule across
its whole range, while leaving the logit-space one at $a_t$.}
\end{proof}

\section{Implementation Details}
\label{app:impl}
All methods are reproduced from their official repositories under a common setup, following the protocol
of~\citet{wang2025drakes}. \old{We carry \rev{$N{=}20$} particles or candidates for every search method, which is the particle count
$N$ of \S\ref{sec:filter} for ours.} \old{Table~\ref{tab:hparams} lists every hyperparameter of \VGAS{} and its value on each domain.}
Gumbel--Rao lifts the protein \emph{proposal} at no extra denoiser cost ($0.64\!\to\!0.70$)\old{, and
we read $M$ off the proposal, where the effect is resolved, adopting $M{=}8$ on both sequence benchmarks
(Table~\ref{tab:grm})}.

\old{\begin{table}[h]
\centering\small
\setlength{\tabcolsep}{6pt}
\caption{\textbf{Hyperparameters of \VGAS.} The guidance configuration follows the one
GILC~\citep{dou2026gilc} releases for these benchmarks, with $\gamma$ selected on the held-out validation
split of \S\ref{app:impl}; GILC is run at the same values wherever it is compared against. The selector's
$\alpha$ and $r$-freq are shared across all three domains.}
\label{tab:hparams}
\begin{tabular}{@{}llrrr@{}}
\toprule
Symbol & Controls & DNA & Protein & QM9\\
\cmidrule(r){1-2}\cmidrule(l){3-5}
$T$ & denoising steps & $128$ & $50$ & \rev{$100$}\\
\rev{$N$} & particles carried by the selector & $20$ & $20$ & $20$\\
$\gamma$ & guidance scale on the logit correction & $11000$ & $1000$ & $10$\\
$n$ & Monte Carlo samples, $\VGASGR$ & $10$ & $5$ & $5$\\
$n$ & Monte Carlo samples, $\VGASRO$ & $20$ & $20$ & $20$\\
$M$ & Gumbel--Rao samples ($\VGASGR$) & $8$ & $8$ & $4$\\
$\tau$ & Gumbel--Softmax straight-through temperature & $1.0$ & $1.0$ & $1.0$\\
$\alpha$ & base selection temperature, $\alpha_t=\alpha\sigma_t$ & $0.5$ & $0.5$ & $0.5$\\
$r$-freq & steps between resampling events & $1$ & $1$ & $1$\\
\bottomrule
\end{tabular}
\end{table}}

\begin{table}[h]
\centering\footnotesize
\setlength{\tabcolsep}{5pt}
\caption{\rev{\textbf{Inference cost per denoising step}, in the convention
of~\citet[Tab.~2]{dou2026gilc}.}}
\label{tab:nfe}
\begin{tabular}{@{}llrc@{}}
\toprule
Method & Denoiser calls & Reward calls & Denoiser backward\\
\cmidrule(r){1-1}\cmidrule(lr){2-3}\cmidrule(l){4-4}
Pretrained (MDLM)~\citep{sahoo2024mdlm} & $1$ & $0$ & \xmark\\
DRAKES~\citep{wang2025drakes} & $1$ & $0$ & \xmark\\
\addlinespace[2pt]
Best-of-$N$~\citep{beirami2025bon} & $N$ & $0$ & \xmark\\
SMC~\citep{delmoral2004feynman} & $3N$ & $2N$ & \xmark\\
SVDD~\citep{li2024svdd} & $N{+}1$ & $N$ & \xmark\\
\addlinespace[2pt]
GILC-DB~\citep{dou2026gilc} & $1$ & $n$ & \xmark\\
GILC-PG~\citep{dou2026gilc} & $1$ & $n$ & \xmark\\
\addlinespace[2pt]
TDS~\citep{wu2023smc} & $4N$ & $3N$ & \cmark\\
SMC-DDM~\citep{pani2025smcddm} & $N$ & $Nn$ & \cmark\\
TreeG-G~\citep{guo2025treeg} & $N$ & $Nn$ & \cmark\\
\cmidrule(r){1-1}\cmidrule(lr){2-3}\cmidrule(l){4-4}
\rowcolor{green!10} $\VGASGR$ (ours) & $N$ & $N(n{+}1)$ & \xmark\\
\rowcolor{green!10} $\VGASRO$ (ours) & $N$ & $N(n{+}1)$ & \xmark\\
\bottomrule
\end{tabular}
\end{table}

\paragraph{Tuning protocol.} All \VGAS{} hyperparameters ($\gamma$, $\alpha$, $M$, and the resampling
frequency) were selected on a held-out validation split of generations
that is disjoint from the evaluation set whose numbers we report; the reported metrics are never used for
selection. Baselines are run at their official default configurations. This keeps the comparison free of
test-set tuning and avoids re-tuning any baseline downward.

Results are means over three seeds ($640$ generated sequences per seed on DNA). The reward oracles for
guidance and the held-out oracles for evaluation follow~\citet{wang2025drakes} \rev{on DNA and protein,
and~\citet{tfgflow} on QM9}.

\rev{TreeG releases no official protein-design pipeline, so on that benchmark the combination
comparison is against SMC-DDM only.}

\rev{The protein benchmark is the Megascale corpus of~\citet{tsuboyama2023} as curated
by~\citet{wang2025drakes}, whose wild-type structures are clustered with Foldseek and split by cluster,
leaving a test set of $12$ proteins. On QM9 the generator is the multimodal flow model
of~\citet{tfgflow}, which adapts Multiflow~\citep{campbell2024generative} to small molecules with the
equivariant architecture of~\citet{hoogeboom2022equivariant}. A molecule there is a three-dimensional
point cloud of heavy atoms carrying continuous coordinates and discrete atom types. The coordinates are
guided as in~\citet{tfgflow} and our logit correction is added on the atom-type logits.}

\old{\paragraph{DG.} DG~\citep{nisonoff2024} is run at its official default configuration like every other
baseline. On DNA it behaves as a guidance method: the reward rises from $0.18$ to $1.10$ while every
fidelity metric falls (Table~\ref{tab:dna}). On protein neither axis moves beyond one standard deviation
of the pretrained model (Table~\ref{tab:protein}).}

\paragraph{SMC-DDM and TreeG-G.} We run SMC-DDM~\citep{pani2025smcddm} and TreeG's gradient variant
TreeG-G~\citep{guo2025treeg} from their official repositories on the shared backbone, matching the
denoiser-call budget of the other search methods and otherwise following the setup above. TreeG-G releases no
official protein-design pipeline, so we omit it from the protein comparison in Table~\ref{tab:protein}.
No schedule shape is imposed on top of $\alpha_t=\alpha\sigma_t$: the selector's only free parameter is
the scalar $\alpha$, and \rev{the same value is used on all three domains}. \old{Table~\ref{tab:nfe} reports the per-step inference cost of every method underlying
Tables~\ref{tab:dna}--\ref{tab:protein}:} \rev{calls to the denoising network and to the reward oracle,
per step and per generated sequence, with a third column for whether the method backpropagates through the
\emph{denoiser}, which the call counts do not capture. There $n$ is each method's own Monte Carlo sample
count per guidance estimate. Best-of-$N$ scores only completed sequences, so its reward calls fall outside
the per-step count, and the extra reward call in our rows is the selector's value $V(\bm z_s)$, the previous
step's being cached. We report this budget rather than claim per-call optimality. The budgets are not equal
across rows. On DNA, $\VGASGR$ and SMC-DDM both run at $n{=}10$ while $\VGASRO$ runs at $n{=}20$, so
$\VGASRO$ issues about twice the reward calls of either; on QM9 the two paths run at $n{=}5$ and $n{=}20$, a
factor of three and a half. Where the non-differentiable path leads, it leads at the larger budget.}

\section{Metric Definitions}
\label{app:metrics}
This appendix gives the precise definition of every reward and fidelity metric used in
\S\ref{sec:experiments} and in the full ablation of \S\ref{app:fullres}.
Throughout, ``reward'' metrics measure how strongly generation is pushed toward the guidance objective,
while ``fidelity'' metrics measure whether the resulting designs stay realistic rather than becoming
adversarial artifacts that merely fool the reward oracle.

\smallskip\noindent\textbf{DNA enhancer design.}
\begin{itemize}[leftmargin=1.4em,itemsep=1pt,topsep=2pt]
\item \emph{Pred-med}, the median, over generated sequences, of the predicted HepG2 enhancer
activity from the reward oracle. Higher is better.
\item \emph{ATAC\%}, the fraction of designs judged accessible by a held-out
chromatin-accessibility (ATAC) oracle that is never used for guidance. Because it is independent of the
guidance reward, a rise in ATAC\% together with reward indicates genuine generalization rather than
reward hacking.
\item \emph{$3$-mer, JASPAR}, Pearson correlations between the designs and natural enhancers, in
$3$-mer nucleotide frequency and in JASPAR~\citep{jaspar2022} transcription-factor motif occurrence,
respectively; both measure how closely the sequence statistics match natural DNA.
\item \emph{App-LL}, the approximate log-likelihood of the designs under the frozen pretrained
generator, a measure of how natural the samples remain to the base model (less negative is better).
\rev{It is the evidence lower bound of the masked diffusion model of~\citet{sahoo2024mdlm}, computed with
the released implementation of~\citet{wang2025drakes}: the bound is discretized over $128$ timesteps with a
fresh masking pattern drawn at each, and a single Monte Carlo replicate. Following~\citet{wang2025drakes}
and~\citet{dou2026gilc}, it is reported on DNA only.}
\end{itemize}

\smallskip\noindent\textbf{Small-molecule property targeting (QM9).}
\begin{itemize}[leftmargin=1.4em,itemsep=1pt,topsep=2pt]
\item \emph{MAE}, the mean absolute error between the property value requested as the condition and the
value realized by the generated molecule, evaluated with the property predictor of~\citet{tfgflow}; reported per property, lower is better.
\rev{Heat capacity $C_v$ is in $\mathrm{cal}\,\mathrm{mol}^{-1}\mathrm{K}^{-1}$, the isotropic
polarizability $\alpha$ in $\mathrm{Bohr}^{3}$, the dipole moment $\mu$ in Debye, and the HOMO--LUMO gap
and the two orbital energies in meV.}
\item \emph{Validity}, the fraction of generated graphs that are chemically valid. It is a fidelity
constraint rather than a reported column: every method is compared at an operating point with validity above
$75\%$, so that MAE gains cannot come from emitting invalid molecules. \rev{Table~\ref{tab:validity} gives
the realized values.}
\end{itemize}

\smallskip\noindent\textbf{Protein sequence design.}
\begin{itemize}[leftmargin=1.4em,itemsep=1pt,topsep=2pt]
\item \emph{Pred-$\mathrm{ddG}$}, the median, over generated sequences, of the predicted folding
stability from the Megascale-trained oracle~\citep{tsuboyama2023}; higher is more stable.
\item \emph{\%($\mathrm{ddG}{>}0$)}, the fraction of stabilizing designs (positive predicted
$\mathrm{ddG}$).
\item \emph{scRMSD}, self-consistency RMSD: \rev{each generated sequence is re-folded with
ESMFold~\citep{lin2023evolutionary} and its backbone compared to the wild-type backbone,
as in~\citet{wang2025drakes}}; lower is better.
\item \emph{\%(scRMSD${<}2$)}, the fraction of designs that re-fold well (scRMSD $<2$\,\AA).
\item \emph{Success Rate}, the joint structure-and-stability success rate: the fraction of designs that are
simultaneously stable ($\mathrm{ddG}{>}0$) and well-folded (scRMSD $<2$\,\AA).
\end{itemize}

\old{\paragraph{Diversity.} Three further metrics measure how concentrated a generated set is, and are used
in Appendix~\ref{app:h-div}.}
\old{\begin{itemize}[leftmargin=1.2em,itemsep=1pt,topsep=2pt]
\item \emph{mean-Ham}, the mean pairwise Hamming distance over all pairs of generated sequences, normalized
by sequence length. It is the diversity metric reported by the benchmark of~\citet{wang2025drakes}.
\item \emph{NN}, the mean nearest-neighbour distance: for each generated sequence, the normalized Hamming
distance to its closest other generated sequence, averaged over the set. Unlike mean-Ham it is sensitive to
duplicates and near-duplicates.
\item \emph{uniq}, the uniqueness rate: the fraction of generated sequences that are distinct.
\end{itemize}}

\section{Full-Result Ablation Study}
\label{app:fullres}
The main-text ablation (Table~\old{\ref{tab:ladder-full}}) reports one metric per stage. Here we give the complete
ablation of our method: every stage of the build-up for both guided proposals, over all metrics. Shaded rows are our two final settings.

Two details are referenced from \S\ref{sec:analysis}.
\emph{(i) The selector's share is largest where the proposal is weakest:} on protein the $\SMCAT$
increment exceeds the SMC increment on both paths, and by the wider margin on the non-differentiable one. \emph{(ii) Recovery.} With recovery defined as
$1-(\text{final gap})/(\text{proposal gap})$, where the proposal gap is the difference between the
variance-reduced differentiable proposal and the plain policy-gradient one, and the final gap is the same
difference at the $+\SMCAT$ rung of each ladder, selection closes $51.4\%$ of the gap on DNA and \old{$54.9\%$} on protein,
matching despite the non-differentiable proposal being $4.4\times$ weaker on protein.

\begin{table}[h]
\centering\footnotesize
\setlength{\tabcolsep}{3.2pt}
\caption{\textbf{Full DNA ablation of \VGAS} (mean${\pm}$std, $3$ seeds, \rev{$N{=}20$}). Every stage of the build-up for both guided
proposals, all metrics; shaded are our two finals.}
\label{tab:dna-full}
\resizebox{\textwidth}{!}{%
\begin{tabular}{@{}lrrrrr@{}}
\toprule
Method & Pred-med\,$\uparrow$ & ATAC (\%)\,$\uparrow$ & 3-mer\,$\uparrow$ & JASPAR\,$\uparrow$ & App-LL\,$\uparrow$\\
\cmidrule(r){1-1}\cmidrule(l){2-6}
GILC-DB proposal & $6.214_{\pm0.046}$ & $83.80_{\pm1.66}$ & $0.788_{\pm0.024}$ & $0.898_{\pm0.002}$ & $-279.1_{\pm0.5}$\\
\quad$+$ Gumbel--Rao & $6.345_{\pm0.008}$ & $87.33_{\pm1.37}$ & $0.797_{\pm0.011}$ & $0.887_{\pm0.002}$ & $-278.8_{\pm0.3}$\\
\quad$+$ SMC & $6.845_{\pm0.149}$ & $97.87_{\pm0.59}$ & $0.871_{\pm0.023}$ & $0.892_{\pm0.010}$ & $-279.3_{\pm0.9}$\\
\rowcolor{green!10}\quad$+$ $\SMCAT$ ($\VGASGR$) & $7.332_{\pm0.177}$ & $99.10_{\pm1.56}$ & $0.875_{\pm0.025}$ & $0.882_{\pm0.010}$ & $-276.3_{\pm0.7}$\\
\midrule
GILC-PG proposal & $4.859_{\pm0.074}$ & $47.20_{\pm0.70}$ & $0.373_{\pm0.020}$ & $0.865_{\pm0.004}$ & $-276.3_{\pm0.4}$\\
\quad$+$ SMC & $6.326_{\pm0.096}$ & $86.00_{\pm1.04}$ & $0.908_{\pm0.055}$ & $0.936_{\pm0.008}$ & $-275.8_{\pm1.0}$\\
\quad$+$ $\SMCAT$ & $6.610_{\pm0.037}$ & $89.50_{\pm3.84}$ & $0.926_{\pm0.032}$ & $0.900_{\pm0.009}$ & \rev{$-274.7_{\pm0.2}$}\\
\rowcolor{green!10}\quad$+$ RO ($\VGASRO$) & $6.956_{\pm0.188}$ & $93.93_{\pm2.44}$ & $0.943_{\pm0.006}$ & $0.900_{\pm0.009}$ & $-275.0_{\pm0.3}$\\
\bottomrule
\end{tabular}}
\end{table}

\begin{table}[h]
\centering\footnotesize
\setlength{\tabcolsep}{3.2pt}
\caption{\textbf{Full protein ablation of \VGAS} (mean${\pm}$std, $3$ seeds, \rev{$N{=}20$}). Every stage of the build-up for both guided
proposals, all metrics; shaded are our two finals. $-$: metric not run for that row.}
\label{tab:protein-full}
\resizebox{\textwidth}{!}{%
\begin{tabular}{@{}lrrrrr@{}}
\toprule
Method & Pred-$\mathrm{ddG}$\,$\uparrow$ & \%($\mathrm{ddG}{>}0$)\,$\uparrow$ & scRMSD\,$\downarrow$ & \rev{\%(scRMSD${<}2$\,\AA)}\,$\uparrow$ & Success Rate (\%)\,$\uparrow$\\
\cmidrule(r){1-1}\cmidrule(l){2-6}
GILC-DB proposal & $0.644_{\pm0.018}$ & $77.00_{\pm1.29}$ & $0.893_{\pm0.002}$ & $89.17_{\pm0.36}$ & $67.51_{\pm1.35}$\\
\quad$+$ Gumbel--Rao & $0.703_{\pm0.022}$ & $79.77_{\pm1.18}$ & $0.902_{\pm0.012}$ & $89.30_{\pm0.26}$ & $70.05_{\pm1.41}$\\
\quad$+$ SMC & $0.846_{\pm0.108}$ & $83.36_{\pm2.23}$ & $0.901_{\pm0.024}$ & $90.36_{\pm1.18}$ & $74.50_{\pm1.33}$\\
\rowcolor{green!10}\quad$+$ $\SMCAT$ ($\VGASGR$) & $1.285_{\pm0.163}$ & $90.23_{\pm4.30}$ & $0.881_{\pm0.035}$ & $91.64_{\pm0.87}$ & $82.07_{\pm3.19}$\\
\midrule
GILC-PG proposal & $0.160_{\pm0.022}$ & $54.34_{\pm0.38}$ & $0.853_{\pm0.012}$ & $90.80_{\pm0.54}$ & $49.65_{\pm0.63}$\\
\quad$+$ SMC & $0.405_{\pm0.175}$ & $58.77_{\pm3.68}$ & $0.869_{\pm0.010}$ & $90.13_{\pm0.75}$ & $53.10_{\pm3.08}$\\
\quad$+$ $\SMCAT$ & $1.040_{\pm0.056}$ & $72.14_{\pm2.13}$ & $0.857_{\pm0.012}$ & $92.17_{\pm2.45}$ & $67.43_{\pm2.13}$\\
\rowcolor{green!10}\quad$+$ RO ($\VGASRO$) & $1.116_{\pm0.088}$ & $76.04_{\pm1.04}$ & $0.852_{\pm0.016}$ & $92.30_{\pm1.74}$ & $69.73_{\pm0.34}$\\
\bottomrule
\end{tabular}}
\end{table}

\rev{\begin{table}[h]
\centering
\setlength{\tabcolsep}{3.2pt}
\caption{\textbf{Realized chemical validity on QM9} (\%, mean${\pm}$std, $3$ seeds), one column per
targeted property. The pretrained model generates unconditionally, so its value does not vary with the
target. Every method clears the $75\%$ operating point of~\citet{tfgflow}.}
\label{tab:validity}
\resizebox{\textwidth}{!}{%
\begin{tabular}{@{}lrrrrrr@{}}
\toprule
Method & $C_v$ & $\alpha$ & $\mu$ & gap & HOMO & LUMO\\
\cmidrule(r){1-1}\cmidrule(l){2-7}
Pretrained (MDLM)~\citep{sahoo2024mdlm} & $88.61_{\pm1.37}$ & $88.61_{\pm1.37}$ & $88.61_{\pm1.37}$ & $88.61_{\pm1.37}$ & $88.61_{\pm1.37}$ & $88.61_{\pm1.37}$\\
SMC~\citep{delmoral2004feynman} & $88.90_{\pm1.97}$ & $90.53_{\pm1.95}$ & $84.24_{\pm0.75}$ & $86.98_{\pm0.85}$ & $87.21_{\pm0.68}$ & $87.34_{\pm0.65}$\\
SVDD~\citep{li2024svdd} & $88.93_{\pm1.05}$ & $88.67_{\pm1.06}$ & $85.32_{\pm1.13}$ & $87.27_{\pm1.03}$ & $87.24_{\pm0.94}$ & $87.27_{\pm0.76}$\\
\midrule
GILC-DB~\citep{dou2026gilc} & $75.12_{\pm0.41}$ & $79.59_{\pm1.70}$ & $75.46_{\pm1.77}$ & $87.76_{\pm1.09}$ & $88.51_{\pm0.84}$ & $89.06_{\pm0.80}$\\
GILC-PG~\citep{dou2026gilc} & $75.00_{\pm0.98}$ & $78.87_{\pm1.96}$ & $76.81_{\pm1.96}$ & $86.59_{\pm0.65}$ & $87.92_{\pm0.73}$ & $85.12_{\pm0.66}$\\
SMC-DDM~\citep{pani2025smcddm} & $88.41_{\pm1.10}$ & $92.06_{\pm0.30}$ & $76.66_{\pm0.71}$ & $96.22_{\pm0.41}$ & $95.83_{\pm0.65}$ & $95.96_{\pm0.54}$\\
TreeG-G~\citep{guo2025treeg} & $88.83_{\pm1.63}$ & $92.12_{\pm0.25}$ & $75.84_{\pm1.32}$ & $95.73_{\pm0.62}$ & $95.35_{\pm0.65}$ & $95.71_{\pm0.26}$\\
\midrule
\rowcolor{green!10} $\VGASGR$ & $78.17_{\pm1.09}$ & $79.33_{\pm2.50}$ & $78.23_{\pm0.48}$ & $87.21_{\pm0.20}$ & $88.05_{\pm1.39}$ & $85.29_{\pm1.33}$\\
\rowcolor{green!10} $\VGASRO$ & $77.18_{\pm1.19}$ & $78.22_{\pm0.26}$ & $78.33_{\pm0.83}$ & $82.98_{\pm1.07}$ & $87.53_{\pm2.69}$ & $83.37_{\pm0.83}$\\
\bottomrule
\end{tabular}}
\end{table}}

\rev{Every method clears the floor, so the errors of Table~\ref{tab:qm9} are read at comparable validity.
$\VGAS$ is not the most valid of them: on the gap and the two orbital energies SMC-DDM and TreeG-G stay
above $95\%$ where $\VGAS$ sits between $83\%$ and $88\%$, and their error on those same properties is the
larger by a wide margin. The guidance that lowers the error does cost validity relative to leaving the
kernel alone, and the floor is what keeps that cost bounded.}

\section{Additional Analysis}
\label{app:more-analysis}

\rih{Synthetic verification.} We first reproduce the $64{\times}64$ grid of~\citet{pani2025smcddm},
on which the reward-tilted target is available in closed form, and add our components one at a time
(Figure~\ref{fig:toy}): guidance alone recovers the target region but leaves mass on off-target modes, the
Gumbel--Rao proposal sharpens it, and the adaptive-temperature selector brings the samples closest to the
target, whereas SMC matches the reward only by collapsing onto a few points.

\begin{figure}[t]
\centering
\includegraphics[width=\textwidth]{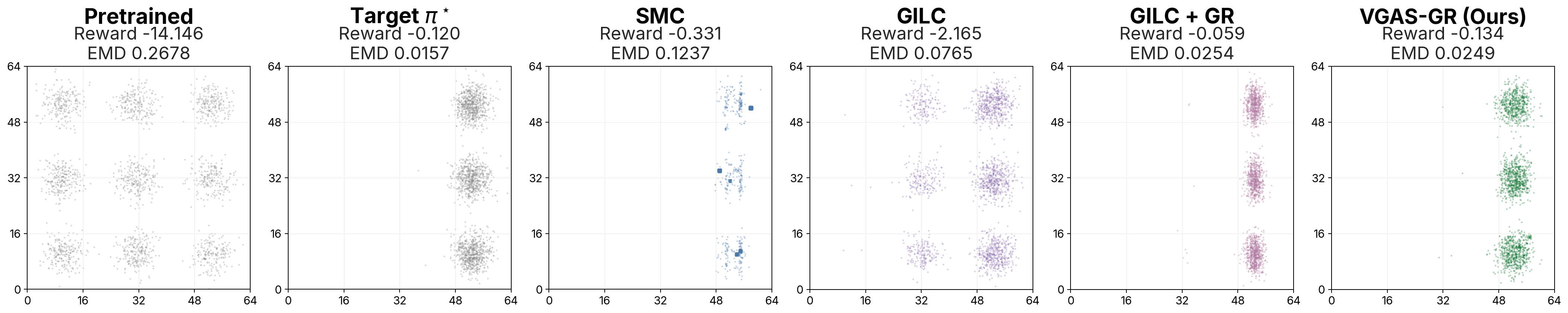}
\caption{\textbf{Steering on a synthetic grid.} Samples against the tilted target $p^\star$, with
reward and earth-mover distance to $p^\star$ above each panel.}
\label{fig:toy}
\end{figure}

This appendix reports the ablation-study details, the shape of the effective-temperature schedule and a
check of reward-scale invariance, the isolation of the two matched variance reducers, and the
reward--diversity frontier.

\subsection{Reward-Scale Invariance}
\label{app:h-checks}
The per-step spread $\sigma_t$ that $\SMCAT$ divides by \old{drifts by $3.7\times$ to $27.7\times$ along
the trajectory (Figure~\ref{fig:mechanism}a, Appendix~\ref{app:h-ess})}. One further check corroborates the
mechanism.
Because standardization divides by $\sigma_t$, $\SMCAT$ is invariant to
a reward rescaling $r\mapsto ar$ (the shift $b$ cancels in the softmax by Lemma~\ref{lem:baseline}).
Table~\ref{tab:scale} confirms the scale part empirically: a ninefold change in reward scale moves
Pred-med by $0.078$ under $\SMCAT$ versus $0.471$ under SMC, so $\alpha$ needs no retuning when the reward
is rescaled.

\begin{table}[h]
\centering\small
\caption{\textbf{Reward-scale invariance.} DNA Pred-med under a reward rescaling $r\mapsto ar$.
$\SMCAT$ barely moves ($|\Delta|{=}0.078$) whereas SMC shifts by $0.471$, so the base temperature
$\alpha$ needs no retuning when the reward is rescaled.}
\label{tab:scale}
\begin{tabular}{@{}lrr@{}}
\toprule
& $r{\times}\tfrac13$ & $r{\times}3$\\
\cmidrule(r){1-1}\cmidrule(l){2-3}
$\SMCAT$ & $7.268$ & $7.346$\\
SMC & $6.678$ & $7.149$\\
$|\Delta|$ & \multicolumn{2}{c}{$0.078$ vs $0.471$}\\
\bottomrule
\end{tabular}
\end{table}

\subsection{Isolating the Matched Variance Reducers}
\label{app:h-isolate}
\paragraph{RLOO placement (non-differentiable path).} \old{Table~\ref{tab:rloo} varies the placement of the
leave-one-out baseline over the two-by-two of proposal and selector. On the proposal, at the group-mean
selector, it gives a clear gain ($6.610\to6.956$, $+0.346$, $t{=}3.13$). On the selector it moves the result
by $+0.211$ in one row and $-0.296$ in the other, of comparable size and opposite sign, so the table
identifies no selector effect. This is the shape the leave-one-out case of
Lemma~\ref{lem:baseline} (restated in Appendix~\ref{app:proofs}) leads one to expect, the selector baseline
being inert up to $O(1/(N{-}1))$ in the log-weights, although the lemma's constant is in
log-weight rather than reward units and we do not convert it. We therefore place RLOO on the proposal
alone.}

\paragraph{Gumbel--Rao sample count (differentiable path).} \old{On the proposal the estimator improves
over the base row on both domains, by $0.161$ on DNA and $0.064$ on protein against seed spreads of
$0.008$--$0.046$ and $0.010$--$0.022$, and is saturated by $M{=}8$. Under $\SMCAT$ the five rows span less
than the spread of any one of them, so the sweep resolves no effect of $M$ on the final method
(Table~\ref{tab:grm}); we therefore set $M{=}8$ from the proposal and use it on both sequence benchmarks.}

\begin{table}[h]
\centering\small
\setlength{\tabcolsep}{6pt}
\caption{\textbf{Placement of the RLOO baseline} (DNA, PG path). DNA Pred-med with the baseline on the
proposal, the selector, both, or neither. The proposal placement carries a consistent gain; the selector
placement moves the result in opposite directions in the two rows.}
\label{tab:rloo}
\begin{tabular}{@{}lrr@{}}
\toprule
(DNA, PG) & sel $=$ group mean & sel $=$ leave-one-out\\
\cmidrule(r){1-1}\cmidrule(l){2-3}
prop $=$ group mean & $6.610$ & $6.821$\\
prop $=$ leave-one-out & $\mathbf{6.956}$ & $6.660$\\
\bottomrule
\end{tabular}
\end{table}

\begin{table}[h]
\centering\small
\setlength{\tabcolsep}{6pt}
\caption{\textbf{Gumbel--Rao sample count $M$} (backward-only; NFE fixed; mean${\pm}$std, $3$ seeds).
Pred-med (DNA) and $\mathrm{ddG}$ (protein), proposal alone and under $\SMCAT$. \old{On the proposal the
gain over the base estimator is resolved on both domains and saturates; under $\SMCAT$ the columns lie
within the seed spread, so $M$ is set from the proposal and carried over.}}
\label{tab:grm}
\begin{tabular}{@{}lrrrr@{}}
\toprule
$M$ & DNA prop & DNA $+\SMCAT$ & Protein prop & Protein $+\SMCAT$\\
\cmidrule(r){1-1}\cmidrule(l){2-5}
base & $6.214_{\pm0.046}$ & $7.227_{\pm0.108}$ & $0.644_{\pm0.018}$ & $1.243_{\pm0.070}$\\
$2$ & $6.328_{\pm0.016}$ & $7.265_{\pm0.062}$ & $0.664_{\pm0.017}$ & $1.235_{\pm0.050}$\\
$4$ & $6.335_{\pm0.016}$ & $7.132_{\pm0.165}$ & $0.684_{\pm0.020}$ & $1.264_{\pm0.128}$\\
$8$ & $6.345_{\pm0.008}$ & $7.332_{\pm0.177}$ & $0.703_{\pm0.022}$ & $1.285_{\pm0.163}$\\
$16$ & $\mathbf{6.375}_{\pm0.027}$ & $7.224_{\pm0.028}$ & $\mathbf{0.708}_{\pm0.010}$ & $1.278_{\pm0.048}$\\
\bottomrule
\end{tabular}
\end{table}

\subsection{Protein Structure versus Stability}
\label{app:h-protein}
On protein the adaptive selector lifts stability while holding structure. $\VGASGR$ attains the best
training-free $\mathrm{ddG}$ and surpasses the trained DRAKES on it (Table~\ref{tab:protein}), and also exceeds
DRAKES on the joint structure-and-stability success rate ($82.1\%$ vs.\ $78.6\%$). In the protein ladder
(Table~\ref{tab:protein-full}), the selector raises $\mathrm{ddG}$ from $0.85$ to $1.29$ while the joint
success rate rises ($74.5\%\!\to\!82.1\%$) and the well-folded rate holds ($90.4\%\!\to\!91.6\%$); on DNA the
same pattern holds (ATAC and $3$-mer both improve). \VGAS{} therefore attains the best training-free stability at no cost in structural success.

\subsection{Diversity Metric Blindness}
\label{app:h-div}
Table~\ref{tab:div} is the diversity evidence for \S\ref{sec:analysis}: the standard mean-Hamming metric
stays flat once selection is switched on, while the nearest-neighbour distance over the same runs falls by
more than two orders of magnitude, under either selector. Mean-Hamming therefore does not detect the
concentration that the frontier of Table~\ref{tab:frontier} then quantifies.

\begin{table}[h]
\centering\small
\caption{\textbf{Metric blindness.} DNA mean-Hamming distance is unchanged by selection, although the
nearest-neighbour (NN) distance over the same runs collapses.}
\label{tab:div}
\begin{tabular}{@{}lrr@{}}
\toprule
& mean-Ham\,$\uparrow$ & NN\,$\uparrow$\\
\cmidrule(r){1-1}\cmidrule(l){2-3}
Guided proposal (no selection) & $0.73$ & $0.62$\\
SMC & $0.72$ & $0.0044$\\
$\SMCAT$ & $0.72$ & $0.0043$\\
\bottomrule
\end{tabular}
\end{table}

\begin{table}[h]
\centering\small
\setlength{\tabcolsep}{7pt}
\caption{\textbf{Reward--diversity frontier at matched resampling frequency} (DNA, $3$ seeds; the
standard deviation is given for the reward column only). $\SMCAT$ against SMC on the same guided proposal;
lower $r$-freq resamples more often.
$\SMCAT$ attains the higher reward at every $r$-freq.}
\label{tab:frontier}
\begin{tabular}{@{}llrrr@{}}
\toprule
$r$-freq & selector & Pred-med\,$\uparrow$ & uniq\,$\uparrow$ & NN\,$\uparrow$\\
\cmidrule(r){1-2}\cmidrule(l){3-5}
$1$ & $\SMCAT$ & $\mathbf{7.332}_{\pm0.177}$ & $0.492$ & $0.0043$\\
 & SMC & $6.845_{\pm0.149}$ & $0.489$ & $0.0044$\\
\addlinespace[2pt]
$2$ & $\SMCAT$ & $\mathbf{6.742}_{\pm0.052}$ & $0.598$ & $0.0113$\\
 & SMC & $6.513_{\pm0.145}$ & $0.623$ & $0.0101$\\
\addlinespace[2pt]
$4$ & $\SMCAT$ & $\mathbf{6.582}_{\pm0.036}$ & $0.894$ & $0.0274$\\
 & SMC & $6.487_{\pm0.063}$ & $0.854$ & $0.0276$\\
\addlinespace[2pt]
$8$ & $\SMCAT$ & $\mathbf{6.544}_{\pm0.069}$ & $0.990$ & $\mathbf{0.0575}$\\
 & SMC & $6.349_{\pm0.058}$ & $\mathbf{0.999}$ & $0.0558$\\
\bottomrule
\end{tabular}
\end{table}

\subsection{\old{What a Fixed Temperature Costs}}
\label{app:h-ess}

\old{\paragraph{Protocol.} At every denoising step we record the whole array $\{\Delta_t^k\}_{k=1}^N$ as it
enters the resampling softmax, which needs no additional denoiser call. We run four selectors on each
benchmark, $\SMCAT$ and the fixed temperatures $\alpha\in\{0.1,0.5,2.0\}$, at three seeds. The three
benchmarks parameterize time differently, so we report everything against denoising progress, from $0$ at
the all-\textsc{mask} state to $1$ at $\bx_0$; without that normalization two of the domains drift in
opposite directions. Drift is the ratio of the mean of $\sigma_t$ over the first fifth of the trajectory to
its mean over the last fifth, which is insensitive to the noise in any single step. Curves are drawn with a
five-step moving average. Table~\ref{tab:drift} reports the drift and Figure~\ref{fig:ess} the per-step
effective sample size.}

\begin{table}[h]
\centering\small
\caption{\old{\textbf{Drift of the per-step spread.} Mean $\sigma_t$ over the first and the last fifth of
the trajectory, and their ratio. The main text quotes the fixed-$\alpha$ row, the selector being replaced.}}
\label{tab:drift}
\old{\begin{tabular}{@{}llrrr@{}}
\toprule
Domain & Selector & first fifth & last fifth & drift\\
\cmidrule(r){1-2}\cmidrule(l){3-5}
DNA & fixed $\alpha{=}0.5$ & $0.792$ & $0.213$ & $3.7\times$\\
 & $\SMCAT$ & $0.809$ & $0.187$ & $4.3\times$\\
\addlinespace[2pt]
Protein & fixed $\alpha{=}0.5$ & $0.156$ & $0.0173$ & $9.0\times$\\
 & $\SMCAT$ & $0.121$ & $0.0091$ & $13.3\times$\\
\addlinespace[2pt]
QM9 & fixed $\alpha{=}0.5$ & $12.58$ & $0.454$ & $27.7\times$\\
 & $\SMCAT$ & $12.58$ & $0.439$ & $28.6\times$\\
\bottomrule
\end{tabular}}
\end{table}

\begin{figure}[h]
\centering
\includegraphics[width=\textwidth]{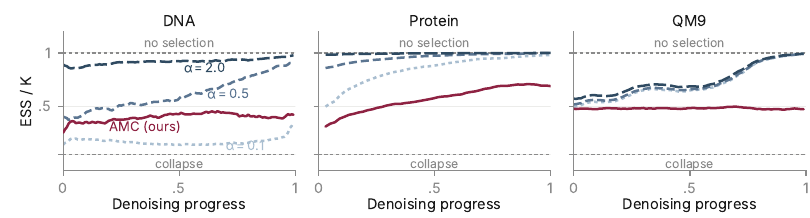}
\caption{\old{\textbf{Per-step effective sample size.} Four selectors on each benchmark. On QM9 the
fixed-$\alpha$ curves are recomputed from the $\SMCAT$ increments rather than run separately.}}
\label{fig:ess}
\end{figure}

\old{\paragraph{Reading the figure.} The effective sample size
$\mathrm{ESS}=(\sum_k\hat w^k)^2/\sum_k(\hat w^k)^2$ counts the particles that survive the resampling at
that step. At $N$ the weights are uniform and selection does nothing; at $1$ a single particle carries the
population. DNA shows the two failure modes at once. \rev{With $\alpha{=}0.1$ the effective sample size sits near
$0.16N$ for most of the trajectory, so the population is collapsed from the first steps, while with
$\alpha{=}2.0$ it stays above $0.92N$, so selection barely acts anywhere. Every fixed value rises toward the
end. On DNA the two larger ones finish above $0.93N$ and $\alpha{=}0.1$ finishes at $0.43N$, on protein all
three finish above $0.97N$ and on QM9 above $0.99N$, whereas $\SMCAT$ finishes at $0.37N$, $0.66N$ and
$0.47N$.} The temperature that is
correct at one end of the trajectory is therefore wrong at the other, and no single value is correct
throughout.}

\old{\paragraph{The shape of the increments moves too.} Standardization is not a fixed shape rescaled. The
skewness and excess kurtosis of $(\Delta_t^k-\mu_t)/\sigma_t$ drift along the trajectory as well, and on
protein and QM9 the excess kurtosis sits near $+4$, far from Gaussian. The selector holds its selection
pressure while dividing by $\sigma_t$ alone, and neither Lemma~\ref{lem:baseline} nor
Proposition~\ref{prop:at} assumes anything about the shape: the first uses only that the shift cancels, the
second only that $\sigma_t$ admits a deterministic limit.}

\old{\paragraph{What the late trajectory means.} A fixed $\alpha$ could be read as behaving correctly at the
end: once $\sigma_t$ is small there is little left to choose between, uniform weights would be the right
response, and dividing by $\sigma_t$ would amplify noise. The samples say otherwise. Were the late
selection acting on noise, $\SMCAT$ would not beat every fixed $\alpha$ in Figure~\ref{fig:alpha} nor lead
Table~\ref{tab:dna}.}

\old{\paragraph{On QM9.} The QM9 diagnostic runs at $\gamma{=}100$ rather than the $\gamma{=}10$ of
Table~\ref{tab:hparams}, and its fixed-$\alpha$ curves are recomputed from the $\SMCAT$ increments under the
fixed weighting rule rather than run separately, so those curves share one trajectory. The drift is a
property of $\{\Delta_t^k\}$ itself and the four selectors agree on it to within $27.6$--$28.6\times$, so
Table~\ref{tab:drift} is unaffected.}

\section{Related Work and Positioning}
\label{app:related}

\rev{\paragraph{Reading Table~\ref{tab:positioning}.} Prior work guides, selects, or combines the two.
\emph{Training-free} means the generative model is not retrained. The \emph{Gradient} column marks use of
the reward's gradient: through the denoiser for the guidance rows, through the sampling trajectory for
DRAKES; the gradient-free MCMC methods (SGDD, CSMC, IterRef) and the pure-search methods use none.
\emph{Adaptive sel.} marks a selection rule whose temperature is measured from the particle population at
run time; an ESS-triggered resampling schedule, which adapts when to resample rather than how sharply to
select, is not counted (Appendix~\ref{app:adaptive-smc}). Unlike the combined methods (TDS, TreeG,
SMC-DDM), our selection uses an adaptive rather than a pre-set rule, TreeG's hard top-$A$ beam or the SMC
family's pre-set temperature. Four entries carry a qualification. $^\dagger$DG does not retrain the
generator but fits a noise-conditional predictor, and we tabulate its gradient-based DG-TAG variant.
$^\ddagger$TFG-Flow uses the gradient on the continuous part only; its discrete part is guided by
importance sampling. $^\S$SMC-DDM is our shorthand for~\citet{pani2025smcddm}, after their earlier
preprint title. $^\P$TDS is formulated for continuous diffusion, and the discrete rows report the
adaptation obtained by post-hoc kernel tilting (Eq.~\eqref{eq:posthoc}).}

\begin{table}[h]
\centering
\caption{\textbf{Positioning of reward-steering methods.} \rev{Columns and markers are defined above.}}
\label{tab:positioning}
\resizebox{\textwidth}{!}{%
\begin{tabular}{@{}lcccccc@{}}
\toprule
Method & Training-free & Gradient & Search & Discrete & Combination & Adaptive sel.\\
\cmidrule(r){1-1}\cmidrule(l){2-7}
\multicolumn{7}{@{}l}{\emph{Fine-tuning / learning-based}}\\
DRAKES~\citep{wang2025drakes} & \xmark & \cmark & \xmark & \cmark & \xmark & \xmark \\
SDPO~\citep{han2026sdpo} & \xmark & \xmark & \xmark & \cmark & \xmark & \xmark \\
DDPP~\citep{rectorbrooks2025ddpp} & \xmark & \xmark & \xmark & \cmark & \xmark & \xmark \\
\addlinespace[2pt]
\multicolumn{7}{@{}l}{\emph{Gradient guidance}}\\
DPS~\citep{chung2023dps} & \cmark & \cmark & \xmark & \xmark & \xmark & \xmark \\
\old{DG-TAG}~\citep{nisonoff2024} & \old{\cmark$^\dagger$} & \cmark & \xmark & \cmark & \xmark & \xmark \\
TFG-Flow~\citep{tfgflow} & \cmark & \old{\cmark$^\ddagger$} & \xmark & \cmark & \xmark & \xmark \\
EntRGi~\citep{entrgi} & \cmark & \cmark & \xmark & \cmark & \xmark & \xmark \\
GILC~\citep{dou2026gilc} & \cmark & \cmark & \xmark & \cmark & \xmark & \xmark \\
\addlinespace[2pt]
\multicolumn{7}{@{}l}{\emph{Sampling-based guidance (gradient-free MCMC)}}\\
SGDD~\citep{chu2025sgdd} & \cmark & \xmark & \xmark & \cmark & \xmark & \xmark \\
CSMC~\citep{phunyaphibarn2026csmc} & \cmark & \xmark & \xmark & \cmark & \xmark & \xmark \\
IterRef~\citep{lee2025iterref} & \cmark & \xmark & \xmark & \cmark & \xmark & \xmark \\
\addlinespace[2pt]
\multicolumn{7}{@{}l}{\emph{Search / selection}}\\
Best-of-$N$~\citep{beirami2025bon} & \cmark & \xmark & \cmark & \cmark & \xmark & \xmark \\
SMC~\old{\citep{delmoral2004feynman,chopin2020introduction}} & \cmark & \xmark & \cmark & \cmark & \xmark & \xmark \\
SVDD~\citep{li2024svdd} & \cmark & \xmark & \cmark & \cmark & \xmark & \xmark \\
\addlinespace[2pt]
\multicolumn{7}{@{}l}{\emph{Guidance $+$ search}}\\
TDS~\citep{wu2023smc}\rev{$^\P$} & \cmark & \cmark & \cmark & \xmark & \cmark & \xmark \\
TreeG~\citep{guo2025treeg} & \cmark & \cmark & \cmark & \cmark & \cmark & \xmark \\
\old{SMC-DDM$^\S$}~\citep{pani2025smcddm} & \cmark & \cmark & \cmark & \cmark & \cmark & \xmark \\
\cmidrule(r){1-1}\cmidrule(l){2-7}
\textbf{Ours} & \cmark & \cmark & \cmark & \cmark & \cmark & \cmark \\
\bottomrule
\end{tabular}}
\end{table}

\paragraph{Gradient guidance.} The families differ in where the reward gradient is applied: at the
denoiser input~\citep{chung2023dps}, \rev{in the hidden states of the denoiser, with a discriminative head
fitted for the purpose~\citep{gruver2023protein}}, on the rate matrix~\citep{nisonoff2024,tfgflow}, or on
the clean-token logits~\citep{entrgi,dou2026gilc}. We adopt GILC's placement, which \old{bypasses the model Jacobian, stated
here as approximating it by the identity (\S\ref{app:guidance-math})}, and equip it with Gumbel--Rao. A
separate, gradient-free line steers by MCMC over clean or intermediate
states~\citep{chu2025sgdd,phunyaphibarn2026csmc,lee2025iterref}, admitting the reward through
accept/reject rather than a gradient. \old{Against GILC specifically, the difference is where the
standardization acts: GILC-PG already standardizes by a group standard deviation, but on the \emph{proposal},
across the Monte Carlo samples drawn at one state, whereas ours is on the \emph{selector}, across particles
and per step, and it is the drift of that spread that \S\ref{sec:search} acts on.}

\paragraph{Inference-time search.} The families differ in the selection rule.
SMC and the Twisted Diffusion Sampler~\citep{wu2023smc} resample a particle population by a value twist,
value-based decoding~\citep{li2024svdd} selects per step, TreeG~\citep{guo2025treeg} unifies candidate
proposal, value evaluation and selection under a tree search\rev{, and a general Feynman--Kac framework
covers the family across modalities, discrete text diffusion among
them~\citep{singhal2025general}}. TreeG is the closest prior work: its gradient-based variant TreeG-G establishes the
gradient-plus-search combination for discrete diffusion, and we run it from the official repository.
\old{What separates us from all of them is the selection rule itself.} The prior combinations select by a
rule \emph{set in advance}, hard top-$A$ for TreeG's beam search and a prescribed resampling temperature for
the SMC family, whereas we measure the temperature from the particles; our contribution is that rule, not a
new proposal.

\paragraph{Learning-based steering.} These differ from us on the training-free axis rather than on
\old{the gradient or the search axis}, amortizing the reward-tilted posterior into a trained
sampler~\citep{rectorbrooks2025ddpp} or retraining the generator~\citep{wang2025drakes,han2026sdpo}. We
compare against DRAKES, the standard fine-tuned reference on these benchmarks. The value-function view of
Diamond Maps~\citep{holderrieth2026diamond} connects these directions through the soft value.

\paragraph{Concurrent training-free SMC steerers.} These share our stance but not our selection rule,
steering through pre-set potentials~\citep{hasan2026dfkc}, trajectory-level
resampling~\citep{dang2025pgdlm}, or a corrected weighting of prior bootstrap
formulations~\citep{yadala2026nestedsmc}. \old{None of them measures the temperature from the particles,
which is the $\sigma_t$-standardized selection of \S\ref{sec:search} (Appendix~\ref{app:adaptive-smc}).}
Orthogonally, CDM~\citep{kim2026cdm} \emph{learns} an amortized twist, the same axis on which we position
against SMC-DDM's amortized proposal.

\subsection{Relation to Adaptive-Tempering SMC}
\label{app:adaptive-smc}
Temperature adaptation within SMC is well established: when a sampler anneals to a static target through a
ladder $\pi_t\propto\pi_0^{1-\lambda_t}\pi_{\mathrm{target}}^{\lambda_t}$, the exponents are commonly chosen
\emph{adaptively} to hold a target effective sample size~\citep{beskos2016convergence}. $\SMCAT$ differs on
three axes.
\emph{(i) Object.} We adapt the per-step \emph{resampling} temperature inside a reverse process we do not
design. \emph{(ii) Statistic.} Adaptive tempering solves iteratively for the exponent that hits a
user-chosen ESS level; $\SMCAT$ uses a closed-form, parameter-free statistic, the per-step spread $\sigma_t$
of the group advantage. \emph{(iii) Motivation.} Our diagnosis is specific to reward-guided diffusion:
$\sigma_t$ \old{drifts by $3.7\times$ to $27.7\times$} along the trajectory (Figure~\ref{fig:mechanism}a), a phenomenon absent
from static-target annealing. Table~\ref{tab:adaptive} makes the distinction concrete. \old{The
ESS-adaptive tolerance ladder of \citet{delmoral2012adaptive} is in the same spirit; we cite
\citet{beskos2016convergence} for the convergence of $\SMCAT$ (Proposition~\ref{prop:at}).}

\begin{table}[h]
\centering\footnotesize
\setlength{\tabcolsep}{4pt}
\renewcommand{\arraystretch}{1.15}
\caption{\rev{\textbf{$\SMCAT$ versus classical adaptive-tempering SMC.}}}
\label{tab:adaptive}
\begin{tabular}{@{}l p{0.31\textwidth} p{0.34\textwidth}@{}}
\toprule
& Adaptive-tempering SMC & $\SMCAT$ (ours)\\
\cmidrule(r){1-1}\cmidrule(l){2-3}
Adapted quantity & annealing exponent $\lambda_t$, on a static target & resampling temperature, inside the diffusion reverse process\\
Selection rule & solve $\lambda_t$ for a target \old{ESS}, iteratively & $\alpha_{\mathrm{eff}}=\alpha\sigma_t$ from the advantage spread, closed-form\\
\rev{Tuning} & \rev{a target ESS level, retuned per setting} & \rev{one scalar $\alpha$, the same value on all three domains}\\
Motivation & sample-health control & $\sigma_t$ drifts \old{$3.7$--$27.7\times$} along the trajectory\\
\bottomrule
\end{tabular}
\end{table}

\subsection{Anatomy of Guidance and Selection in Prior Samplers}
\label{app:anatomy}
Table~\ref{tab:anatomy} resolves what Table~\ref{tab:positioning} leaves open: which gradient and which
search. The closest baselines are often described as sharing one guidance and differing only in selection,
\old{and neither half of that description holds}. The guidance signals are distinct objects, and all three
baselines differentiate through the denoiser at one backward pass per step, whereas
$\partial\etab/\partial\bm z_t\approx\bm I$ \old{removes the denoiser from ours}. The selection rules differ too,
and TreeG departs furthest, keeping the top-$A$ candidates by value with no importance weight, a beam search
rather than a particle filter. The two substantive differences, which derivative is taken and where the tilt
is inserted, are made precise in \S\ref{app:guidance-math}.

\begin{table}[t]
\centering
\setlength{\tabcolsep}{3.5pt}
\renewcommand{\arraystretch}{1.2}
\scriptsize
\caption{\textbf{Guidance versus selection in the closest samplers.} Every gradient signal is written in the
common form $\nabla_{(\cdot)}\hat r$ ($\hat r=\frac1n\sum_i r(\hat\bx_0^{(i)})$); methods differ only in the
variable of differentiation ($\bx_t$, the noisy state $\bm z_t$, or the clean logits $\etab$) and in where
the tilt is inserted. Ours is the only Jacobian-free entry and the only one with an adaptive resampling
temperature. \rev{The backward row counts passes through the denoiser only; every differentiable-reward
method, ours included, still backpropagates through the reward oracle.}}
\label{tab:anatomy}
\begin{tabular}{@{}>{\raggedright\arraybackslash}p{2.0cm}>{\raggedright\arraybackslash}p{2.55cm}>{\raggedright\arraybackslash}p{2.75cm}>{\raggedright\arraybackslash}p{2.75cm}>{\raggedright\arraybackslash}p{2.85cm}@{}}
\toprule
\textbf{Axis} & \textbf{TDS} & \textbf{SMC-DDM} & \textbf{TreeG-G} & \textbf{Ours} \\
& \citep{wu2023smc} & \citep{pani2025smcddm} & \citep{guo2025treeg} & (\S\ref{sec:proposal}--\ref{sec:search}) \\
\midrule
State space & continuous (Gaussian) & discrete (masked) & discrete (flow / masked) & discrete (masked) \\
\cmidrule{1-5}
Guidance signal & $\nabla_{\bx_t}\hat r$ (continuous score at $\hat\bx_0$) & $\nabla_{\bm z_t}\hat r$ (ST-Gumbel relaxation) & $\nabla_{\bm z_t}\hat r$ (ST-Gumbel relaxation) & $\nabla_{\etab}\hat r$ (clean logits; $\partial\etab/\partial\bm z_t\!\approx\!\bm I$) \\
\cmidrule{1-5}
Model Jacobian & \cmark & \cmark & \cmark & \xmark \\
\cmidrule{1-5}
\rev{Denoiser backward passes / step} & 1 & 1 & 1 & 0 \\
\cmidrule{1-5}
Reveal factorized over positions & \cmark & \cmark & \cmark & \cmark \\
\cmidrule{1-5}
Selection & SMC (systematic), pre-set twist & SMC (multinomial, every step), pre-set $\lambda_t$ schedule & top-$A$ by value (their default; a value-proportional option exists), no temp. & SMC (multinomial), \textbf{adaptive temp.}\ ($\SMCAT$) \\
\cmidrule{1-5}
Proposal correction $\log(p_\theta/q)$ & \cmark\ (post-hoc $e^{g}$; drops $Z$) & \cmark & \xmark & \xmark\ by design (Lemma~\ref{lem:target}) \\
\bottomrule
\end{tabular}
\end{table}

\subsection{Explicit Comparison of the Guidance Signals: \old{TDS,} SMC-DDM, TreeG-G, and Ours}
\label{app:guidance-math}
Here we state the guidance signals as formulas, so that the two substantive differences, which derivative
is taken and where the tilt is inserted, can be verified line by line. Throughout, $\etab=\etab(\bm z_t)$ are
the clean logits emitted by the denoiser at $\bm z_t$, $p_\theta^\ell=\softmax(\etab^\ell)$ is the
per-position clean prediction, $a_t$ is the schedule's reveal probability from Eq.~\eqref{eq:revkernel}, and
we work at a single masked position $\ell$.

\paragraph{(i) Which derivative.}
All three gradient methods target the sensitivity of the multi-sample reward
$\hat r(\bm z_t)=\frac1n\sum_i r(\bx_0^{(i)})$, $\bx_0^{(i)}\sim p_\theta(\cdot\mid \bm z_t)$, routing the
categorical draw through the same Gumbel--Softmax relaxation, and differ only in where the chain rule is
stopped. The decomposition is GILC's~\citep[\S3.3]{dou2026gilc}, which names the last factor the model
Jacobian; SMC-DDM keeps it, tilting the kernel from the state~\citep[Eq.~8 and App.~C.2]{pani2025smcddm},
\begin{equation}
\label{eq:smcddm-chain}
\nabla_{\bm z_t}\hat r\;\approx\;\frac1n\sum_{i=1}^{n}
\underbrace{\frac{\partial r(\bx_0^{(i)})}{\partial \bx_0^{(i)}}}_{\text{reward backward}}\,
\underbrace{\frac{\partial \bx_0^{(i)}}{\partial \etab}}_{\text{ST-Gumbel}}\,
\underbrace{\frac{\partial \etab}{\partial \bm z_t}}_{\textbf{denoiser backward}},
\end{equation}
and TreeG-G \citep[Modules~3--4]{guo2025treeg} forms exactly the same three-factor product, then contracts it
against $(\bm z_t^{\backslash\ell}(k)-\bm z_t)$ to obtain a per-token score
$g^{(\ell)}_k=(\bm z_t^{\backslash\ell}(k)-\bm z_t)^\top\nabla_{\bm z_t}\hat r$. Ours keeps the first two
factors and replaces the third by the identity,
\begin{equation}
\label{eq:ours-chain}
\bm g^\ell=\frac1n\sum_{i=1}^{n}
\frac{\partial r(\bx_0^{(i)})}{\partial \bx_0^{(i)}}\frac{\partial \bx_0^{(i)}}{\partial \etab^\ell}
\;=\;\Big[\nabla_{\bm z_t}\hat r\,\Big]_{\ \partial\etab/\partial\bm z_t\;\to\;\bm I},
\end{equation}
which is Eq.~\eqref{eq:direction}. Ours is therefore Eq.~\eqref{eq:smcddm-chain} truncated after two
factors, the ``Jacobian-free'' row of Table~\ref{tab:anatomy}, and \old{two things follow from the
truncation.

First, the discarded factor is the only one requiring a backward pass through the denoiser. Second, the
surviving product lives in logit space, $\bm g^\ell\in\R^{|\V|}$, whereas Eq.~\eqref{eq:smcddm-chain} lives
in state space. Since the reveal is parameterized by $\etab$, a logit-space correction can be added to
$\etab$ and reused verbatim, while a state-space one must be applied externally to the transition kernel.

The truncation is an approximation, exact only for an identity denoiser, and it changes the quantity
estimated: it gives the direction in which the clean logits should move rather than the direction in which
the noisy state should move, and the former is the one the reveal can use as it stands. Its cost in reward
and fidelity is measured in \S\ref{sec:experiments}.}

\paragraph{(ii) Where the tilt is inserted, and its effect on the schedule.}
At a masked position the base kernel Eq.~\eqref{eq:revkernel} puts mass $a_t\,p_\theta^\ell(k)$ on each
token $k\in\V$ and mass $1-a_t$ on staying masked. Every method tilts this by $e^{\gamma g_k}$ and they
differ only in whether the $\textsc{mask}$ branch is tilted with it. Write
$Z^\ell=\E_{p_\theta^\ell}[e^{\gamma g}]$. The post-hoc form of TDS \citep{wu2023smc} and SMC-DDM
\citep[Eq.~8]{pani2025smcddm} multiplies the tilt onto the kernel after the reveal probability is fixed, so
it renormalizes over $\V\cup\{\textsc{mask}\}$:
\begin{equation}
\label{eq:posthoc}
q_{\mathrm{post}}^\ell(k)=\frac{a_t\,p_\theta^\ell(k)\,e^{\gamma g_k}}{1-a_t+a_t Z^\ell},
\qquad
q_{\mathrm{post}}^\ell(\textsc{mask})=\frac{1-a_t}{1-a_t+a_t Z^\ell}.
\end{equation}
Ours Eq.~\eqref{eq:direction} inserts the tilt inside the softmax that defines the clean prediction and
leaves the $\textsc{mask}$ branch untouched:
\begin{equation}
\label{eq:inside}
q^\ell(k)=a_t\,\softmax(\etab^\ell+\gamma\bm g^\ell)_k=\frac{a_t\,p_\theta^\ell(k)\,e^{\gamma g_k}}{Z^\ell},
\qquad
q^\ell(\textsc{mask})=1-a_t,
\end{equation}
where the second equality is the elementary identity
$\softmax(\etab^\ell+\gamma\bm g^\ell)_k=p_\theta^\ell(k)e^{\gamma g_k}/Z^\ell$. Comparing
Eq.~\eqref{eq:posthoc} and \eqref{eq:inside}: the numerators are identical. Both are the same exponential
tilt of the same clean prediction, so both belong to a single twist family. The entire
difference is the denominator, whether the normalizer $Z^\ell$ is retained on the revealed branch alone, or
distributed across the revealed and masked branches together. Proposition~\ref{prop:leak}, stated and
proved in Appendix~\ref{app:proofs}, turns that difference into a statement about the reveal probability
$\tilde a_t$ each form actually induces.

\old{Part~(a) of that proposition makes $\tilde a_t^{\ell,\,\mathrm{post}}$ strictly increasing in
$Z^\ell$, crossing $a_t$ at $Z^\ell=1$, so the direction of the leakage is decided by whether $Z^\ell$
exceeds one. Part~(b), that $Z^\ell\ge1$ under the gauge $\E_{p_\theta^\ell}[g]=0$, settles that question:
the leakage is systematic and one-signed, and the post-hoc form reveals earlier than the schedule
intends. This matters because the masked-diffusion ELBO and
the $T\to\infty$ consistency of the reverse process are stated for $a_t$, so replacing it by
$\tilde a_t>a_t$ commits tokens earlier than intended, exactly when guidance is strongest and the clean
prediction least reliable.

Part~(c), the invariance of Eq.~\eqref{eq:inside} to $\bm g^\ell\mapsto\bm g^\ell+c\bm1$, is the sharpest,
because that constant is arbitrary: the softmax is unchanged by $\etab\mapsto\etab+c\bm1$, so a correction
written in logit coordinates carries no canonical choice of it, and \rev{the gauge shift
$\hat{\bm g}=\bm g-c\bm 1$ fixes it only by fiat}. Under the post-hoc form that convention
moves the unmasking schedule, all the way to ``reveal everything'' or ``reveal nothing''; under
Eq.~\eqref{eq:inside} it is invisible.}

\noindent \old{Both parts describe a discrepancy that can be repaired downstream rather than avoided.}
TDS and SMC-DDM carry the $\log(p_\theta/q)$ correction that repairs \old{the leakage}, but the repair is
only as good as the particle count and is unavailable under rules that do not reweight, including TreeG's
beam search and vanilla SVDD. Eq.~\eqref{eq:inside} removes the discrepancy at the proposal level instead.

\paragraph{(iii) Is putting the correction in the logits ours alone?}
Not as an idea, and we do not claim it: the logit-space placement is GILC's \citep{dou2026gilc}, restated
in Eq.~\eqref{eq:direction}. The narrower, checkable claim is that among the training-free
gradient-plus-search samplers of Table~\ref{tab:anatomy}, none places the correction there. TDS tilts a
Gaussian transition; SMC-DDM tilts $p_\theta(x_{t-1}\mid x_t)$ from outside
\citep[Eq.~8]{pani2025smcddm}; TreeG-G adds its per-token score to candidate generation and selects by beam
search \citep[Modules~3--4]{guo2025treeg}. Proposition~\ref{prop:leak} is the consequence of that placement,
not of the tilt, which is shared. Our own contribution sits elsewhere: $\bm g^\ell$ is a per-position object,
so Eq.~\eqref{eq:inside} remains a product over positions, and we act through selection instead
(\S\ref{sec:search}).

\section{Sampler and Selector Algorithms}
\label{app:selectors}
This appendix gives the single-trajectory \textsc{Guided sampler} loop referenced in \S\ref{sec:proposal}
(Algorithm~\ref{alg:gilc}) and our selector $\SMCAT$ (Algorithm~\ref{alg:smc}); both call the guided reveal of
\S\ref{sec:proposal} in place of the base model's unmasking, with the correction $\bm g$ supplied by either
reduced-variance estimator (Algorithms~\ref{alg:corr-rao}--\ref{alg:corr-pgro}). For $\SMCAT$ the
single line that departs from vanilla SMC (the adaptive-temperature weight) is flagged; resampling is
multinomial.

\begin{algorithm}[H]
\caption{\textsc{Guided sampler} (single trajectory)}
\label{alg:gilc}
\KwIn{denoiser $p_\theta$; reward $r$; guidance scale $\gamma$; MC size $n$; steps $T$}
$\bm z_T = $ all-\textsc{mask}\;
\For{$t=T,\dots,1$}{
  $\etab = \mathrm{denoiser}(\bm z_t)$,\ \ $\bm g = \textsc{Guidance}(\etab, r, n)$ (Alg.~\ref{alg:corr-rao}/\ref{alg:corr-pgro})\;
  reveal masked $\ell$ from $(1{-}a_t)\delta_{\textsc{mask}} + a_t\softmax(\etab^\ell{+}\gamma\bm g^\ell)$ by Eq.~\eqref{eq:direction}\;
  keep revealed frozen;\ \ $\bm z_t = \bm z_s$\;
}
\KwOut{$\bm z_0$}
\end{algorithm}

\begin{algorithm}[H]
\caption{$\SMCAT$}
\label{alg:smc}
\KwIn{particles $N$; reward $r$; guidance scale $\gamma$; MC size $n$; selection temperature $\alpha$; variance floor $\epsilon$; steps $T$}
\rev{$\bm z_T^k\gets$ all-\textsc{mask};\ $\etab^k\gets\mathrm{denoiser}(\bm z_T^k)$;\ $V^k\gets r(\argmax\etab^k)$, $k=1,\dots,N$}\;
\For{$t=T,\dots,1$}{
  \For{$k=1,\dots,N$}{
    \rev{$\bm g\gets\textsc{Guidance-GR}(\etab^k,\bm z_t^k,r,n)$}\tcp*{\rev{GR proposal; $\etab^k$ cached}}
    reveal $\bm z_s^k$ from $(1{-}a_t)\delta_{\textsc{mask}}+a_t\softmax(\rev{\etab^k}+\gamma\bm g)$ by Eq.~\eqref{eq:direction}\;
    \rev{$\etab_s^k\gets\mathrm{denoiser}(\bm z_s^k)$;\ $V_s^k\gets r(\argmax\etab_s^k)$;\ $\Delta^k\gets V_s^k-V^k$}\tcp*{Feynman--Kac increment}
  }
  $\mu\gets\mathrm{mean}(\{\Delta^k\})$,\ $\sigma\gets\max\!\big(\mathrm{std}(\{\Delta^k\}),\epsilon\big)$;\ $\hat w^k\propto\exp\!\big((\Delta^k-\mu)/(\alpha\sigma)\big)$;\ resample \rev{$\{(\bm z_s^k,\etab_s^k,V_s^k)\}\sim\hat w$}\tcp*{adaptive temperature; vanilla SMC uses a prescribed $\alpha$}
  $\bm z_t^k\gets\bm z_s^k$;\ \rev{$\etab^k\gets\etab_s^k$;\ $V^k\gets V_s^k$}\tcp*{\rev{one denoiser call per particle per step}}
}
\KwOut{$\bm z_0^{1},\dots,\bm z_0^{N}$}
\end{algorithm}

\subsection{Closed Form of the Importance Correction}
\label{app:correction}
Lemma~\ref{lem:target} states that adding $\log(p_\theta/q_\gamma)$ to the log-weights restores
$p^\star$ for every $\gamma$. Both kernels leave the masked/unmasked split untouched, the guided reveal
renormalizes inside the vocabulary and multiplies by the same schedule factor, so the ratio is
supported on the positions revealed at that step. For the product-form proposal, writing
\rev{$\hat{\bm g}=\bm g-c\bm 1$ for an arbitrary gauge constant $c$} and $z_s^\ell=k$,
\begin{equation}
\label{eq:iscorr}
\log\frac{p_\theta(\bm z_s\mid\bm z_t)}{q_\gamma(\bm z_s\mid\bm z_t)}
=\sum_{\ell\,\in\,\mathrm{revealed}}\Big(-\gamma\hat g^\ell_k
+\mathrm{lse}\big(\etab^\ell+\gamma\hat{\bm g}^\ell\big)-\mathrm{lse}\big(\etab^\ell\big)\Big).
\end{equation}
Summing Eq.~\eqref{eq:iscorr} over the trajectory and negating gives the trajectory log-twist
$\bar\Phi_\gamma$ of Appendix~\ref{app:proofs}, whose terminal counterpart is $\Phi_\gamma$ of
Eq.~\eqref{eq:phi}. Every quantity in Eq.~\eqref{eq:iscorr} is already materialized by the guided reveal, so
the correction costs no additional forward or backward pass. Under $\SMCAT$ it must be added to the log-weights after the
standardization of $\{\Delta^k\}$, since it carries an absolute scale that the standardization would
otherwise destroy.

\section{Base Proposal Estimators}
\label{app:pg}
\S\ref{sec:proposal} presents the two reduced-variance proposals we run, \textsc{Guidance-GR} and
\textsc{Guidance-RO}. This appendix states the two base estimators they reduce: the differentiable
\textsc{Guidance-DB} and the non-differentiable \textsc{Guidance-PG}. Both share the mean
$\nabla_{\etab}\E[r]$ and are interchangeable in Algorithm~\ref{alg:gilc}.

\paragraph{Differentiable base (\textsc{Guidance-DB}).}
\textsc{Guidance-DB} (Alg.~\ref{alg:corr-db}), the estimator GILC uses in practice, differentiates the
reward through the ST-GS path Eq.~\eqref{eq:st}: draw $n$ Gumbel--Softmax samples, evaluate the reward on the
hard one-hot, and average the straight-through gradients. Evaluating its ST-GS Jacobian at a single Gumbel
draw per sample makes it high-variance; the Gumbel--Rao reduction of \S\ref{sec:proposal} replaces that Jacobian
by an $M$-sample conditional average without an extra pass. \rev{Substituting the softmax Jacobian for the
derivative of the argmax leaves the ST-GS path biased, and conditioning on $i^\star$ preserves that mean:
Gumbel--Rao removes the variance of the draw, not the bias of the relaxation. Two implementation points
carry into every estimator here. The conditional perturbations $\bm\zeta^{(m)}$ of
Eq.~\eqref{eq:condgumbel} depend on $\etab$ through $T$ and $v_k$, so they are detached from the
autograd graph before the reward is differentiated; leaving them attached backpropagates through the
sampler itself and corrupts the Jacobian. And the candidate $\hat\bx$ is assembled from the current state:
positions already revealed in $\bm z_t$ are held at their committed tokens and only the masked positions
receive a sample, so the absorbing structure of Eq.~\eqref{eq:revkernel} is preserved.} The base estimator is the ``base'' row
of Table~\ref{tab:grm}. The
conditional $p(\bm\zeta\mid\argmax{=}i^\star)$ used there is drawn by the truncated-Gumbel
reparameterization of Eq.~\eqref{eq:condgumbel}, where $T$ is the maximum $v_{i^\star}$.

\begin{algorithm}[t]
\caption{\textsc{Guidance-DB} (differentiable base estimator)}
\label{alg:corr-db}
\KwIn{clean logits $\etab$; \rev{state $\bm z_t$}; reward $r$; sample size $n$; Gumbel temp $\tau$}
\For{$i=1,\dots,n$}{
  draw $\bm\zeta^{(i)}\!\sim\!\mathrm{Gumbel}(0,1)$ i.i.d.\;
  $\hat\bx_{\mathrm{soft}}^{(i)}\gets\softmax\!\big((\etab+\bm\zeta^{(i)})/\tau\big)$\;
  $\hat\bx^{(i)}\gets$ ST estimator of $\hat\bx_{\mathrm{soft}}^{(i)}$ by Eq.~\eqref{eq:st}\;
  $R_i\gets r(\hat\bx^{(i)})$\;
}
$\bm g \gets \tfrac1n\sum_{i=1}^n \partial R_i/\partial\etab$\tcp*{$=\nabla_{\etab}\E[r]$}
\KwOut{logit correction $\bm g$}
\end{algorithm}

\paragraph{Non-differentiable base (\textsc{Guidance-PG}).}
When the reward is non-differentiable (a black box, or expensive to backprop through), the correction
can instead be formed by the score-function / policy-gradient (Reinforce) estimator. Sampling
hard tokens $\bx^{(i)}$ from the proxy clean distribution $\bm p=\softmax(\etab)$ and using
$\nabla_{\etab}\log p(\bx^{(i)})=\mathrm{onehot}(\bx^{(i)})-\bm p$, the reward-weighted score is an
unbiased estimate of $\nabla_{\etab}\E[r]$. Following the group-relative formulation of
GILC~\citep{dou2026gilc}, itself after group-relative policy optimization~\citep{shao2024grpo}, we
standardize the rewards into a \emph{group-relative advantage} before weighting (\textsc{Guidance-PG},
Alg.~\ref{alg:corr-pg}). This needs no relaxation, no Gumbel temperature and no model Jacobian, at the
price of higher variance than the differentiable path. The leave-one-out baseline
$A_i=R_i-\tfrac{1}{n-1}\sum_{j\neq i}R_j$ is the variance-reduction counterpart, for this proposal, of
Gumbel--Rao for the differentiable one: Table~\ref{tab:rloo} shows it outperforms the full-group baseline
on the proposal, while on the selector it moves the result in opposite directions in the two rows, as the
inertness of \old{Remark~\ref{rem:loo}} would lead one to expect, which is why our non-differentiable final
method places it on the proposal and standardization only on the selector.

\begin{algorithm}[t]
\caption{\textsc{Guidance-PG}}
\label{alg:corr-pg}
\KwIn{clean logits $\etab$; \rev{state $\bm z_t$}; reward $r$; sample size $n$}
$\bm p \gets \softmax(\etab)$\tcp*{proxy clean-token dist.\ $\hat\bx_\theta$}
\For{$i=1,\dots,n$}{
  draw $\bx^{(i)}\sim\bm p$;\ \ $\hat\bx^{(i)}\gets\mathrm{onehot}(\bx^{(i)})$\tcp*{hard; no relaxation}
  $R_i\gets r(\hat\bx^{(i)})$\;
}
$A_i \gets \big(R_i-\mean(\{R_j\})\big)\big/\std(\{R_j\})$\tcp*{group-relative advantage}
$\bm g \gets \tfrac1n\sum_{i=1}^n A_i\,\dfrac{\partial\langle\log\bm p,\,\hat\bx^{(i)}\rangle}{\partial\etab}$\tcp*{$=\tfrac1n\sum_i A_i(\hat\bx^{(i)}-\bm p)$; same as GILC}
\KwOut{logit correction $\bm g$}
\end{algorithm}

\end{document}